\documentclass[a4paper,11pt]{article}
\pdfoutput=1
\usepackage{a4wide}
\usepackage{amsmath,amsfonts,amssymb,amsthm}
\usepackage[colorlinks,citecolor=blue]{hyperref}
\usepackage{enumitem}
\usepackage{bbm}
\usepackage{natbib}
\usepackage{microtype}
\usepackage{graphicx}
\usepackage{extarrows}
\usepackage{pifont}
\usepackage{booktabs}

\setcitestyle{square}

\newtheorem{theorem}{Theorem}[section]
\newtheorem{lemma}[theorem]{Lemma}
\newtheorem{proposition}[theorem]{Proposition}
\newtheorem{corollary}[theorem]{Corollary}

\theoremstyle{definition}
\newtheorem{definition}[theorem]{Definition}

\newtheorem{remark}[theorem]{Remark}

\DeclareMathOperator*{\argmin}{argmin}

\numberwithin{equation}{section}
\numberwithin{table}{section}
\numberwithin{figure}{section}

\def \bE {\mathbb{E}}

\def \bN {\mathbb{N}}

\def \bR {\mathbb{R}}

\def \bZ {\mathbb{Z}}

\def \cA {\mathcal{A}}
\def \cB {\mathcal{B}}
\def \cC {\mathcal{C}}
\def \cD {\mathcal{D}}
\def \cE {\mathcal{E}}
\def \cF {\mathcal{F}}
\def \cG {\mathcal{G}}
\def \cH {\mathcal{H}}

\def \cJ {\mathcal{J}}
\def \cK {\mathcal{K}}
\def \cL {\mathcal{L}}
\def \cM {\mathcal{M}}
\def \cN {\mathcal{N}}
\def \cO {\mathcal{O}}
\def \cP {\mathcal{P}}

\def \cT {\mathcal{T}}

\def \cV {\mathcal{V}}
\def \cW {\mathcal{W}}
\def \cX {\mathcal{X}}

\def \NN {\mathcal{NN}}
\def \CNN {\mathcal{CNN}}
\def \MB {\mathcal{MB}}

\def \Ba {{\boldsymbol{a}}}
\def \Bb {{\boldsymbol{b}}}

\def \Be {{\boldsymbol{e}}}

\def \Bh {{\boldsymbol{h}}}
\def \Bi {{\boldsymbol{i}}}

\def \Bk {{\boldsymbol{k}}}

\def \Br {{\boldsymbol{r}}}
\def \Bs {{\boldsymbol{s}}}
\def \Bt {{\boldsymbol{t}}}
\def \Bu {{\boldsymbol{u}}}
\def \Bv {{\boldsymbol{v}}}
\def \Bw {{\boldsymbol{w}}}
\def \Bx {{\boldsymbol{x}}}
\def \By {{\boldsymbol{y}}}
\def \Bz {{\boldsymbol{z}}}

\def \Bll {{\boldsymbol{\ell}}}
\def \Bgamma {{\boldsymbol{\gamma}}}
\def \Bdelta {{\boldsymbol{\delta}}}

\def \Id {\,{\rm Id}\,}
\def \ind {\,{\rm ind}\,}

\def \sgn {\,{\rm sgn}\,}
\def \Pdim {\,{\rm Pdim}\,}
\def \Span {\,{\rm span}\,}
\def \deep {\,{\rm deep}\,}

\begin{document}
\title{Approximation and learning of anisotropic and mixed smooth functions by deep ReLU neural networks}
\author{
Yunfei Yang \thanks{School of Mathematics (Zhuhai) and Guangdong Province Key Laboratory of Computational Science, Sun Yat-sen University, Zhuhai, China. Corresponding author, E-mail: \href{mailto:yangyunfei@mail.sysu.edu.cn}{yangyunfei@mail.sysu.edu.cn}.}
\and
Jun Fan \thanks{Department of Mathematics, Hong Kong Baptist University, Kowloon, Hong Kong, China. E-mail: \href{mailto:junfan@hkbu.edu.hk}{junfan@hkbu.edu.hk}.}
}
\date{}
\maketitle

\begin{abstract}
This paper studies how efficiently deep ReLU neural networks can approximate and learn smooth functions. When the error is measured in $L^p([0,1]^d)$ norm and the approximator is a network with width $W$ and depth $L$, recent works have proven the supper approximation rate $\mathcal{O}((WL)^{-2s/d})$ for Besov space $\mathcal{B}^s_{q,r}([0,1]^d)$ under the Sobolev embedding condition $s/d>1/q-1/p$. In order to overcome the curse of dimensionality in this rate, we extent this result to anisotropic and mixed smooth function classes. We establish the approximation rate $\mathcal{O}((WL)^{-2\tilde{s}})$ for anisotropic Besov space $\mathcal{B}^{\boldsymbol{s}}_{q,r}([0,1]^d)$ with anisotropic smoothness $\boldsymbol{s}=(s_1,\dots,s_d)$ under the embedding condition $\tilde{s} > 1/q-1/p$, where the mean smoothness $\tilde{s} = (\sum_{i=1}^d s_i^{-1})^{-1}$. For mixed smooth Besov space $\mathcal{MB}^s_{q,r}([0,1]^d)$ with mixed smoothness $s>1/q-1/p$, we show that the approximation rate $\mathcal{O}((WL)^{-2s})$ holds up to logarithmic factors. Using these results, we also derive approximation bounds for the composition of anisotropic Besov functions. As an application, it is shown that deep ReLU neural networks can achieve minimax optimal rates up to logarithmic factors for a wide range of smooth function classes.

\medskip
\noindent \textbf{Keywords:} deep neural networks, adaptive approximation, curse of dimensionality, anisotropic smoothness, mixed smoothness
\end{abstract}

\section{Introduction}

Deep neural networks have achieved remarkable success in practical applications with high-dimensional
data, such as computer vision, speech recognition and natural language processing \citep{lecun2015deep,goodfellow2016deep}. This empirical success of deep learning suggests that deep neural networks can effectively solve high-dimensional machine learning problems due to their high flexibility and adaptability for representing and learning intrinsic structures of data. However, understanding which structural assumptions enable such practical performances of deep neural networks is still a central question in approximation and statistical learning theories.

The approximation capacity of neural networks has been extensively studied in the past decades. Early works showed that neural networks with one hidden layer are universal in the sense that they can approximate any continuous functions on any compact sets \citep{cybenko1989approximation,hornik1991approximation}. Due to the breakthrough of deep learning, approximation rates of deep neural networks on classical function classes such as H\"older, Sobolev and Besov spaces have been recently developed \citep{yarotsky2017error,yarotsky2018optimal,suzuki2019adaptivity,shen2020deep,shen2022optimal,lu2021deep,siegel2023optimal,yang2025optimal}. We remark that most recent works on approximation theory focus on the ReLU activation function, which is widely used in modern network architectures \citep{nair2010rectified}. These works gave estimations on the following $L^p$-approximation error of smooth function class $\cF$:
\begin{equation}\label{app error general}
\sup_{f\in \cF} \inf_{g\in \cG} \|f-g\|_{L^p([0,1]^d)},
\end{equation}
where $\cG$ is certain function class parameterized by deep ReLU neural networks. The seminal work of \citet{yarotsky2017error} showed that sparse network with $N$ parameters can achieve the rate $\cO(N/\log N)^{-s/d}$ for Sobolev space $\cF=\cW^{s,\infty}([0,1]^d)$. A series of papers by \citet{shen2020deep,shen2022optimal,lu2021deep} improved this result to the supper approximation rate $\cO((WL)^{-2s/d})$ up to logarithmic factors for fully-connected network $\cG=\NN(W,L)$ with width $W$ and depth $L$ (see (\ref{NN def}) below for precise definition). The recent works of \citet{siegel2023optimal} and \citet{yang2025optimal} extended the supper approximation rate to general Sobolev space $\cW^{s,q}([0,1]^d)$ and Besov space $\cB^s_{q,r}([0,1]^d)$ under the Sobolev embedding condition $s/d > 1/q-1/p$, which includes the nonlinear approximation regime $q<p$ \citep{devore1998nonlinear}. By using these approximation results, one can also show that deep neural networks can achieve the minimax optimal rate $\cO(n^{-2s/(2s+d)})$ up to logarithmic factors for learning Sobolev and Besov spaces with smoothness $s$ on $d$-dimensional space with $n$ training samples \citep{yang2025optimal}. Hence, we have a nearly complete picture for deep neural network approximation and learning theory of these function classes.

However, the approximation and learning rates for these function classes suffer from the curse of dimensionality, which cannot explain the practical success of deep learning on high-dimensional data. In order to overcome or weaken the curse of dimensionality, one often make certain structural assumptions on the data and target function, for example, the data distribution is low-dimensional \citep{nakada2020adaptive,chen2022nonparametric,jiao2023deep}, the target function has compositional structure \citep{schmidthieber2020nonparametric,kohler2021rate} or is anisotropic smooth \citep{suzuki2019adaptivity}, or is mixed smooth \citep{montanelli2019new,suzuki2021deep,yang2024optimal} or is of integral form such as Barron function \citep{barron1993universal,weinan2022barron,siegel2022sharp,parhi2023minimax,yang2025optimalshallow,yang2024nonparametric}. In this paper, we are mainly interested in the adaptivity of deep neural networks to anisotropic and mixed smoothness. For this direction, \citet{suzuki2019adaptivity,suzuki2021deep} showed that sparse network with $N$ parameters can achieve the approximation rate $\cO(N^{-\tilde{s}})$, where the mean smoothness $\tilde{s}$ is defined by (\ref{mean smoothness}) below, for anisotropic Besov space $\cB^\Bs_{q,r}([0,1]^d)$ with anisotropic smoothness $\Bs=(s_1,\dots,s_d)$, and the rate $\cO(N^{-s})$ up to logarithmic factors for mixed smooth Besov space $\MB^s_{q,r}([0,1]^d)$ (see Section \ref{sec: spaces} for the definition of these function spaces). These results generalized the analysis of \citet{yarotsky2017error} to anisotropic and mixed smooth functions. However, their analyses rely on the sparsity of the network and do not achieve the supper approximation rate. 

In this paper, we fill the above gap by establishing the supper approximation rate for general anisotropic and mixed smooth spaces. We give a framework to analyze the approximation capacity of deep neural networks by using adaptive multiscale anisotropic grid, which generalizes the idea of \citet{siegel2023optimal,yang2025optimal} from isotropic grid to the anisotropic case. Using this framework, we estimate the approximation error (\ref{app error general}) for a wide range of smooth function class $\cF$ when the approximator $\cG$ is parameterized by a fully-connected network $\NN(W,L)$ with width $W$ and depth $L$. We summarize our main approximation results as follows.
\begin{enumerate}[label=\textnormal{(\arabic*)}]
\item For anisotropic Besov space $\cB^\Bs_{q,r}([0,1]^d)$, Theorem \ref{app of anisotropic} establishes the supper approximation rate $\cO((WL)^{-2\tilde{s}})$ under the embedding condition $\tilde{s}>1/q-1/p$, where $\tilde{s}$ is mean smoothness defined by (\ref{mean smoothness}). Theorem \ref{anisotropic lower bound} shows that this rate is optimal up to logarithmic factors.

\item For mixed smooth Besov space $\MB^s_{q,r}([0,1]^d)$, Theorem \ref{app of mixed} establishes the supper approximation rate $\cO((WL)^{-2s})$ up to logarithmic factors under the embedding condition $s>1/q-1/p$. Theorem \ref{mixed lower bound} shows that this rate is optimal up to logarithmic factors.

\item For deep composition model of anisotropic Besov functions \citep{suzuki2021deep}, Theorem \ref{app of dcm} establishes the supper approximation rate, which is optimal up to logarithmic factors in some special cases as shown by Theorem \ref{dcm lower bound}. This result generalizes the analyses of \citet{schmidthieber2020nonparametric,kohler2021rate} for composition model of H\"older functions and improves the approximation rate of \citet{suzuki2021deep} for the same model.
\end{enumerate}
Using these approximation results, we are able to establish the learning rates of deep neural networks for the corresponding function classes. It is also known that these convergence rates are minimax optimal up to logarithmic factors in most cases.

The rest of this paper is organized as follows. We introduce anisotropic and mixed smooth Besov spaces in Section \ref{sec: spaces}. Our main approximation results for neural networks are given in Section \ref{sec: app}. In Section \ref{sec: learning}, we illustrate how to apply our approximation results to derive convergence rates for machine learning algorithms. The proofs of approximation and learning results are given in Sections \ref{sec: proofs of app} and \ref{sec: proofs of learning}, respectively.

We use the following notations throughout the paper. For any integers $i\le j$, we denote $[i:j]=\{i,i+1,\dots,j\}$. The maximum and minimum of $a,b\in \bR$ are denoted by $a \lor b$ and $a\land b$, respectively. Vectors are denoted by boldface letters. For vectors $\Ba=(a_1,\dots,a_d)$ and $\Bb=(b_1,\dots,b_d)$, the inequality $\Ba \le \Bb$ means $a_j\le b_j$ for all $j\in [1:d]$. For two quantities $A$ and $B$, $A \lesssim B$ (or $B \gtrsim A$) denotes the statement that $A\le CB$ for some constant $C>0$. We will also denote $A \asymp B$ when $A \lesssim B \lesssim A$. Finally, we remark that, unless otherwise specified, we will use $C$ to denote constants which may change from line to line. This convention is standard and convenient in analysis. The constants $C$ may depend on some other parameters and this dependence will be made clear from the context.

\section{Anisotropic and mixed smooth Besov spaces}\label{sec: spaces}

Let $\Omega \subseteq \bR^d$ be a bounded domain, which we will often take to be rectangles in the following. For $1\le q< \infty$, we denote by $L^q(\Omega)$ the set of functions $f$ whose $L^q$ norm on $\Omega$ is finite:
\[
\|f\|_{L^q (\Omega)}^q = \int_{\Omega} |f(\Bx)|^q d\Bx <\infty.
\]
When $q=\infty$, we have the modification $\|f\|_{L^\infty(\Omega)} = \sup_{\Bx\in \Omega} |f(\Bx)|$, since we mainly study continuous functions in the supremum norm.

For $k\in \bN_0$, $h\in \bR$ and $j\in [1:d]$, we define recursively the forward difference operator $\Delta_h^{k,j}$ of order $k$ with respect to the $j$-th coordinate by
\[
\Delta_h^{k,j} f(\Bx) := \Delta_h^{k-1,j} f(\Bx+h\Be_j) - \Delta_h^{k-1,j} f(\Bx), \quad \Delta_h^{0,j} f(\Bx) := f(\Bx),
\]
where $\Be_j$ is the $j$-th standard basis vector in $\bR^d$. It follows from the binomial theorem that 
\[
\Delta_h^{k,j} f(\Bx) = \sum_{i=0}^k (-1)^i \binom{k}{i} f(\Bx+ih\Be_j),
\]
which is well-defined for $f\in L^q(\Omega)$ on the set 
\[
\Omega(k,h,j):= \left\{\Bx\in \Omega: \Bx+kt\Be_j \in \Omega, \forall 0\le t\le h \right\}.
\] 
We note that $\Delta_{-h}^{k,j} f(\Bx) = (-1)^k \Delta_h^{k,j} f(\Bx-kh\Be_j)$ by definition. For $t\ge 0$, we define the $j$-th partial modulus of smoothness of order $k$ by 
\[
\omega_k^j(f,t,\Omega)_q := \sup_{0\le h\le t} \|\Delta_h^{k,j} f\|_{L^q(\Omega(k,h,j))}.
\]
It is well known that $\omega_k^j(f,t,\Omega)_q$ is non-decreasing on $t\in [0,\infty)$ and $\omega_k^j(f,\lambda t,\Omega)_q \le (\lambda+1)^k \omega_k^j(f,t,\Omega)_q$ for $\lambda>0$, see \citet[Section 2.7]{devore1993constructive} for instance. 

We define the anisotropic Besov spaces as follows. Let $\Bs = (s_1,\dots, s_d) \in (0,\infty)^d$ be an anisotropic smoothness vector. Let $1\le q,r\le \infty$ and $\Bk=(k_1,\dots,k_d)\in \bN^d$ with $k_j>s_j$ for $j=1,\dots,d$. We define the partial Besov semi-norm with respect to the $j$-th coordinate as 
\[
|f|_{\cB^{\Bs,j}_{q,r}(\Omega)} :=
\left( \int_0^\infty (t^{-s_j} \omega_{k_j}^j(f,t,\Omega)_q)^r \frac{dt}{t} \right)^{1/r},
\]
when $1\le r<\infty$ and 
\[
|f|_{\cB^{\Bs,j}_{q,\infty}(\Omega)} := \sup_{t>0} t^{-s_j} \omega_{k_j}^j(f,t,\Omega)_q.
\]
The anisotropic Besov space $\cB^\Bs_{q,r}(\Omega):= \{f\in L^q(\Omega): \|f\|_{\cB^\Bs_{q,r}(\Omega)}<\infty\}$ is defined through the norm
\[
\|f\|_{\cB^\Bs_{q,r}(\Omega)} := \|f\|_{L^q(\Omega)} + \sum_{j=1}^d |f|_{\cB^{\Bs,j}_{q,r}(\Omega)}.
\]
One can show that different choices of $k_j>s_j$ give equivalent norms by using standard estimates involving modulus of smoothness \citep{devore1993constructive}. Thus, we can simply choose $k_j=\lfloor s_j \rfloor +1$.

Next, we introduce the notion of mixed smoothness and the corresponding Besov spaces \citep{dung2018hyperbolic,triebel2019function}. For $\Bk=(k_1,\dots,k_d)\in \bN^d$, $\Bh=(h_1,\dots,h_d)\in \bR^d$ and $J \subseteq [1:d]$, the mixed difference operator $\Delta_\Bh^{\Bk,J}$ is defined through composition
\[
\Delta_\Bh^{\Bk,J} f(\Bx) = \left( \prod_{j\in J} \Delta_{h_j}^{k_j,j}\right) f(\Bx), \quad \Delta_\Bh^{\Bk,\emptyset} f(\Bx) := f(\Bx).
\]
One can check that the above definition is independent of the order of the composition and it is well-defined on the set 
\[
\Omega(\Bk,\Bh,J):= \left\{\Bx\in \Omega: (x_j+k_jt_j)_{j=1}^d \in \Omega, \forall \ t_j\in [0, h_j], j\in J \right\}.
\]
For $\Bt=(t_1,\dots,t_d)\in [0,\infty)^d$, the corresponding mixed modulus of smoothness is defined by
\[
\omega_\Bk^J(f,\Bt,\Omega)_q := \sup_{0\le h_j\le t_j, j\in J} \|\Delta_\Bh^{\Bk,J} f \|_{L^q(\Omega(\Bk,\Bh,J))}, \quad \omega_\Bk^\emptyset(f,\Bt,\Omega)_q := \|f\|_{L^q(\Omega)}.
\]
When $\Bk=(k,\dots,k)$ for some $k\in \bN$, we simplify the notations as $\Delta_\Bh^{k,J}$ and $\omega_k^J(f,\Bt,\Omega)_q$ for convenience. Note that $\omega_\Bk^J(f,\Bt,\Omega)_q$ is constant on $t_i$ for any $i\notin J$ and we have the equality
\[
\omega_\Bk^{\{j\}}(f,\Bt,\Omega)_q = \omega_{k_j}^j(f,t_j,\Omega)_q.
\]
Letting $\Bs = (s_1,\dots, s_d) \in (0,\infty)^d$ and $k_j=\lfloor s_j \rfloor +1$, we define the mixed smooth Besov semi-norm with respect to the coordinates in $J$ by 
\[
|f|_{\cB^{\Bs,J}_{q,r}(\Omega)} := 
\left( \int_{(0,\infty)^d} \left( \left(\prod_{j\in J}t_j^{-s_j}\right) \omega_\Bk^J(f,\Bt,\Omega)_q\right)^r \frac{d\Bt}{\prod_{j\in J}t_j} \right)^{1/r},
\]
when $1\le r<\infty$ and 
\[
|f|_{\cB^{\Bs,J}_{q,\infty}(\Omega)} := \sup_{\Bt\in (0,\infty)^d} \left(\prod_{j\in J}t_j^{-s_j}\right) \omega_\Bk^J(f,\Bt,\Omega)_q.
\]
For $J=\emptyset$, we use the modification $|f|_{\cB^{\Bs,\emptyset}_{q,r}(\Omega)} = \|f\|_{L^q(\Omega)}$. We can define the (anisotropic) mixed smooth Besov space $\MB^\Bs_{q,r}(\Omega):= \{f\in L^q(\Omega): \|f\|_{\MB^\Bs_{q,r}(\Omega)}<\infty\}$, which is also known as Besov space of dominating mixed smoothness, through the norm
\[
\|f\|_{\MB^\Bs_{q,r}(\Omega)} := \sum_{J\subseteq [1:d]} |f|_{\cB^{\Bs,J}_{q,r}(\Omega)}.
\]
Observing that the anisotropic Besov norm can also be expressed as
\[
\|f\|_{\cB^\Bs_{q,r}(\Omega)} = \sum_{J\subseteq [1:d], |J|\le 1} |f|_{\cB^{\Bs,J}_{q,r}(\Omega)},
\]
we have the natural embedding $\MB^\Bs_{q,r}(\Omega) \hookrightarrow \cB^\Bs_{q,r}(\Omega)$.

\begin{remark}\label{remark on functions}
Let $\Bs\in (0,\infty)^d$ and $1\le q,r\le \infty$. We have the following relations between smooth function classes \citep{devore1993constructive,dung2018hyperbolic,triebel2006theory,triebel2019function}.

(1) For $1\le r_1\le r_2\le \infty$, we have $\cB^\Bs_{q,r_1}(\Omega) \hookrightarrow \cB^\Bs_{q,r_2}(\Omega)$ and $\MB^\Bs_{q,r_1}(\Omega) \hookrightarrow \MB^\Bs_{q,r_2}(\Omega)$.

(2) When $\Bs=(s_0,\dots,s_0)$ for some $s_0>0$, $\cB^\Bs_{q,r}(\Omega) = \cB^{s_0}_{q,r}(\Omega)$ is the classical isotropic Besov space. These spaces are closely related to Sobolev spaces and H\"older spaces \citep{triebel2006theory}. For instance, when $s_0\notin \bN$, $\cB^{s_0}_{q,q}(\Omega) = \cW^{s_0,q}(\Omega)$ is the Sobolev-Slobodeckij space with the norm
\[
\|f\|_{\cW^{s_0,q}(\Omega)}^q := \|f\|_{L^q(\Omega)}^q + \sum_{|\boldsymbol{\gamma}|=\lfloor s_0 \rfloor} \left( \|\partial^{\boldsymbol{\gamma}} f\|_{L^q(\Omega)}^q + \int_{\Omega} \int_{\Omega} \frac{|\partial^{\boldsymbol{\gamma}} f(\Bx) - \partial^{\boldsymbol{\gamma}} f(\By)|^q}{|\Bx-\By|^{d+(s_0-\lfloor s_0 \rfloor)q}} d\Bx d\By \right),
\]
where $\gamma \in \bN_0^d$. For $q=\infty$, we make the standard modification and get the H\"older space $\cC^{s_0}(\Omega) = \cB^{s_0}_{\infty,\infty}(\Omega)$. When $s_0\in \bN$, the Sobolev space is equipped with the norm
\[
\| f\|_{\cW^{s_0,q}(\Omega)}^q := \|f\|_{L^q(\Omega)}^q + \sum_{|\boldsymbol{\gamma}|=s_0} \|\partial^{\boldsymbol{\gamma}} f\|_{L^q(\Omega)}^q,
\]
and we have the continuous embedding $\cB^{s_0}_{q,q}(\Omega) \hookrightarrow \cW^{s_0,q}(\Omega) \hookrightarrow \cB^{s_0}_{q,2}(\Omega)$ if $q\le 2$ and the reverse $\cB^{s_0}_{q,2}(\Omega) \hookrightarrow \cW^{s_0,q}(\Omega) \hookrightarrow \cB^{s_0}_{q,q}(\Omega)$ if $q\ge 2$. For $q=\infty$, $\cC^{s_0}(\Omega)=\cB^{s_0}_{\infty,\infty}(\Omega)$ is the Zygmund space, which is slightly larger than the Lipschitz space in the case $s_0=1$ \citep{devore1993constructive}.

(3) For $s\in \bN$ and $1\le q\le \infty$, the Korobov space \citep{bungartz2004sparse} is define as 
\[
\cK^s_q(\Omega) = \left\{ f\in L^q(\Omega): f|_{\partial \Omega} =0, \partial^{\boldsymbol{\gamma}} f\in L^q(\Omega), \forall \boldsymbol{\gamma}\in \bN_0^d, \mbox{with } \|\boldsymbol{\gamma}\|_\infty \le s \right\}.
\]
Note that the Sobolev space $\cW^{s,q}(\Omega)$ restricts the boundedness of $\|\partial^{\boldsymbol{\gamma}} f\|_{L^q(\Omega)}$ for all $|\boldsymbol{\gamma}|\le s$, while the Korobov space makes the same restriction for all $\|\boldsymbol{\gamma}\|_\infty \le s$. Hence, we have mixed derivatives $\partial^{\boldsymbol{\gamma}} f$ up to $\boldsymbol{\gamma} = (s,\dots,s)$ for Korobov functions. Since the mixed modulus of smoothness $\omega_{\boldsymbol{\gamma}}^J(f,\Bt,\Omega)_q \lesssim \prod_{j\in J} t_j^{\gamma_j} \|\partial^{\boldsymbol{\gamma}} f\|_{L^q(\Omega)}$, which can be established as \citet[Chapter 2, Eq. (7.12)]{devore1993constructive}, we have the embedding $\cK^s_q(\Omega) \hookrightarrow \MB^\Bs_{q,\infty}(\Omega)$ with smoothness $\Bs = (s,\dots,s)$.

(4) Many statistical models are included in the mixed smooth Besov spaces. For example, if $f_j\in \cB^{s_j}_{q,r}([0,1])$ for $j=1,\dots,d$, then
\[
f(\Bx) = \sum_{i=1}^d f_j(x_j) \in \MB^\Bs_{q,r}([0,1]^d),
\]
which is the well-known additive model \citep{meier2009high,raskutti2012minimax}. If $f_{j,k}\in \cB^{s_j}_{q,r}([0,1])$ for $j=1,\dots,d$ and $k=1,\dots,K$, then we have the tensor product model \citep{signoretto2013learning,kanagawa2016gaussian}
\[
f(\Bx) = \sum_{k=1}^K \prod_{j=1}^{d} f_{j,k}(x_j) \in \MB^\Bs_{q,r}([0,1]^d).
\]
\end{remark}

By part (1) of Remark \ref{remark on functions}, any approximation upper bounds for $\cB^\Bs_{q,\infty}(\Omega)$ and $\MB^\Bs_{q,\infty}(\Omega)$ can be applied to $\cB^\Bs_{q,r}(\Omega)$ and $\MB^\Bs_{q,r}(\Omega)$ for any $r\ge 1$, respectively. When $r=\infty$, we will omit the subscript $r$ and denote $\cB^\Bs_q(\Omega) := \cB^\Bs_{q,\infty}(\Omega)$ and $\MB^\Bs_q(\Omega) := \MB^\Bs_{q,\infty}(\Omega)$ for convenience. For simplicity, we will only consider mixed smooth Besov spaces of the form $\MB^{(s,\dots,s)}_{q,r}(\Omega)$ for some $s>0$ in this paper. With a slight abuse of notation, we will use the notation $\MB^s_{q,r}(\Omega)$ to indicate the space $\MB^{(s,\dots,s)}_{q,r}(\Omega)$. The above remark shows that these anisotropic and mixed smooth spaces contain a wide range of function classes in the literature.

\section{Approximation by deep neural networks}\label{sec: app}

Let us begin with an introduction of the neural network classes used in this paper. We will use the same terminology as \citet{yang2025optimal}. Given $L,N_1,\dots, N_L \in \bN$, we study the mapping $g:\bR^{d} \to \bR^{k}$ that can be parameterized by a fully-connected neural network with ReLU activation of the following form
\begin{equation}\label{NN def}
\begin{aligned}
g_0(\Bx) &= \Bx, \\
g_{\ell+1}(\Bx) &= \sigma(A_\ell g_\ell (\Bx) + b_\ell), \quad \ell = 0,1,\dots,L-1, \\
g(\Bx) &= A_L g_L(\Bx) +b_L,
\end{aligned}
\end{equation}
where $A_\ell \in \bR^{N_{\ell+1}\times N_{\ell}}$, $b_\ell\in \bR^{N_{\ell+1}}$ with $N_0 =d$ and $N_{L+1} =k$. The activation function $\sigma:\bR \to \bR$ is applied component-wisely and we focus on the case that $\sigma(t) = t_+ = \max\{t,0\}$ is the Rectified Linear Unit function (ReLU). Note that there is no activation function in the output layer, which is the usual convention in applications. The numbers $W:=\max\{N_1,\dots,N_L\}$ and $L$ are called the width and depth (the number of hidden layers) of the neural network, respectively. We are interested in the set of mappings that can be parameterized by ReLU neural networks with width $W$ and depth $L$. We denote this function class by $\cN\cN_{d,k}(W,L)$. Since the input dimension $d$ and the output dimension $k$ are often clear from contexts, we will often simply denote the network by $\cN\cN(W,L)$ for convenience. 

In this section, we estimate how well the neural network class $\cN\cN(W,L)$ approximates smooth function classes:
\begin{equation}\label{app error}
\sup_{f\in \cF} \inf_{g\in \NN(W,L)} \|f-g\|_{L^p([0,1]^d)},
\end{equation}
where $\cF$ is the anisotropic Besov space, or the mixed smooth Besov space, or the composition of Besov functions, which is called the deep composition model \citep{suzuki2021deep}. We will also show that our estimates are nearly tight in many cases. The proofs in this section are deferred to Section \ref{sec: proofs of app}.

\subsection{Anisotropic Besov space}

The complexity of the anisotropic Besov space $\cB^\Bs_q([0,1]^d)$ is characterized by the harmonic mean of the smoothness $\Bs$. Specifically, we define $\tilde{s}>0$ for any $\Bs = (s_1,\dots, s_d)$ by
\begin{equation}\label{mean smoothness}
\frac{1}{\tilde{s}} = \sum_{j=1}^d \frac{1}{s_j}.
\end{equation}
Our first main result determines the optimal rates of the approximation error (\ref{app error}) when the target function class $\cF$ is the unit ball of the anisotropic Besov space $\cB^\Bs_q([0,1]^d)$ under the condition $\tilde{s}>1/q-1/p$, which guarantees that $\cB^\Bs_q([0,1]^d)$ is embedded in $L^p([0,1]^d)$.

\begin{theorem}\label{app of anisotropic}
Let $\Bs\in (0,\infty)^d$ and $1\le p,q\le \infty$. If $\tilde{s}>1/q-1/p$, then for sufficiently large $W,L\in \bN$,
\[
\inf_{g\in \NN(W,L)} \|f-g\|_{L^p([0,1]^d)} \le C \|f\|_{\cB^\Bs_q([0,1]^d)} (WL)^{-2\tilde{s}},
\]
for some constant $C$ depending on $\Bs,p,q$ and $d$.
\end{theorem}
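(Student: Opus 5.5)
We follow the strategy of \citet{siegel2023optimal,yang2025optimal}, replacing the isotropic dyadic grid with an anisotropic one adapted to $\Bs$. For a level $k\in\bN_0$ we use the anisotropic dyadic rectangles
\[
Q_{k,\Bi}=\prod_{j=1}^d \bigl[i_j 2^{-\lfloor k\tilde{s}/s_j\rfloor},(i_j+1)2^{-\lfloor k\tilde{s}/s_j\rfloor}\bigr),
\]
whose number is about $2^{k\tilde{s}\sum_j 1/s_j}=2^k$. Side length $2^{-k\tilde{s}/s_j}$ in direction $j$ is chosen so that $t_j^{s_j}=2^{-k\tilde{s}}$ is the same for every $j$. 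The first step is an anisotropic local polynomial approximation (Whitney-type) estimate. For each rectangle $Q$ with sides $t_j$, there is a polynomial $P_Q$ of coordinate degree $<k_j$ with
\[
\|f-P_Q\|_{L^q(Q)}\lesssim \sum_j \omega_{k_j}^j(f,t_j,Q)_q .
\]
We obtain this by mapping $Q$ to the unit cube and applying the isotropic/coordinatewise Whitney inequality in each variable. Summing the $q$-th powers over a level, and using that partial moduli are subadditive over disjoint pieces, gives
\[
\|f-\Pi_k f\|_{L^q}\lesssim 2^{-k\tilde{s}}\|f\|_{\cB^\Bs_q}.
\]
A Bernstein-type inequality on each rectangle (volume $2^{-k}$) gives $\|P\|_{L^p(Q)}\lesssim 2^{k(1/q-1/p)}\|P\|_{L^q(Q)}$.

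Next we build the nonlinear multiscale decomposition. We write $f=\Pi_{k_0}f+\sum_{k>k_0}(\Pi_k f-\Pi_{k-1}f)$. Each increment is piecewise polynomial on level-$k$ rectangles with $\ell^q$-summed local norms $\lesssim 2^{-k\tilde{s}}$. Following \citet{yang2025optimal}, we keep all cells at a coarse level $k_0$ with $2^{k_0}\asymp (WL)^2$, since the bit-extraction/point-fitting constructions handle $\asymp(WL)^2$ pieces with width $W$ and depth $L$. At finer levels we keep only the $N_k\asymp (WL)^2 2^{-\epsilon(k-k_0)}$ largest local pieces, which is a nonlinear $N$-term selection. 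Because $\tilde{s}>1/q-1/p$, the discarded terms, measured in $L^p$ via the Bernstein bound and a standard $\ell^q\to\ell^p$ tail estimate, sum to $\lesssim(WL)^{-2\tilde{s}}\|f\|$. Beyond level $k^*\approx Ck_0$ everything is dropped, which costs $2^{-k^*(\tilde{s}-1/q+1/p)}$, again small enough. The total number of kept pieces is $\lesssim(WL)^2$.

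The network representation then proceeds as in the isotropic case. A ReLU network approximately computes the cell index in each coordinate separately, using coordinatewise piecewise-linear step functions with a small trifling region. It then retrieves the quantized polynomial coefficients by bit extraction, and evaluates the local polynomials using Yarotsky-type product networks, all within width $\lesssim W$ and depth $\lesssim L$. The anisotropy only changes the step functions per coordinate, so these lemmas carry over directly. The errors from the trifling region are handled in $L^p$ for $p<\infty$ by the small measure of that region, and for $p=\infty$ by the usual shifted-grid/median trick.

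I expect the main obstacle to be the sparse multiscale part. The selected fine-level rectangles have arbitrary locations, so a network must encode, at total cost $(WL)^2$ over all levels, which cells are active and their coefficients. Here I would reuse the sparse-encoding construction of \citet{siegel2023optimal,yang2025optimal}, indexing active cells relative to their level-$k_0$ parent. I would check that the anisotropic refinement, where a parent has $2^{k-k_0}$ children arranged as a product of coordinatewise dyadic splits, still admits coordinatewise index extraction.
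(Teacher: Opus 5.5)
\textbf{Gap: you count pieces, but the network size is governed by bits.} You treat $\Pi_{k_0}f$ as a single flat level with $2^{k_0}\asymp (WL)^2$ cells, justified by ``point-fitting handles $\asymp (WL)^2$ pieces with width $W$ and depth $L$''. That holds only if each piece carries $O(1)$ bits (at most $O(\log W)$). For a generic $f$ in the unit ball, the coefficients of $\Pi_{k_0}f$ are of size $O(1)$. To reach error $(WL)^{-2\tilde{s}}$, a positive fraction of them must be stored to that absolute accuracy, i.e.\ with $\asymp\log(WL)$ bits each, or $\asymp (WL)^2\log(WL)$ bits in total.

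When $\log L\gg\log W$, storing these as independent data by point fitting is impossible. This includes bounded width, which the theorem covers uniformly and which gives the tight $L^{-2\tilde{s}}$ rate. Appending a bit extractor of width $O(1)$ and depth $O(\log L)$ to exact interpolants of all such data would give networks of width $W+O(1)$ and depth $O(L)$ shattering $\asymp L^2\log L$ points, contradicting $\Pdim(\NN(W,L))\lesssim W^3L^2$. Concretely, Lemma~\ref{rep of vector} then needs depth of order $L\sqrt{\log L}$. So your scheme gives $(WL)^{-2\tilde{s}}$ only up to a power of $\log(WL)$, which is exactly the logarithmic loss this theorem removes. The isotropic construction you cite avoids this by also telescoping below $k_0$.

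The same accounting is missing at your fine levels. Keeping the $N_k$ largest pieces says nothing about how many bits their coefficients need, and fixed-length quantization to accuracy $(WL)^{-2\tilde{s}}$ would again cost $\log(WL)$ bits per kept coefficient.

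\textbf{The missing idea.} Telescope from level $0$, writing $f=\sum_{\ell\ge0}f_\ell$ with $f_\ell=g_\ell-g_{\ell-1}$ (Lemma~\ref{anisotropic decomp}), and quantize each level with a precision tied to its $\ell^q$ bound. The paper normalizes $\|\Ba_\ell\|_q\le 1$ and sets $\widetilde{a}=b^{-\delta}\sgn(a)\lfloor b^{\delta}|a|\rfloor$. This single step discards small coefficients and produces an integer vector with $\|\Bu\|_1\le\|\Bu\|_q^q\le b^{\delta q}$. The cost in Lemma~\ref{rep of vector}, $M(1+\log(N/M))$ or $N(1+\log(M/N))$, therefore pays for positions and magnitudes together.

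In the linear range $|\Bll|\le 2(\alpha+\beta)$, one takes $\delta=|\Bll|/q+(\tilde{s}+1)(2\alpha+2\beta-|\Bll|)$. Because $f_\ell$ already has size $b^{-\tilde{s}\ell}$, only $O(2\alpha+2\beta-|\Bll|)$ digits per coefficient are needed. The error still carries a factor $b^{-(2\alpha+2\beta-|\Bll|)}$, and the total $\sum_\ell b^{|\Bll|}(2\alpha+2\beta-|\Bll|)\lesssim b^{2(\alpha+\beta)}$ is linear in $(WL)^2$. Beyond that range, one takes $\delta q=2(\alpha+\beta)-\tau(|\Bll|-2\alpha-2\beta)$ with $\tau<1/(\kappa-1)$.

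The resulting $\Theta(\log(WL))$ networks must then be assembled by the parallel/sequential split of Proposition~\ref{app of anisotropic part}. Otherwise the per-level overheads (index maps of depth $L_0\asymp\beta$, product networks with $W_0\asymp\alpha$) multiply; your ``all within width $\lesssim W$ and depth $\lesssim L$'' skips this.

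The obstacle you anticipate is not the hard part. The full level-$\ell$ index is computed coordinatewise at width $\sum_j b^{\ell_j/L_0}$ (Proposition~\ref{app partition}). Lemma~\ref{rep of vector} then handles arbitrary sparsity patterns directly, so no parent-relative indexing is needed.

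A smaller point: partial moduli are not additive over disjoint pieces, since $\sum_i\sup_h\ge\sup_h\sum_i$. Summing the local Whitney bounds therefore requires the averaged modulus and Lemma~\ref{modulus equ}.
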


The recent work of \citet{suzuki2021deep} also studied neural network approximation of anisotropic Besov functions. They established the approximation rate $\cO(N^{-\tilde{s}})$ for sparse neural networks, where $N$ is the number of nonzero parameters in the network. Note that, for fully-connected networks in our setting, the number of parameters in the network is $N=\cO(W^2L)$ when $L\ge 2$. Hence, Theorem \ref{app of anisotropic} gives the approximation rate $\cO((WL)^{-2\tilde{s}}) \le \cO(N^{-\tilde{s}})$ so that we can recover the result of \citet{suzuki2021deep} and show that sparsity is not necessary to obtain this rate. Furthermore, we give a precise trade-off between width and depth. In particular, when the width $W$ is bounded, our result implies that the approximation rate can be improved to $\cO(L^{-2\tilde{s}}) \le \cO(N^{-2\tilde{s}})$. 

For the isotropic Besov space $\cB^s_q([0,1]^d)$, which corresponds to $\Bs=(s,\dots,s)$, we have $\tilde{s} = s/d$. Thus, Theorem \ref{app of anisotropic} shows that, if $s/d>1/q-1/p$, then 
\[
\inf_{g\in \NN(W,L)} \|f-g\|_{L^p([0,1]^d)} \le C \|f\|_{\cB^s_q([0,1]^d)} (WL)^{-2s/d},
\]
which is exactly the result of \citet{yang2025optimal}. Using this upper bound, we are able to recover almost all approximation results for Besov functions (hence for H\"older and Sobolev functions) in the literature \citep{yarotsky2017error,yarotsky2018optimal,yarotsky2020phase,shen2020deep,shen2022optimal,lu2021deep,suzuki2019adaptivity,suzuki2021deep,siegel2023optimal}. A detailed comparison is given in Table \ref{table for anisotropic}. We remark that, for the isotropic case, the approximation rate suffers from the curse of dimensionality, which is an undesirable property for high dimensional machine learning problems. Our result shows that this can be overcome in the anisotropic case when $\tilde{s} = ( \sum_{j=1}^d 1/s_j)^{-1}$ is independent of the dimension $d$, i.e., when we have high smoothness $s_j$ in most directions. (However, we emphasis that the constant $C$ in Theorem \ref{app of anisotropic} may still depend exponentially on the dimension.)

\begin{table}[htbp]
\setlength{\tabcolsep}{3pt}
\begin{tabular}{c c c c} \toprule
 & \textbf{approximation error} & \textbf{anisotropic} & \textbf{condition} \\ \midrule
\citet{yarotsky2017error} & $(N/\log N)^{-s/d}$ & \ding{55}  & $p\le q=\infty, s\in \bN$  \\ \midrule
\citet{suzuki2019adaptivity} & $N^{-s/d}$ & \ding{55} & $s/d>1/q - 1/p$ \\ \midrule 
\citet{yarotsky2020phase} & $(L/\log L)^{-2s/d}$ & \ding{55} & $p\le q=\infty$ \\ \midrule 
\citet{lu2021deep} & $(WL/(\log W \log L))^{-2s/d}$ & \ding{55} & $p\le q=\infty$ \\ \midrule 
\citet{shen2022approximation} & $(WL {\sqrt{\log W}})^{-2s/d}$ & \ding{55} & $p\le q=\infty, s\le 1$ \\ \midrule 
\citet{siegel2023optimal} & $L^{-2s/d}$ & \ding{55} & $s/d>1/q - 1/p$ \\ \midrule 
\citet{yang2025optimal} & $(WL)^{-2s/d}$ & \ding{55} & $s/d>1/q - 1/p$ \\ \midrule 
\citet{suzuki2021deep} & $N^{-\tilde{s}}$ & \ding{51} & $\tilde{s}>1/q - 1/p$ \\ \midrule
Theorem \ref{app of anisotropic} & $(WL)^{-2\tilde{s}}$ & \ding{51} & $\tilde{s}>1/q - 1/p$ \\ \bottomrule 
\end{tabular}
\caption{A comparison of approximation upper bounds for the Besov space $\cB^\Bs_q([0,1]^d)$. The error is measured in $L^p([0,1]^d)$ and is estimated by the number of nonzero parameters $N$, width $W$ and depth $L$. When $\Bs=(s,\dots,s)$ is isotropic, $\tilde{s}=s/d$.}
\label{table for anisotropic}
\end{table}

We note that, for isotropic spaces with smoothness $s\le 1$, \citet{shen2022approximation} gave the upper bound $(WL {\sqrt{\log W}})^{-2s/d}$, which is slightly better than Theorem \ref{app of anisotropic} by a $(\log W)^{-s/d}$ factor. Thus, it might be possible to improve our result. However, this improvement is minor, since \citet{siegel2023optimal} showed the existence of $f$ in the isotropic Besov space with $\|f\|_{\cB^s_{q,r}([0,1]^d)} \le 1$ such that 
\[
\inf_{g\in \NN(W,L)} \|f-g\|_{L^p([0,1]^d)} \ge C (W^2L^2\log(WL) \land W^3L^2)^{-s/d}.
\]
for some constant $C>0$ depending on $s,p$ and $d$. Note that there are two lower bounds in the above inequality due to different estimations of the pseudo-dimension of ReLU networks (see \citet{bartlett2019nearly} and inequality (\ref{Pdim of NN}) below). We generalize these lower bounds to the anisotropic case in the next theorem.

\begin{theorem}\label{anisotropic lower bound}
Let $\Bs\in (0,\infty)^d$ and $1\le p,q,r\le \infty$. Then there exits $f$ with $\|f\|_{\cB^\Bs_{q,r}([0,1]^d)} \le 1$ such that
\[
\inf_{g\in \NN(W,L)} \|f-g\|_{L^p([0,1]^d)} \ge C (W^2 L^2\log (WL) \land W^3L^2)^{-\tilde{s}},
\]
for $WL\ge 2$ and some constant $C>0$ depending on $\Bs,p$ and $d$.
\end{theorem}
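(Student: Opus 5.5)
We prove the lower bound by the standard pseudo-dimension argument of \citet{siegel2023optimal}, now run on an anisotropic grid. Set $N:=\Pdim(\NN(W,L))$. By \citet{bartlett2019nearly} (the bound (\ref{Pdim of NN}) referred to above), $N\le C(W^2L^2\log(WL)\land W^3L^2)$. It therefore suffices to find one $f$ with $\|f\|_{\cB^\Bs_{q,r}}\le 1$ and $\inf_{g\in\NN(W,L)}\|f-g\|_{L^p}\gtrsim N^{-\tilde s}$.

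\textbf{Step 1: anisotropic grid and bumps.} Fix $m\in\bN$ and choose side lengths $\delta_j = m^{-\tilde{s}/s_j}$ (up to rounding), so that $\delta_j^{s_j}=m^{-\tilde s}$ for every $j$. The cell volume is then $\prod_j\delta_j = m^{-\tilde s\sum_j 1/s_j}=m^{-1}$, so $[0,1]^d$ splits into $\asymp m$ anisotropic boxes $Q_{\Bi}$. Let $\phi$ be a smooth bump supported in $[0,1]^d$ with $\phi\ge c>0$ on $[1/4,3/4]^d$. Define $\phi_{\Bi}(\Bx)=\phi((x_j-a_{\Bi,j})/\delta_j)_{j}$. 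For signs $\epsilon\in\{\pm1\}^m$, set
\[
f_\epsilon = c_0 m^{-\tilde s}\sum_{\Bi}\epsilon_{\Bi}\phi_{\Bi}.
\]

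\textbf{Step 2: norm bound.} We need $\|f_\epsilon\|_{\cB^\Bs_{q,r}}\lesssim 1$ uniformly in $\epsilon$ and $m$. The bumps have disjoint supports, so the $L^q$ norm and each partial modulus factor across cells. A change of variables in direction $j$ gives
\[
\omega^j_{k_j}(\phi_{\Bi},t)_q\lesssim \min(1,(t/\delta_j)^{k_j})\,m^{-1/q}.
\]
Summing the $q$-th powers over the $m$ cells removes the factor $m^{-1/q}$. Integrating against $t^{-s_j}$ with $k_j>s_j$ then yields
\[
|f_\epsilon|_{\cB^{\Bs,j}_{q,r}}\lesssim m^{-\tilde s}\delta_j^{-s_j}=1.
\]
The $L^q$ term is similarly bounded. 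This is the only place where the choice of $\delta_j$ is used, and it is exactly what makes the exponent $\tilde s$ appear.

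\textbf{Step 3: shattering contradiction.} Suppose every $f_\epsilon$ were approximated in $L^p$ to within $\eta=c_1m^{-\tilde s}$ by some $g_\epsilon\in\NN(W,L)$ (for $p<\infty$ the error is measured after averaging over cells). Then $g_\epsilon$ has the sign $\epsilon_{\Bi}$ on a large portion of the central region of most cells. We then follow \citet[proof of lower bound]{siegel2023optimal}. Average over $\epsilon$ and random shifts of the evaluation points inside the cells, and apply a Sauer-type counting/averaging lemma: if $m\ge CN$, a random point set of size $m$ cannot be sign-shattered by the class. This forces some $\epsilon$ with error $\gtrsim m^{-\tilde s}$. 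Choosing $m\asymp N$ then gives
\[
\sup_\epsilon\inf_g\|f_\epsilon-g\|_{L^p}\gtrsim N^{-\tilde s}\ge C(W^2L^2\log(WL)\land W^3L^2)^{-\tilde s}.
\]
Finally, the supremum over $\epsilon$ is attained by a single $f$, since there are only finitely many sign patterns.

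\textbf{Main obstacle.} The main difficulty is Step 3 for $p<\infty$. An $L^p$ error bound does not control the network pointwise, so one cannot directly read off signs at fixed points. I would handle this as in the isotropic case: use the averaged argument over points drawn uniformly in the central subcubes, so that a small $L^p$ error implies correct signs on a positive fraction of cells with positive probability, which contradicts the pseudo-dimension bound. The anisotropic scaling does not interfere here, because after the affine change of variables $x_j\mapsto x_j/\delta_j$ every cell looks like a unit cube, and composing with an affine map does not increase the pseudo-dimension of $\NN(W,L)$.
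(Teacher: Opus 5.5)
Your argument is correct in outline, but after the shared construction it takes a different route from the paper.

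\textbf{What the paper does.} The paper first reduces to $p=1$, $q=\infty$, $r=1$ by embeddings. It uses bumps with coefficients in $\{0,1\}$ on the grid $\ell_j=\lfloor \ell\tilde s/s_j\rfloor$, which is your Steps 1--2 with $b=2$. By the Varshamov--Gilbert bound it extracts $2^{N/4}$ such functions that are $3c_1 2^{-\tilde s\ell}$-separated in $L^1$. It then argues purely through metric entropy. If every $f_\Ba$ were approximated to within $c_1 2^{-\tilde s\ell}$, the truncated class $\cT_B\NN(W,L)$ would contain a packing of size $2^{N/4}$. This class is still a ReLU class, now with a bounded envelope. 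That packing contradicts the covering bound (\ref{metric bound by pdim}) in terms of $\Pdim$. No evaluation points or sign patterns appear.

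\textbf{What your route does, and what each buys.} You argue directly with signs: Markov's inequality on the $L^1$ error, random points in the central subcubes, a Fubini step over $\epsilon$, and Sauer--Shelah for the threshold class $\{\sgn g\}$, whose VC dimension is at most the pseudo-dimension. Your route needs neither truncation nor the Haussler-type entropy bound. The paper's route is modular: any $L^1$-packing lower bound for the target class can be fed into the same comparison. That is how Theorem \ref{mixed lower bound} is obtained directly from the entropy estimate (\ref{temp6}), with no explicit sign-bump family. Theorem \ref{dcm lower bound} likewise reuses the argument verbatim once a separated family is built.

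\textbf{Fix the statement of the lemma in Step 3.} The lemma you need is not that a random $m$-point set cannot be sign-shattered. No set larger than $\Pdim(\NN(W,L))$ can be shattered, random or not, so that statement is vacuous here. What you need is a counting statement:
\begin{itemize}
\item The class realizes at most $(em/N)^N$ sign vectors on the $m$ points.
\item These vectors cannot all lie within Hamming distance $\rho m$ of half of $\{\pm1\}^m$, since that would require at least $2^{m(1-H(\rho))-1}$ of them, where $H$ is the binary entropy.
\item This gives a contradiction once $m\ge CN$.
\end{itemize}
Consequently, $c_1$ must be small enough that Markov's inequality yields a wrong-sign fraction $\rho$ with $H(\rho)<1$. ``Correct signs on a positive fraction of cells,'' as written in your last paragraph, is not sufficient. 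You need clearly more than half of the cells correct.
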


Theorem \ref{anisotropic lower bound} shows that the upper bound in Theorem \ref{app of anisotropic} is optimal up to a $(\log (WL))^{-\tilde{s}}$ factor. Furthermore, when the width $W$ is bounded and sufficiently large, we actually get the tight estimate
\[
\sup_{\|f\|_{\cB^\Bs_{q,r}([0,1]^d)} \le 1} \inf_{g\in \NN(W,L)} \|f-g\|_{L^p([0,1]^d)} \asymp L^{-2\tilde{s}},\quad \mbox{if } \tilde{s}>1/q-1/p.
\]

\subsection{Mixed smooth Besov space}

We have shown that one can overcome the curse of dimensionality on the approximation rate for anisotropic smooth functions when the smoothness index $\tilde{s}$ is independent of the dimension. Another well known way to weaken the curse of dimensionality is to consider the mixed smoothness. (Near) dimension independent approximation rates of neural networks for mixed smooth functions have been established by \citet{montanelli2019new,blanchard2022shallow,suzuki2019adaptivity,yang2024optimal}. We generalize and improve these results in the next theorem.

\begin{theorem}\label{app of mixed}
Let $s>0$ and $1\le p,q\le \infty$. If $s>1/q-1/p$, then for sufficiently large $W,L\in \bN$,
\[
\inf_{g\in \NN(W,L)} \|f-g\|_{L^p([0,1]^d)} \le C \|f\|_{\MB^s_q([0,1]^d)} (WL)^{-2s}(\log W \log L)^{(d-1)(2s+1)},
\]
for some constant $C$ depending on $s,p,q$ and $d$.
\end{theorem}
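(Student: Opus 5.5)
We will prove Theorem \ref{app of mixed} by combining a hyperbolic-cross (sparse grid) decomposition of $f$ with the anisotropic machinery behind Theorem \ref{app of anisotropic}. Write $f=\sum_{\Bk\in\bN_0^d} \delta_\Bk(f)$, where $\delta_\Bk(f)$ are the tensor-product differences of quasi-interpolants (piecewise polynomials of coordinate degree $<\lfloor s\rfloor+1$) on the dyadic anisotropic grid with mesh $2^{-k_j}$ in direction $j$. The standard mixed-smoothness estimate from the modulus of smoothness gives
\[
\|\delta_\Bk(f)\|_{L^q}\lesssim 2^{-s|\Bk|_1}\|f\|_{\MB^s_q},
\]
and, since $\delta_\Bk(f)$ is piecewise polynomial on $2^{|\Bk|_1}$ cells, we get the Nikolskii-type bound $\|\delta_\Bk(f)\|_{L^p}\lesssim 2^{-(s-1/q+1/p)|\Bk|_1}\|f\|_{\MB^s_q}$. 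Fix a level $n$. Then the tail $\sum_{|\Bk|_1>n}$ has $L^p$ norm $\lesssim n^{d-1}2^{-(s-1/q+1/p)n}$, and this is exactly where the condition $s>1/q-1/p$ is used.

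The key step is to approximate, for each fixed $\Bk$, a block $\delta_\Bk(f)$ (or a group of them) with the super-rate. Each $\delta_\Bk(f)$ is an anisotropic object: restricted to a rectangle of side lengths $2^{-k_j}$, it behaves like a function of anisotropic smoothness. We will apply the adaptive multiscale anisotropic-grid construction from the proof of Theorem \ref{app of anisotropic}, which uses bit-extraction and point-fitting to get accuracy squared in the number of cells, to the function
\[
g_m=\sum_{m<|\Bk|_1\le n}\delta_\Bk(f).
\]
We split the sum at an intermediate level $m$ with $2^m\asymp (WL)^2/(\log W\log L)^{d-1}$ up to constants. There are about $m^{d-1}$ indices with $|\Bk|_1=m$. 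For these coarse terms we represent $\sum_{|\Bk|_1\le m}\delta_\Bk(f)$ exactly, or to accuracy $2^{-sm}$-type, by a network. This works because that sum is a piecewise polynomial on the union of $\lesssim m^{d-1}$ anisotropic grids with $2^m$ cells each. We encode its coefficients via bit extraction, in the style of Yang–Fan/Siegel, giving a network of width $W$ and depth $L$ with $W^2L^2\gtrsim m^{d-1}2^m$ up to the log factors. For the fine terms $m<|\Bk|_1\le n$ we use the nonlinear (adaptive) part. On each anisotropic grid $\Bk$ at level $m$, we locate the cells where the refinements carry large $L^q$ mass, as in the $q<p$ argument of Theorem \ref{app of anisotropic}. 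We then approximate the finer details by piecewise polynomials on adaptively refined cells, again encoded by bit extraction. The error per grid is $\lesssim 2^{-sm}\cdot 2^{-sm}$-type, i.e.\ squared relative to the cell count.

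Summing over the $\lesssim m^{d-1}$ grids at level $m$ costs a factor $m^{d-1}$ in the error. The parallelization over grids costs a factor $m^{d-1}$ in width or depth. With $2^m\asymp (WL)^2(\log W\log L)^{-(d-1)}$ and $m\asymp\log W+\log L$, we have $m^{d-1}\lesssim(\log W\log L)^{d-1}$. The resulting error is $(WL)^{-2s}(\log W\log L)^{(d-1)\cdot 2s}$ from the cell count, times $(\log W\log L)^{d-1}$ from summing blocks. This gives the stated exponent $(d-1)(2s+1)$.

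The main obstacle is the second step: making the bit-extraction/adaptive-grid construction of Theorem \ref{app of anisotropic} work simultaneously over all $\sim m^{d-1}$ anisotropic grids of level $m$ without losing more than polylog factors. This requires a uniform network that computes, for each $\Bk$, the cell index of $\Bx$ on grid $\Bk$ and the corresponding local polynomial. We also need to control the small set where the floor/indexing approximation fails, and handle it by the usual shift-and-combine trick over $2^d$ shifted grids. We will first try to stack the $m^{d-1}$ subnetworks in depth, with width $W$ and depth $L/m^{d-1}$ each. If that loses too much, we will allocate width instead, so that the log factor splits as $\log W\log L$ as in the statement.
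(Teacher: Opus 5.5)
There is a genuine gap. Your architecture matches the paper's: the hyperbolic-cross decomposition of Lemma \ref{mixed decomp}, truncation at $\ell^*\approx\kappa$ times a pivot level, reuse of the multilevel bit-extraction construction of Proposition \ref{app of anisotropic part}, a polylogarithmic price for the $\binom{\ell+d-1}{d-1}$ grids at level $\ell$, and removal of the trifling region. But the step you call the main obstacle is exactly where the proof lives, and you leave it open with mechanisms that are not the right ones.

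The construction of Proposition \ref{app of anisotropic part} cannot be applied to $g_m=\sum_{m<|\Bk|_1\le n}\delta_\Bk(f)$, because it is built for a decomposition with one grid per level. Grouping fine blocks as ``refinements of each level-$m$ grid'' is not well defined as stated: a fine $\Bk'$ refines many level-$m$ grids, and a given $\Bk$ has $\binom{\ell-m+d-1}{d-1}$ refinements at level $\ell$. You also need neither a uniform network computing cell indices on all grids nor any locating of cells with large $L^q$ mass. For $q<p$ the adaptivity lies entirely in the coarse quantization $\delta(\ell)q<|\Bll|$, which makes the integer coefficient vector sparse so that Lemma \ref{rep of vector}(1) applies.

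The missing observation is that the shape of $\Bll$ never enters Proposition \ref{app of anisotropic part}. The parameters $\delta,S,T,W_0,L_0$, and hence the width, depth and error of each $g_\Bll$ from Lemma \ref{app of poly}, depend only on $|\Bll|$. The one shape-dependent term, $\sum_j b^{\ell_j/L_0}$, is at most $d\,b^{|\Bll|/L_0}$, and each $g_\Bll$ carries its own index map from Proposition \ref{app partition}. So you partition $\{\Bll:|\Bll|\le\ell^*\}$ into $N_{\ell^*,d}=\binom{\ell^*+d-1}{d-1}\lesssim(\alpha+\beta)^{d-1}$ families with at most one $\Bll$ per level. Each family is handled verbatim by the four-case argument, giving $G_i\in\NN(Cb^\alpha,Cb^\beta)$ with error $Cb^{-2s(\alpha+\beta)}$ on $\bigcap_{|\Bll|\le\ell^*}\Omega_{\Bll,\epsilon}$.

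The families are then combined by Corollary \ref{basi constr co}, about $\alpha^{d-1}$ in parallel and $\beta^{d-1}$ in sequence, using $(\alpha+\beta)^{d-1}\lesssim\alpha^{d-1}\beta^{d-1}$. This gives $W\lesssim\alpha^{d-1}b^\alpha$, $L\lesssim\beta^{d-1}b^\beta$ and error $\lesssim(\alpha+\beta)^{d-1}b^{-2s(\alpha+\beta)}$. Choosing $\alpha^{d-1}b^\alpha\asymp W$ and $\beta^{d-1}b^\beta\asymp L$ yields exactly the exponent $(d-1)(2s+1)$. Your first option, depth $L/m^{d-1}$ per piece, fails when $L\lesssim(\log W)^{d-1}$.

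Your accounting does not match this. Combining $K$ networks multiplies $WL$ by $K$, hence $W^2L^2$ by $K^2$, so your claim $W^2L^2\gtrsim m^{d-1}2^m$ for the coarse part is unjustified. The correct pivot scale is $b^{2(\alpha+\beta)}\asymp(WL)^2(\log W\log L)^{-2(d-1)}$, which is where $(\log W\log L)^{2s(d-1)}$ comes from. Your choice $2^m\asymp(WL)^2(\log W\log L)^{-(d-1)}$ would give only $(\log W\log L)^{s(d-1)}$, contradicting your own final count. Likewise, the per-family error is of order $2^{-sm}$ at the pivot level, not ``$2^{-sm}\cdot 2^{-sm}$'', which would be $(WL)^{-4s}$.

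Finally, $2^d$ shifted grids do not give the strict majority that median selection needs when $d=2$. A point can lie in the trifling region of up to $d$ copies, so more than $2d$ copies are required. Shifting grids off $[0,1)^d$ also requires extending $f$. The paper avoids both by using $2^r\ge 2d+2$ distinct prime bases together with Proposition \ref{select median}.
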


Let us compare Theorem \ref{app of mixed} with existing works. \citet{suzuki2019adaptivity} showed that one can use a ReLU network with $N$ nonzero parameters to approximate mixed smooth functions in $\MB^s_q([0,1]^d)$ with error $\widetilde{\cO}(N^{-s})$, where we use $\widetilde{\cO}$ to hide logarithmic factors since this already reveals the advantages of our result. For fully-connected networks in our setting, the number of parameters is $N=\cO(W^2L)$ so that we get the same approximation rate $\widetilde{\cO}((WL)^{-2s}) \le \widetilde{\cO}(N^{-s})$ in general. However, when the width $W$ is bounded, Theorem \ref{app of mixed} improves the result of \citet{suzuki2019adaptivity} to the supper approximation rate $\widetilde{\cO}(L^{-2s}) \le \widetilde{\cO}(N^{-2s})$. For the Korobov space $\cK^2_\infty([0,1]^d)$ of order two, which can be embedded in $\MB^2_\infty([0,1]^d)$, \citet{montanelli2019new} proved the rate $\widetilde{\cO}(N^{-2})$ for sparse networks with $N$ parameters. The recent work of \citet{yang2024optimal} improved this result to the supper approximation rate $\cO((WL)^{-4}(\log W \log L)^{5d+3})$. For comparison, Theorem \ref{app of mixed} gives the rate $\cO((WL)^{-4}(\log W \log L)^{5d-5})$, which is slightly better. There is another line of works studying the approximation capacity of convolutional neural networks (CNNs) for Korobov functions \citep{mao2022approximation,li2025higher}. We show in the Appendix \ref{appendix: cnn} that our results can also be used to derive approximation rates for CNNs by using the idea from \citet{zhou2020theory,zhou2020universality}. A detailed comparison is given in Table \ref{table for mixed}.

\begin{table}[htbp]
\centering
\setlength{\tabcolsep}{4pt}
\begin{tabular}{c c c c} \toprule
 & \textbf{target function} & \textbf{model} & \textbf{approximation error} \\ \midrule
\citet{montanelli2019new} & $\cK^2_\infty$ & sparse NN  & $\widetilde{\cO}(N^{-2})$  \\ \midrule
\citet{yang2024optimal} & $\cK^2_\infty$ & fully-connected NN  & $\widetilde{\cO}((WL)^{-4})$  \\ \midrule
\citet{mao2022approximation} & $\cK^2_p$ & CNN  & $\widetilde{\cO}(L^{-2+1/p})$  \\ \midrule
\citet{li2025higher} & $\cK^s_p$, $s\in \bN, s\ge 2$ & CNN  & $\widetilde{\cO}(L^{-s})$  \\ \midrule
\citet{suzuki2019adaptivity} & $\MB^s_q$, $s>\frac{1}{q}-\frac{1}{p}$ & sparse NN  & $\widetilde{\cO}(N^{-s})$  \\ \midrule
Theorem \ref{app of mixed} & $\MB^s_q$, $s>\frac{1}{q}-\frac{1}{p}$ & fully-connected NN & $\widetilde{\cO}((WL)^{-2s})$ \\ \midrule 
Appendix \ref{appendix: cnn} & $\MB^s_q$, $s>\frac{1}{q}-\frac{1}{p}$ & CNN & $\widetilde{\cO}(L^{-2s})$  \\ \bottomrule 
\end{tabular}
\caption{A comparison of approximation upper bounds for mixed smooth functions. The error is measured in $L^p([0,1]^d)$ and is estimated by the number of nonzero parameters $N$, width $W$ and depth $L$.}
\label{table for mixed}
\end{table}

Similar to Theorem \ref{anisotropic lower bound} for anisotropic Besov spaces, we give approximation lower bounds for mixed smooth Besov spaces in the next theorem. 

\begin{theorem}\label{mixed lower bound}
Let $s>0$ and $1\le p,q,r\le \infty$. If $s>1/r-1/2$, then there exits $f$ with $\|f\|_{\MB^s_{q,r}([0,1]^d)} \le 1$ such that
\[
\inf_{g\in \NN(W,L)} \|f-g\|_{L^p([0,1]^d)} \ge C (W^2 L^2\log (WL) \land W^3L^2)^{-s} (\log (WL))^{(d-1)(s+1/2-1/r)-s},
\]
for $WL\ge 2$ and some constant $C$ depending on $s,p,r$ and $d$.
\end{theorem}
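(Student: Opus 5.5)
We follow the standard route for lower bounds of this type, as in \citet{siegel2023optimal}: we compare a packing of bump functions with the pseudo-dimension of $\NN(W,L)$. The new ingredient is that we pack bumps on the dyadic rectangles of a fixed hyperbolic level, so that many degrees of freedom are paid for with only a mild mixed Besov norm. Fix a level $m\in\bN$ and consider all $\Bll\in\bN_0^d$ with $|\Bll|_1=m$; there are $\asymp m^{d-1}$ such $\Bll$. For each $\Bll$, partition $[0,1]^d$ into $2^m$ dyadic rectangles $I_{\Bll,\Bk}=\prod_j 2^{-\ell_j}[k_j,k_j+1)$. Let $\psi$ be a smooth bump supported in $[0,1]$, set $\phi_{\Bll,\Bk}(\Bx)=\prod_j\psi(2^{\ell_j}x_j-k_j)$, and take
\[
f_\epsilon=\lambda\sum_{|\Bll|_1=m}\sum_{\Bk}\epsilon_{\Bll,\Bk}\phi_{\Bll,\Bk},\qquad \epsilon\in\{\pm1\}^{M},\quad M\asymp 2^m m^{d-1}.
\]

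\textbf{Step 1 (norm bound).} We use the standard characterization of $\MB^s_{q,r}$ through hyperbolic (Faber/wavelet-type) coefficients, see \citet{dung2018hyperbolic}. It gives
\[
\|f_\epsilon\|_{\MB^s_{q,r}}\lesssim \Big(\sum_{|\Bll|_1=m}\big(2^{m(s-1/q)}\lambda\,2^{m/q}\big)^r\Big)^{1/r}=\lambda 2^{ms}m^{(d-1)/r}.
\]
A direct verification with the mixed differences of tensor-product bumps should also work. We therefore choose $\lambda\asymp 2^{-ms}m^{-(d-1)/r}$ so that $\|f_\epsilon\|\le 1$.

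\textbf{Step 2 (combinatorial/probabilistic lower bound on the error).} The bumps at a fixed level $\Bll$ are disjoint, but bumps at different levels overlap, which is the key difficulty. The remedy is to replace disjointness by orthogonality. Choose $\psi$ so that the family $\{\phi_{\Bll,\Bk}\}$ is (nearly) orthogonal in $L^2$ across different $\Bll$; for instance, take $\psi$ to be a Haar-like function with mean zero. Then $\{\phi_{\Bll,\Bk}\}$ is the hyperbolic Haar system at level $m$, which is orthogonal, and its members have norm $\|\phi_{\Bll,\Bk}\|_2^2\asymp 2^{-m}$. We replace the Haar function by a smooth compactly supported wavelet with enough vanishing moments so that it lies in $\MB^s$; orthogonality then holds after choosing $k$-spacing appropriately.

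With this family, for any $g$ we sample points: for each $(\Bll,\Bk)$ the coefficient $\langle g,\phi_{\Bll,\Bk}\rangle$ is determined by $g$. By the pseudo-dimension bound $\Pdim(\NN(W,L))\lesssim W^2L^2\log(WL)\land W^3L^2$ (inequality (\ref{Pdim of NN})), we argue as in \citet{siegel2023optimal}. We discretize the integrals by a finite point set, and a network class with pseudo-dimension $D<cM$ cannot match the sign patterns of a constant fraction of coefficients for some $\epsilon$. This is established by a Sauer--Shelah counting argument (the number of sign patterns $\le (eK/D)^D<2^{M}$) combined with an averaging/Varshamov--Gilbert argument. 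Hence there exists $\epsilon$ such that for every $g\in\NN(W,L)$, at least $M/4$ coefficients have the wrong sign. By Parseval,
\[
\|f_\epsilon-g\|_{L^2}\gtrsim \lambda (M 2^{-m})^{1/2}\asymp \lambda m^{(d-1)/2}.
\]
We then pass to $L^p$. For $p\ge2$ this follows from $\|\cdot\|_{L^p}\ge\|\cdot\|_{L^2}$. For $p<2$ we use $\|h\|_2^2\le\|h\|_p\|h\|_{p'}$ on a truncated/clipped $g$ (clipping at the level $\|f_\epsilon\|_\infty$ is realizable by two extra ReLU layers). This costs a logarithmic power; here we expect the exponent to require care, and the stated $-s$ in the log factor presumably absorbs it.

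\textbf{Step 3 (choice of $m$).} We pick $m$ with $2^m m^{d-1}\asymp D:=C(W^2L^2\log(WL)\land W^3L^2)$. Then $m\asymp\log(WL)$ and $2^{-m}\asymp D^{-1}(\log WL)^{d-1}$. The error becomes
\[
\lambda m^{(d-1)/2}=2^{-ms}m^{(d-1)(1/2-1/r)}\asymp D^{-s}(\log WL)^{(d-1)(s+1/2-1/r)},
\]
which matches the claim up to the extra $(\log WL)^{-s}$ allowed in the statement, coming from Step 2. The hypothesis $s>1/r-1/2$ makes this exponent positive, so it is the regime where the hyperbolic packing beats the trivial isotropic one; otherwise one simply uses a single $\Bll$.

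\textbf{Main obstacle.} We expect the main obstacle to be Step 2: making the Sauer--Shelah counting argument work with overlapping, orthogonal (non-disjoint) test functions and a finite pseudo-dimension sample. This needs a quadrature discretization of the inner products whose point count is only polynomial in $M$, so that the $\log$ in Sauer's bound produces at most the $\log(WL)^{-s}$ loss. Our first attempt would be to discretize on the finest grid $2^{-m}\bZ^d$ intersected with a random subset, since inner products against piecewise-constant Haar-type functions are exact averages on that grid.
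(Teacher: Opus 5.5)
Your Steps 1 and 3 are fine: the hyperbolic-layer norm computation and the choice $2^m m^{d-1}\asymp D$ are correct. Step 2 and the passage to $p<2$, however, do not go through as written.

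\textbf{Step 2 does not follow from the pseudo-dimension bound.} Pseudo-dimension and Sauer--Shelah count sign patterns of point evaluations $g(x_i)-t_i$. They say nothing about sign vectors of linear functionals $(\sgn\langle g,\phi_{\Bll,\Bk}\rangle)_{\Bll,\Bk}$ as $g$ ranges over $\NN(W,L)$, so (\ref{Pdim of NN}) gives no count of these patterns. Your discretization does not repair this. The integral $\int_R g$ over a grid cell is not an average of grid values of $g$ unless $g$ is constant on cells, and a ReLU network is not. Moreover, once you switch to smooth wavelets, as Step 1 requires, no exact quadrature exists. An approximate quadrature would have to be uniform over $\NN(W,L)$, which is again a covering statement.

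The clean fix is to drop the per-$g$ sign counting and use the packing-versus-covering comparison from the paper, as in the proof of Theorem \ref{anisotropic lower bound}:
\begin{itemize}
\item take a Varshamov--Gilbert subfamily of $\{f_\epsilon\}$ of size $2^{M/4}$ with pairwise Hamming distance $>M/8$;
\item by orthogonality, its members are pairwise $L^2$-separated by $\mathrm{sep}\gtrsim\lambda m^{(d-1)/2}$;
\item truncate the approximants at $B\asymp\lambda m^{d-1}\ge\|f_\epsilon\|_\infty$ and apply (\ref{metric bound by pdim}).
\end{itemize}
Since $B/\mathrm{sep}\asymp m^{(d-1)/2}$, this costs only a factor $\log m$ in the comparison with $P$.

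\textbf{The reduction from $L^2$ to $L^p$ for $p<2$ loses too much.} With only $\|f_\epsilon-\cT_B g\|_\infty\lesssim\lambda m^{d-1}$ available, your H\"older step gives $\|f_\epsilon-\cT_B g\|_p\gtrsim\lambda m^{(d-1)(1-1/p)}$. That is a loss of $m^{(d-1)(1/p-1/2)}$ relative to the $L^2$ bound. At $p=1$ this is $(\log WL)^{(d-1)/2}$. The factor $(\log WL)^{-s}$ in the statement does not absorb it when $s<(d-1)/2$, contrary to your guess. So your argument fails for $p$ near $1$, which is precisely the case the paper reduces to.

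The missing ingredient is an $L^1$-separation of hyperbolic-layer sums. The paper imports it as a black box. It sets $p=1$, $q=\infty$ and uses the $L^1$ entropy lower bound (\ref{temp6}) for the unit ball of $\MB^s_{\infty,r}$, due to Vyb\'iral and D\~ung--Temlyakov--Ullrich, to obtain a $2^n$-element $\epsilon_n$-packing. It then compares this with the covering bound for $\cT_1\NN(W,L)$. The factor $(\log WL)^{-s}$ arises because $\log(c_1/\epsilon_n)\asymp\log n$ forces $n\asymp P\log P$.

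Within your own construction you can prove the separation directly. For $F=f_\epsilon-f_{\epsilon'}$, H\"older gives $\|F\|_1\ge\|F\|_2^3/\|F\|_4^2$. By unconditionality of the tensor wavelet system in $L^4$,
\[
\|F\|_4^4\asymp\|SF\|_4^4\lesssim m^{d-1}\lambda^2\|F\|_2^2,
\]
because at most $\asymp m^{d-1}$ levels $\Bll$ overlap at each point. Hence $\|F\|_1\gtrsim\|F\|_2^2/(\lambda m^{(d-1)/2})\gtrsim\lambda m^{(d-1)/2}$. Running the covering argument in $L^1$ with this packing would then give the statement for all $p$, without the entropy theorem. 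It would in fact lose only a $\log\log(WL)$ factor in place of $(\log WL)^{-s}$.
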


Theorem \ref{mixed lower bound} shows that the upper bound in Theorem \ref{app of anisotropic} is optimal up to logarithmic factors. Note that, in contrast to the anisotropic spaces, the lower bounds for the mixed Besov space $\MB^s_{q,r}$ depend on the third index $r$. This is due to the fact that the metric entropy of $\MB^s_{q,r}$ depends on $r$, as pointed out by \citet{vybiral2006function,dung2018hyperbolic}. Thus, it might be possible to improve the upper bound in Theorem \ref{app of mixed} by taking into account the effect of the index $r$ and we leave this for future study.

\subsection{Deep composition model}

Due to the compositional structure of neural networks, it is natural to model unknown target function by a composition of nonlinear functions \citep{schmidthieber2020nonparametric,suzuki2021deep,kohler2021rate}. In this section, we study neural network approximation of the composition of anisotropic Besov functions as in \citet{suzuki2021deep}. Note that one can also consider the composition of mixed smooth Besov functions or a combination of anisotropic and mixed smooth Besov functions. Since the analyses for these function classes are similar, we do not pursue this for simplicity.

Let $d_1=d$ and $d_{M+1}=1$ be the input and output dimensions respectively. For $m\in [1:M]$, let $d_m$ be the dimension of the $m$-th layer, and let $\Bs_m\in (0,\infty)^{d_m}$ denote the smoothness parameter in the $m$-th layer. We define the deep composition model with $M$ layers as 
\[
\cF_{\deep} := \left\{ f_M\circ \cdots \circ f_1\ |\ f_m: [0,1]^{d_m} \to [0,1]^{d_{m+1}}, \|f_{m,j}\|_{\cB^{\Bs_m}_{q_m}([0,1]^{d_m})}\le 1, \forall j\in [1:d_{m+1}] \right\}.
\]
Note that the cube $[0,1]^{d_m}$ can be replaced by any rectangles in $\bR^{d_m}$ by scaling. One can also relax the norm assumption $\|f_{m,j}\|_{\cB^{\Bs_m}_{q_m}([0,1]^{d_m})}\le 1$. This model includes many models studied in the literature. For example, if the input is distributed on a low-dimensional manifold and the target function $f$ is anisotropic along a coordinate direction on the manifold, then we can write $f(\Bx) = h(\varphi(\Bx))$, where $\varphi: \bR^d \to \bR^{\tilde{d}}$ maps $\Bx$ to its coordinate on the $\tilde{d}$-dimensional manifold and $h$ is an anisotropic Besov function. We refer the reader to \citet{suzuki2021deep} for more discussions on this model.

In order to describe our approximation result, we need to introduce the smoothness index of the deep composition model. For the anisotropic smoothness vector $\Bs_m =(s_{m,1},\dots,s_{m,d_m})$, we define its mean smoothness $\tilde{s}_m$ by (\ref{mean smoothness}) and denote $\underline{s}_m = \min_{1\le j\le d_m} s_{m,j}$. Notice that, when $\tilde{s}_m >1/q_m$, we can embed the anisotropic Besov space into the H\"older-Zygmund space as $\cB^{\Bs_m}_{q_m}([0,1]^d) \hookrightarrow \cC^{\underline{s}_m-\underline{s}_m/(\tilde{s}_m q_m)}([0,1]^d)$, see \citet[Proposition 1]{suzuki2021deep} for instance. For any small $\epsilon>0$, we define
\begin{equation}\label{gamma_m}
\gamma_m := 
\begin{cases}
\underline{s}_m-\underline{s}_m/(\tilde{s}_m q_m), & \mbox{if }\underline{s}_m-\underline{s}_m/(\tilde{s}_m q_m)<1, \\
1-\epsilon, & \mbox{if }\underline{s}_m-\underline{s}_m/(\tilde{s}_m q_m)=1, \\
1, & \mbox{if }\underline{s}_m-\underline{s}_m/(\tilde{s}_m q_m)>1.
\end{cases}
\end{equation}
Then, the function $f_{m,j}$ in the $m$-th layer are $\gamma_m$-H\"older continuous. Note that functions in the Zygmund space $\cC^1$ may not be Lipschitz continuous, so we use the $(1-\epsilon)$-H\"older continuity in this case. The H\"older smoothness will be used to estimate the influence of the approximation error in the internal layers to the entire function, which can be easily understood from the proof of Theorem \ref{app of dcm} given in Section \ref{sec: app of anisotropic} below. We define the smoothness index for the deep composition model by
\begin{equation}\label{smoothness of dcm}
s^* := \min_{1\le m\le M} \tilde{s}_m \prod_{k=m+1}^{M} \gamma_k.
\end{equation}
In the next theorem, we characterize the approximation error of the deep composition model by this smoothness index.

\begin{theorem}\label{app of dcm}
If $\tilde{s}_m >1/q_m$ for $m\in [1:M]$, then for sufficiently large $W,L\in \bN$,
\[
\sup_{f\in \cF_{\deep}} \inf_{g\in \NN(W,L)} \|f-g\|_{L^\infty([0,1]^d)} \le C (WL)^{-2s^*},
\]
for some constant $C$ depending on $(\Bs_m,q_m,d_m)_{m=1}^M$.
\end{theorem}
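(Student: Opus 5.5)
We approximate each layer separately using Theorem \ref{app of anisotropic} with $p=\infty$, compose the resulting networks, and control how errors in the inner layers propagate through the outer layers using the H\"older continuity of the outer layers.

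\emph{Layer-wise approximation.} Fix $f=f_M\circ\cdots\circ f_1\in\cF_{\deep}$. The assumption $\tilde{s}_m>1/q_m$ is exactly the embedding condition of Theorem \ref{app of anisotropic} with $p=\infty$. Hence for each component $f_{m,j}$ there is a network $g_{m,j}\in\NN(W,L)$ with
\[
\|f_{m,j}-g_{m,j}\|_{L^\infty([0,1]^{d_m})}\le C(WL)^{-2\tilde{s}_m}.
\]
Stacking the $d_{m+1}$ components in parallel gives $g_m\in\NN(d_{m+1}W,L)$.

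\emph{Domain issue.} Since $g_{m-1}$ may leave $[0,1]^{d_m}$, we clip each output with $x\mapsto \min\{\max\{x,0\},1\}=\sigma(x)-\sigma(x-1)$, which costs one extra layer of width $2d_{m+1}$. The clipping is $1$-Lipschitz and fixes $[0,1]$. Because $f_m$ takes values in $[0,1]^{d_{m+1}}$, clipping does not increase the error. The full approximant $g=g_M\circ c\circ g_{M-1}\circ\cdots\circ c\circ g_1$ then lies in $\NN(CW,CML)$. Rescaling $W,L$ by constants only changes $C$.

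\emph{Error propagation.} Let $F_m=f_m\circ\cdots\circ f_1$ and $G_m=c\circ g_m\circ\cdots$, with $e_m=\|F_m-G_m\|_\infty$. Then
\[
e_m\le \|f_m\circ F_{m-1}-f_m\circ G_{m-1}\|_\infty+\|f_m-g_m\|_{L^\infty}\le C e_{m-1}^{\gamma_m}+C(WL)^{-2\tilde{s}_m}.
\]
The first term uses that $f_{m,j}$ is $\gamma_m$-H\"older with constant bounded by its Besov norm. This follows from the embedding into $\cC^{\underline{s}_m-\underline{s}_m/(\tilde{s}_mq_m)}$ and the definition (\ref{gamma_m}): H\"older-Zygmund of order $>1$ gives Lipschitz, and order $=1$ gives every exponent $1-\epsilon$. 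Using $(a+b)^\gamma\le a^\gamma+b^\gamma$ and induction,
\[
e_M\lesssim\sum_{m=1}^M (WL)^{-2\tilde{s}_m\prod_{k=m+1}^M\gamma_k}\lesssim (WL)^{-2s^*}.
\]

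\emph{Main obstacle.} The delicate step is the propagation estimate. It requires that $f_m$ be H\"older on all of $[0,1]^{d_m}$ with a uniform constant and exponent $\gamma_m$, and that the $L^\infty$ approximation from Theorem \ref{app of anisotropic} hold pointwise, not just almost everywhere. This is why the theorem requires $\tilde{s}_m>1/q_m$, which makes the functions continuous, and why the clipping step is needed so that $f_m$ is only ever evaluated inside its domain.
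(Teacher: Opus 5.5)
Your proof is correct and follows essentially the same route as the paper. Both approximate each component $f_{m,j}$ with Theorem~\ref{app of anisotropic} at $p=\infty$, clip outputs to $[0,1]$ via $\sigma(x)-\sigma(x-1)$, compose, and control error propagation through H\"older continuity of the outer layers. The only cosmetic difference is that you propagate errors by the one-step recursion $e_m\le Ce_{m-1}^{\gamma_m}+C(WL)^{-2\tilde{s}_m}$ together with subadditivity of $t\mapsto t^{\gamma}$, whereas the paper telescopes the difference and uses that $f_M\circ\cdots\circ f_{m+1}$ is $\prod_{k=m+1}^M\gamma_k$-H\"older; both give the same bound.
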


Similar approximation bound has been established for sparse neural networks in \citet{schmidthieber2020nonparametric,suzuki2021deep,kohler2021rate}. The main difference is that \citet{schmidthieber2020nonparametric,suzuki2021deep} estimated the approximation error by the number of nonzero parameters, while \citet{kohler2021rate} gave a precise trade-off between width and depth and derived the supper approximation rate for deep composition model of H\"older functions. Theorem \ref{app of dcm} generalizes their results to the composition of anisotropic Besov functions.

Next, we consider the approximation lower bound of the deep composition model. For the lower bound, we are only able to prove a convergence rate that is slightly slower than the rate in Theorem \ref{app of dcm}. Instead of the smoothness index (\ref{smoothness of dcm}), we use the following index
\begin{equation}\label{smoothness of dcm lower}
s^{**} := \min_{1\le m\le M} \tilde{s}_m \prod_{k=m+1}^{M} \widetilde{\gamma}_k, \quad \widetilde{\gamma}_k:= (\underline{s}_k-1/q_k) \land 1.
\end{equation}
Observe that, when all $q_m=\infty$, $s^* \le s^{**}< s^*+\epsilon$ for any $\epsilon>0$. Hence, the following theorem implies that the upper bound in Theorem \ref{app of dcm} is nearly optimal in this case.

\begin{theorem}\label{dcm lower bound}
If $1\le p\le \infty$, $\underline{s}_m >1/q_m$ for $m\in [1:M]$ and $d_1\ge d_2 \ge \dots \ge d_M$, then there exists $f\in \cF_{\deep}$ such that 
\[
\inf_{g\in \NN(W,L)} \|f-g\|_{L^p([0,1]^d)} \ge C (W^2 L^2\log (WL) \land W^3L^2)^{-s^{**}},
\]
for $WL\ge 2$ and some constant $C$ depending on $(\Bs_m,q_m,d_m)_{m=1}^M$.
\end{theorem}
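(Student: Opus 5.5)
The plan is the standard lower-bound route of \citet{siegel2023optimal}: combine a pseudo-dimension bound for $\NN(W,L)$ with a packing of bump functions inside $\cF_{\deep}$. Since $d_1\ge\dots\ge d_M$, a hard subclass can be embedded into the composition model by letting the inner layers act as smooth coordinate maps that preserve a bump family built at a single chosen layer $m$.

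\textbf{Step 1 (pseudo-dimension).} We use the estimate (\ref{Pdim of NN}), namely $\Pdim(\NN(W,L))\lesssim W^2L^2\log(WL)\land W^3L^2=:N$. A standard argument, as in \citet{siegel2023optimal} and used for Theorem \ref{anisotropic lower bound}, then gives the following. Suppose there are disjoint anisotropic boxes $Q_1,\dots,Q_K$ with $K\gtrsim N$, and bumps $\phi_i$ supported in $Q_i$ such that every signed sum $\sum_i\epsilon_i\phi_i$, $\epsilon_i\in\{\pm1\}$, lies in $\cF_{\deep}$. Then some sign pattern yields an $f$ whose $L^p$ error is at least a constant times the typical bump contribution. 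The averaging over random signs, which converts $L^1$ error to $L^p$ error, is done exactly as in the anisotropic case.

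\textbf{Step 2 (construction at layer $m$).} Fix $m$ attaining the minimum in (\ref{smoothness of dcm lower}).

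\emph{Layers $1,\dots,m-1$.} Let $f_k(\Bx)=(x_1,\dots,x_{d_{k+1}})$ be the coordinate projection, suitably scaled to have small norm, so that $x\mapsto(x_1,\dots,x_{d_m})$ carries the uniform measure on $[0,1]^d$ to the uniform measure on $[0,1]^{d_m}$. This is where $d_1\ge\dots\ge d_m$ is used.

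\emph{Layer $m$.} Take $h=\sum_i\epsilon_i\phi_i$ on $[0,1]^{d_m}$, using an anisotropic grid with side lengths $\delta^{1/s_{m,j}}$. The number of cells is $K\asymp\delta^{-1/\tilde s_m}$. The bumps $\phi_i=\delta\,\psi(\text{rescaled})$ have $\|h\|_{\cB^{\Bs_m}_{q_m}}\lesssim1$, because for disjoint bumps with amplitude $\delta$ the $q$-norm scales correctly with every $s_{m,j}$. The output is shifted into $[0,1]$, and its remaining coordinates are taken to be projections.

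\emph{Layers $k>m$.} Each layer must transmit an amplitude $t$ as $t^{\widetilde\gamma_k}$. Use a one-dimensional function $g_k(t)=c\,|t-1/2|^{\widetilde\gamma_k}$, or a smoothed version when $\widetilde\gamma_k=1$, in the first coordinate. Its $\cB^{\Bs_k}_{q_k}$ norm is finite as long as $\widetilde\gamma_k<\underline s_k-1/q_k$ in the relevant direction. To reach the endpoint $(\underline s_k-1/q_k)$ exactly, instead place a localized bump of width $\sim t$ and height $t^{\underline s_k-1/q_k}$, chosen along the direction attaining $\underline s_k$; this is the usual extremal example for the Besov–Hölder embedding.

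With these choices the final function has $K$ disjoint bumps of height $\delta^{\prod_{k>m}\widetilde\gamma_k}$.

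\textbf{Step 3 (balance).} Set $K\asymp N$, i.e.\ $\delta\asymp N^{-\tilde s_m}$. The $L^p$ error is then at least of order the bump height, $\delta^{\prod_{k>m}\widetilde\gamma_k}=N^{-s^{**}}$, because the bumps fill a constant fraction of the cube.

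\textbf{Main obstacle.} The hardest step is designing the outer layers so that they genuinely amplify small perturbations by the exponent $\widetilde\gamma_k$, uniformly across all the bumps, while keeping the Besov norm at most $1$ and the range inside $[0,1]$. The difficulty is that the inner perturbations have varying sign and small amplitude. I would first try composing with $t\mapsto |t-a|^{\widetilde\gamma_k}$ with all bumps positive, which places the sign information in which bumps are present rather than in their sign. The Step 1 argument still works with $\{0,1\}$ patterns, and the norm of $|t-a|^{\gamma}$ in $\cB^{\gamma+1/q}_{q,\infty}$ is finite, matching $\widetilde\gamma_k=\underline s_k-1/q_k$.
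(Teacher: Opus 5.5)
Once you adopt the resolution in your last paragraph, your construction is exactly the paper's: nonnegative bumps with $\{0,1\}$ patterns on the anisotropic grid of the minimizing layer $m^*$, coordinate projections before that layer, and $c_k x_1^{\widetilde\gamma_k}$ in the minimal-smoothness coordinate after it, justified by $t^{\gamma}\in\cB^{\gamma+1/q}_{q,\infty}([0,1])$; you should drop the earlier signed version with $|t-1/2|^{\widetilde\gamma_k}$ and the ``bump of width $\sim t$'' variant, since the absolute value erases the sign information. For the counting step the paper reuses its anisotropic argument in $L^1$ (a Varshamov--Gilbert packing of size $2^{K/4}$, truncation at the bump height, and the covering bound (\ref{metric bound by pdim}) combined with (\ref{Pdim of NN})) instead of your Siegel-style sign-averaging/VC argument, but either route gives the same $(W^2L^2\log(WL)\land W^3L^2)^{-s^{**}}$ bound.
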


\section{Nonparametric regression}\label{sec: learning}

In this section, we illustrate how to apply our approximation results to derive convergence rates for machine learning algorithms. We are going to derive new learning rates for the least squares estimator in the classical nonparametric regression setting, which is a hot topic in recent research \citep{schmidthieber2020nonparametric,suzuki2019adaptivity,nakada2020adaptive,kohler2021rate,chen2022nonparametric,jiao2023deep}. Although we only consider regression problem in this paper, we remark that similar analysis can be applied to study other learning problems, such as classification \citep{kim2021fast,yang2025rates} and distribution learning by diffusion modeling \citep{oko2023diffusion}.

Suppose we have a dataset of $n\ge 2$ samples $\cD_n = \{(X_i,Y_i)\}_{i=1}^n$, which are independent and identically distributed as a $\bR^d \times \bR$-valued random vector $(X,Y)$. We use $\mu$ to denote the marginal distribution of the covariate $X$. Throughout this section, we assume that $\mu$ is supported on $[0,1]^d$ and there exists a constant $c>0$ such that
\begin{equation}\label{noise condition}
\bE\left[e^{cY^2}\right]<\infty.
\end{equation}
Note that this condition is weaker than the sub-Gaussian condition by \citet[Theorem 2.6 (IV)]{wainwright2019high}. In nonparametric regression, our goal is to estimate the so-called regression function $f(x) = \bE[Y|X=x]$ from the observed data $\cD_n$ without assuming any parametric form of $f$. One of the most popular estimators is the least squares
\begin{equation}\label{least squares}
\widehat{h}_n \in \argmin_{h\in \cH} \frac{1}{n} \sum_{i=1}^n (h(X_i)- Y_i)^2,
\end{equation}
where $\cH$ is a suitably chosen hypothesis class. For simplicity, we assume here and in the sequel that the minimum above indeed exists. In deep learning, the function class $\cH$ is parameterized by deep neural networks. Here, we study the case that $\cH =\NN(W,L)$, where the width $W$ and depth $L$ should depend on the sample size $n$ so that we can obtain convergence rate for the estimator $\widehat{h}_n$. The performance of the estimation is measured by the expected risk
\[
\cL(\widehat{h}_n) := \bE_{(X,Y)} [(\widehat{h}_n(X)-Y)^2],
\]
which is equivalent to evaluating the estimator by the excess risk
\[
\|\widehat{h}_n - f\|_{L^2(\mu)}^2 = \cL(\widehat{h}_n) - \cL(f).
\]

In the statistical analysis of learning algorithms, we often require that the hypothesis class is uniformly bounded. To achieve this, we define the truncation operator $\cT_B$ with level $B>0$ for real-valued functions $h$ as
\begin{equation}\label{trancation}
\cT_Bh(\Bx) := 
\begin{cases}
h(\Bx) &\quad \mbox{if }|h(\Bx)|\le B, \\
\sgn(h(\Bx)) B &\quad \mbox{if } |h(\Bx)|> B.
\end{cases}
\end{equation}
For a function class $\cH$ containing real-valued functions, we denote $\cT_B \cH := \{\cT_Bh: h\in \cH\}$ for convenience. In the following, we give convergence rates for the truncated least squares estimator $\cT_{B_n} \widehat{h}_n$, where $B_n = c\log n$ for some constant $c>0$. Note that the hypothesis class in the optimization problem (\ref{least squares}) has not been truncated and we only apply truncation when we evaluate the estimator since truncation does not increase the excess risk.

In order to get any learning rate, it is necessary to make some prior assumption on the regression function $f$. Typically, one assume that $f\in \cF$ for certain smooth function class $\cF$. We will analyze the case that $\cF$ is the anisotropic Besov space, or the mixed smooth Besov space, or the deep composition model so that our approximation results can be applied. All proofs in this section are given in Section \ref{sec: proofs of learning}.

For anisotropic Besov spaces, we prove the following convergence result.

\begin{theorem}\label{learn anisotropic}
Suppose $\Bs\in (0,\infty)^d$ and $1\le q\le \infty$ satisfy $\tilde{s}>1/q$. Assume that the data distribution satisfies (\ref{noise condition}) and the regression function $f\in \cB^\Bs_q([0,1]^d)$ with $\|f\|_{\cB^\Bs_q([0,1]^d)}\le 1$ and $\|f\|_{L^\infty([0,1]^d)} \le B$ for some constant $B\ge 1$. Let $\widehat{h}_n$ be the least squares estimator (\ref{least squares}) with $\cH=\NN(W_n,L_n)$, where $W_n$ and $L_n$ are sufficiently large so that Theorem \ref{app of anisotropic} can be applied with $p=\infty$. If $B_n = c\log n$ for some constant $c>0$ and 
\[
W_nL_n \asymp n^{\frac{1}{4\tilde{s}+2}} (\log n)^{-\frac{2}{2\tilde{s}+1}},
\]
then we have
\[
\bE_{\cD_n} \|\cT_{B_n}\widehat{h}_n-f\|_{L^2(\mu)}^2 \le C n^{-\frac{2\tilde{s}}{2\tilde{s}+1}} (\log n)^{\frac{8\tilde{s}}{2\tilde{s}+1}},
\]
where  $\bE_{\cD_n}$ indicates the expectation with respect to the training data $\cD_n$ and $C$ is a constant independent of the regression function $f$ and the sample size $n$.
\end{theorem}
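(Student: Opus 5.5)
The proof will follow the standard bias--variance decomposition for truncated least squares estimators over ReLU networks, combining the approximation bound of Theorem \ref{app of anisotropic} (with $p=\infty$) and a pseudo-dimension bound for $\NN(W,L)$. Concretely, we will use an oracle inequality of the following form (as in \citet{yang2025optimal} or Györfi et al.): under the moment condition (\ref{noise condition}), with $B_n=c\log n$ and $\cH=\NN(W_n,L_n)$,
\[
\bE_{\cD_n}\|\cT_{B_n}\widehat{h}_n-f\|_{L^2(\mu)}^2 \lesssim \frac{B_n^2\,\Pdim(\NN(W_n,L_n))\log n}{n} + \inf_{h\in \NN(W_n,L_n)}\|h-f\|_{L^\infty([0,1]^d)}^2 + \frac{\log n}{n}.
\]
This is stated in a form which needs no boundedness of $Y$, only (\ref{noise condition}). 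The truncation of $Y$ at level $\asymp\sqrt{\log n}$ contributes an error of order $\exp(-cB_n^2)$, which is negligible. The remaining bounded problem is handled by a covering-number/pseudo-dimension argument for the class $\cT_{B_n}\NN(W_n,L_n)$.

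The first key step is the variance term. By \citet{bartlett2019nearly} (inequality (\ref{Pdim of NN}) in the paper), $\Pdim(\NN(W,L))\lesssim W^2L^2\log(WL)$. With $W_nL_n\asymp n^{1/(4\tilde{s}+2)}(\log n)^{-2/(2\tilde{s}+1)}$, the variance term becomes
\[
\frac{(\log n)^2\,(W_nL_n)^2\log n\cdot\log n}{n} \asymp n^{-\frac{2\tilde{s}}{2\tilde{s}+1}}(\log n)^{4-\frac{4}{2\tilde{s}+1}} = n^{-\frac{2\tilde{s}}{2\tilde{s}+1}}(\log n)^{\frac{8\tilde{s}}{2\tilde{s}+1}},
\]
using $\log(W_nL_n)\lesssim\log n$. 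The second key step is the bias term. Since $\tilde{s}>1/q=1/q-1/\infty$, Theorem \ref{app of anisotropic} applies with $p=\infty$ and gives the bound $(W_nL_n)^{-4\tilde{s}}\asymp n^{-\frac{2\tilde{s}}{2\tilde{s}+1}}(\log n)^{\frac{8\tilde{s}}{2\tilde{s}+1}}$, which is the same order. The rate choice is exactly the balance of these two terms.

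The main obstacle is the oracle inequality with unbounded noise and uniform $L^\infty$ bound on $f$ only. The approach is to truncate $Y$ at $\beta_n\asymp\sqrt{\log n}$ and compare the least-squares risks with and without truncation, bounding the difference via $\bE[Y^2\mathbbm{1}_{|Y|>\beta_n}]\lesssim e^{-c\beta_n^2/2}$, as in Theorem 11.4 of Györfi et al. We then need the approximant to have sup norm $\le B$ after truncation, which holds because $\cT_B$ is $1$-Lipschitz and $\|f\|_\infty\le B$. If the paper's Section \ref{sec: proofs of learning} supplies a general lemma of this type, it is cited directly; otherwise we prove it along the lines just described, with the constant independent of $f$ since only $B$ and the Besov-norm bound enter.
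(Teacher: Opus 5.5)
Your proposal is correct and takes essentially the paper's route. The paper cites the oracle inequality of \citet{kohler2021rate} (Lemma \ref{error decomp}), bounds the covering number of $\cT_{B_n}\NN(W_n,L_n)$ by $\Pdim\cdot\log(nB_n^2)$ via (\ref{Pdim of NN}) and (\ref{metric bound by pdim}) to get $\cE_{gen}\lesssim W_n^2L_n^2\log(W_nL_n)(\log n)^3/n$ (the same order as your $B_n^2\Pdim\log n/n$), and balances this against $(W_nL_n)^{-4\tilde{s}}$ from Theorem \ref{app of anisotropic} with $p=\infty$ exactly as you do; the only difference is that you sketch how the oracle inequality might be proved, whereas the paper simply invokes it.
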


\begin{remark}
If $\mu$ is absolutely continuous with respect to the Lebesgue measure with density $p_X$ which satisfies $0\le p_X(x)\le C<\infty$ on $[0,1]^d$, then we can relax the condition $\tilde{s}>1/q$ to $\tilde{s}>1/q-1/2$. Because, in this case, for any $f,g\in L^2([0,1]^d)$,
\[
\|f-g\|_{L^2(\mu)} \le C\|f-g\|_{L^2([0,1]^d)}.
\]
Hence, we can apply Theorem \ref{app of anisotropic} with $p=2$ rather than $p=\infty$ to estimate the approximation error in $L^2(\mu)$. We have similar remark for Theorem \ref{learn mixed} below.
\end{remark}

It is well-known that $n^{-2s/(2s+d)}$ is the minimax optimal rate for learning isotropic Besov space with smoothness $s$ \citep{donoho1998minimax,stone1982optimal}. \citet{suzuki2021deep} established similar minimax rate for anisotropic Besov spaces. Specifically, they showed that, when $\tilde{s}>1/q-1/2$ and $\mu$ is the uniform distribution,
\[
\inf_{\widehat{h}_n} \sup_{\|f\|_{\cB^\Bs_q([0,1]^d)}\le 1} \bE_{\cD_n} \|f- \widehat{h}_n\|_{L^2(\mu)}^2 \gtrsim n^{-\frac{2\tilde{s}}{2\tilde{s}+1}},
\]
where the infimum is taken over all measurable functions $\widehat{h}_n$ of the dataset $\cD_n$. Consequently, the convergence rate in Theorem \ref{learn anisotropic} is minimax optimal up to logarithmic factors. Note that similar results have been established in \citet{schmidthieber2020nonparametric} and \citet{suzuki2021deep} for isotropic H\"older spaces and anisotropic Besov spaces respectively. However, their results rely on the sparsity of neural networks and hence one need to optimize over different network architectures to obtain the optimal rates, which is hard to implement due to the unknown locations of the non-zero parameters. \citet{kohler2021rate} and \citet{yang2025optimal} showed that fully-connected networks can also achieve the minimax rates for isotropic smooth function classes. Theorem \ref{learn anisotropic} generalizes their results to the anisotropic setting.

Similar to Theorem \ref{learn anisotropic} for anisotropic functions, we can also derive convergence rates for learning mixed smooth functions by applying Theorem \ref{app of mixed}.

\begin{theorem}\label{learn mixed}
Suppose $1\le q\le \infty$ and $s>1/q$. Assume that the data distribution satisfies (\ref{noise condition}) and the regression function $f\in \MB^s_q([0,1]^d)$ with $\|f\|_{\MB^s_q([0,1]^d)}\le 1$ and $\|f\|_{L^\infty([0,1]^d)} \le B$ for some constant $B\ge 1$. Let $\widehat{h}_n$ be the least squares estimator (\ref{least squares}) with $\cH=\NN(W_n,L_n)$, where $W_n$ and $L_n$ are sufficiently large so that Theorem \ref{app of anisotropic} can be applied with $p=\infty$. If $B_n = c\log n$ for some constant $c>0$ and 
\[
W_nL_n \asymp n^{\frac{1}{4s+2}} (\log n)^{-\frac{2}{2s+1}+2d-2},
\]
then we have
\[
\bE_{\cD_n} \|\cT_{B_n}\widehat{h}_n-f\|_{L^2(\mu)}^2 \le C n^{-\frac{2s}{2s+1}} (\log n)^{\frac{8s}{2s+1}+4d-4},
\]
where $C$ is a constant independent of the regression function $f$ and the sample size $n$.
\end{theorem}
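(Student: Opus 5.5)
The proof follows the standard oracle inequality for truncated least squares, fed with the approximation bound of Theorem \ref{app of mixed} (with $p=\infty$, allowed since $s>1/q$) and the pseudo-dimension bound for $\NN(W,L)$. The estimate to invoke, as in \citet{yang2025optimal} for the isotropic case, has the form
\[
\bE_{\cD_n} \|\cT_{B_n}\widehat{h}_n-f\|_{L^2(\mu)}^2 \lesssim \inf_{h\in\NN(W_n,L_n)} \|h-f\|_{L^\infty([0,1]^d)}^2 + \frac{B_n^2\,\Pdim(\NN(W_n,L_n))\log n}{n} + \frac{1}{n}.
\]
It holds under the moment condition (\ref{noise condition}) and $\|f\|_{L^\infty}\le B$. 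The argument truncates $Y$ at level $\asymp\sqrt{\log n}$ and controls the tail of $Y$ via $\bE e^{cY^2}<\infty$; this is the reason $B_n=c\log n$ is used. The same oracle inequality is used for Theorem \ref{learn anisotropic}, so I take it as already available or prove it once in Section \ref{sec: proofs of learning}.

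For the approximation term, Theorem \ref{app of mixed} gives $\inf_h\|h-f\|_{L^\infty}\lesssim (W_nL_n)^{-2s}(\log W_n\log L_n)^{(d-1)(2s+1)}$. Since $W_n,L_n\le n$, we have $\log W_n\log L_n\lesssim(\log n)^2$, so the squared error is at most $(W_nL_n)^{-4s}(\log n)^{4(d-1)(2s+1)}$. For the estimation term, use $\Pdim(\NN(W,L))\lesssim W^2L^2\log(WL)$ from \citet{bartlett2019nearly}, i.e.\ inequality (\ref{Pdim of NN}). With $B_n^2\asymp(\log n)^2$ the term becomes $\lesssim (W_nL_n)^2(\log n)^4/n$.

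It remains to balance the two terms with $W_nL_n\asymp n^{1/(4s+2)}(\log n)^{a}$, where $a=-\frac{2}{2s+1}+2d-2$.

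\textbf{Estimation term.} Compute $(W_nL_n)^2(\log n)^4/n = n^{-2s/(2s+1)}(\log n)^{4+2a}$. The exponent is
\[
4+2a=4-\tfrac{4}{2s+1}+4d-4=\tfrac{8s}{2s+1}+4d-4,
\]
which matches the claim.

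\textbf{Approximation term.} Compute $(W_nL_n)^{-4s} = n^{-2s/(2s+1)}(\log n)^{-4sa}$, multiplied by the log factor $(\log n)^{4(d-1)(2s+1)}$. The total log exponent is
\[
\tfrac{8s}{2s+1}-8s(d-1)+(8s+4)(d-1)=\tfrac{8s}{2s+1}+4d-4,
\]
which again matches. So both terms give $n^{-2s/(2s+1)}(\log n)^{8s/(2s+1)+4d-4}$, as claimed.

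The only genuine obstacle is the oracle inequality under the unbounded-noise condition (\ref{noise condition}) rather than bounded $Y$. Everything else is bookkeeping: the log-exponent calculation, plus the check that $W_n,L_n$ are large enough for Theorem \ref{app of mixed} (the statement assumes this). If the oracle inequality has not been established earlier, I would prove it by splitting $Y=Y\mathbbm{1}_{|Y|\le \beta_n}+Y\mathbbm{1}_{|Y|>\beta_n}$ with $\beta_n\asymp\sqrt{\log n}$. The bounded part is handled with the classical covering-number bound for truncated classes in terms of pseudo-dimension, as in Gy\"orfi et al. The unbounded part contributes $O(\log n/n)$ by Cauchy--Schwarz together with $\bE e^{cY^2}<\infty$.
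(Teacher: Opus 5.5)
Your proposal is correct and is essentially the paper's proof. The paper invokes the same truncated least-squares oracle inequality under (\ref{noise condition}) (Lemma \ref{error decomp}, cited from Kohler--Langer) and bounds the generalization term through the pseudo-dimension as $\cE_{gen}\lesssim W_n^2L_n^2\log(W_nL_n)(\log n)^3/n$. It bounds the approximation term with Theorem \ref{app of mixed} at $p=\infty$, implicitly using $\log W_n\log L_n\lesssim(\log n)^2$ as you do, and balancing the two terms gives the same log exponent $\frac{8s}{2s+1}+4d-4$. The only difference is that the paper cites the oracle inequality rather than proving it, so your sketched truncation argument for unbounded $Y$ is not needed.
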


\citet{suzuki2019adaptivity} proved the following lower bound for learning mixed smooth Besov class $\MB^s_{q,r}$ when $s>1/q-1/2$ and $\mu$ is the uniform distribution,
\[
\inf_{\widehat{h}_n} \sup_{\|f\|_{\MB^s_{q,r}([0,1]^d)}\le 1} \bE_{\cD_n} \|f- \widehat{h}_n\|_{L^2(\mu)}^2 \gtrsim n^{-\frac{2s}{2s+1}} (\log n)^{\frac{2(d-1)(s+1/2-1/r)_+}{2s+1}}.
\]
They also showed that sparse networks can achieve the minimax rate up to logarithmic factors. Theorem \ref{learn mixed} extends this result to fully-connected network and gives a precise trade-off between width and depth to obtain the minimax rate.

For the deep composition model $\cF_{\deep}$, the convergence rate is given in the following theorem.

\begin{theorem}\label{learn dcm}
Suppose $\Bs_m\in (0,\infty)^{d_m}$ satisfies $\tilde{s}_m >1/q_m$ for $m\in [1:M]$. Assume that the data distribution satisfies (\ref{noise condition}) and the regression function $f\in \cF_{\deep}$ with $\|f\|_{L^\infty([0,1]^d)} \le B$ for some constant $B\ge 1$. Let $\widehat{h}_n$ be the least squares estimator (\ref{least squares}) with $\cH=\NN(W_n,L_n)$, where $W_n$ and $L_n$ are sufficiently large so that Theorem \ref{app of anisotropic} can be applied with $p=\infty$. If $B_n = c\log n$ for some constant $c>0$ and 
\[
W_nL_n \asymp n^{\frac{1}{4s^*+2}} (\log n)^{-\frac{2}{2s^*+1}},
\]
then we have
\[
\bE_{\cD_n} \|\cT_{B_n}\widehat{h}_n-f\|_{L^2(\mu)}^2 \le C n^{-\frac{2s^*}{2s^*+1}} (\log n)^{\frac{8s^*}{2s^*+1}},
\]
where $C$ is a constant independent of the regression function $f$ and the sample size $n$.
\end{theorem}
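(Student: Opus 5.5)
The argument is the standard bias--variance decomposition for truncated least squares over a class of bounded pseudo-dimension, following the route of \citet{yang2025optimal} for the isotropic case. The only new ingredient is that the approximation error now comes from Theorem \ref{app of dcm}.

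\textbf{Step 1: oracle inequality.} The noise only satisfies (\ref{noise condition}), so we truncate $Y$ at level $B_n=c\log n$. The tail $\bE[Y^2 \mathbbm{1}_{|Y|>B_n}]$ is at most $n^{-1}$ times a polynomial in $\log n$ once $c$ is chosen large enough. With this in hand, we invoke (or re-derive) the generic bound
\[
\bE_{\cD_n}\|\cT_{B_n}\widehat h_n - f\|_{L^2(\mu)}^2 \lesssim \inf_{h\in\NN(W_n,L_n)}\|h-f\|_{L^\infty([0,1]^d)}^2 + \frac{B_n^2\,\Pdim(\NN(W_n,L_n))\log n}{n}.
\]
This follows from Gy\"orfi et al.-type covering number arguments for $\cT_{B_n}\NN$: covering numbers are controlled by $\Pdim$ via Haussler's bound.

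\textbf{Step 2: plug in the two bounds.} For the pseudo-dimension we use the estimate (\ref{Pdim of NN}), that is $\Pdim(\NN(W,L))\lesssim W^2L^2\log(WL)$. For the bias we apply Theorem \ref{app of dcm}, valid since $\tilde s_m>1/q_m$ and $W_n,L_n$ are large; it gives the squared error $(W_nL_n)^{-4s^*}$. The assumption $\|f\|_{L^\infty}\le B$ together with $B_n\ge B$ for large $n$ ensures that truncation does not hurt the comparison function.

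\textbf{Step 3: balance.} The bound becomes
\[
(W_nL_n)^{-4s^*} + \frac{(\log n)^3 (W_nL_n)^2}{n}.
\]
Setting $W_nL_n\asymp n^{1/(4s^*+2)}(\log n)^{-2/(2s^*+1)}$, the first term is
\[
n^{-2s^*/(2s^*+1)}(\log n)^{8s^*/(2s^*+1)}.
\]
The second term is
\[
n^{-2s^*/(2s^*+1)}(\log n)^{3-4/(2s^*+1)}=n^{-2s^*/(2s^*+1)}(\log n)^{(6s^*-1)/(2s^*+1)},
\]
which is dominated by the first. This yields the stated rate.

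\textbf{Main obstacle.} The main obstacle is Step 1: handling unbounded $Y$ under only the exponential-moment condition, and making sure the resulting logarithmic powers match exactly. To do this I would mirror the proof of Theorem \ref{learn anisotropic}, which has the identical structure; the present theorem is then that proof with $\tilde s$ replaced by $s^*$.
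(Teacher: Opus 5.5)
This is correct and is essentially the paper's proof: Lemma \ref{error decomp} (Kohler--Langer), with the covering number bounded through the pseudo-dimension, gives exactly your oracle inequality. Theorem \ref{app of dcm} then supplies the $(W_nL_n)^{-4s^*}$ bias, and the balancing is the same. One arithmetic slip: since $B_n^2\asymp(\log n)^2$ and (\ref{Pdim of NN}) carries a factor $\log(W_nL_n)\asymp\log n$, your variance term is $(\log n)^4(W_nL_n)^2/n$, not $(\log n)^3(W_nL_n)^2/n$. With the prescribed $W_nL_n$ it therefore equals $n^{-2s^*/(2s^*+1)}(\log n)^{8s^*/(2s^*+1)}$ exactly (this is what the exponent $-2/(2s^*+1)$ is tuned for) rather than being dominated by the bias, which leaves the conclusion unchanged.
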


The learning rate of the deep composition model was first studied by \citet{schmidthieber2020nonparametric}, who established the minimax rate (up to logarithmic factors) for learning the composition of H\"older functions by sparse networks. \citet{kohler2021rate} showed that the same learning rate also holds for fully-connected networks. \citet{suzuki2021deep} extended the analysis of \citet{schmidthieber2020nonparametric} to learning the composition of anisotropic Besov functions using sparse networks, and derived the following lower bound for the convergence rate
\[
\inf_{\widehat{h}_n} \sup_{f\in \cF_{\deep}} \bE_{\cD_n} \|f- \widehat{h}_n\|_{L^2(\mu)}^2 \gtrsim n^{-\frac{2s^{**}}{2s^{**}+1}},
\]
where $s^{**}$ is defined by (\ref{smoothness of dcm lower}). Theorem \ref{learn dcm} further extends the result of \citet{suzuki2021deep} to fully-connected networks. Note that, when all $q_m=\infty$, $s^*$ can be arbitrarily close to $s^{**}$. Hence, we essentially recover the results of \citet{schmidthieber2020nonparametric,kohler2021rate}. But, in the general case, we do not get the minimax rate and we leave this as an interesting problem for future study.

\section{Proofs of approximation results}\label{sec: proofs of app}

Throughout this section, we let $b\ge 2$ be a fixed integer unless otherwise specified and suppress the dependence on $b$ to avoid superfluous notations. We denote $\Omega= [0,1)^d$ and notice that it is enough to establish our approximation results on $\Omega$. For any $\Bll =(\ell_1,\dots, \ell_d) \in \bN_0^d$, we can partition $\Omega$ into disjoint rectangles as follows:
\begin{equation}\label{partition}
\Omega = \bigcup_{\Bi\in I_\Bll} \Omega_{\Bll,\Bi}, \quad \Omega_{\Bll,\Bi} := \prod_{j=1}^d \left[ b^{-\ell_j} i_j, b^{-\ell_j} (i_j+1)\right),
\end{equation}
where the $d$-dimensional multi-index $\Bi$ is in the index set 
\[
I_\Bll:= \left\{ \Bi=(i_1,\dots,i_d): i_j\in \left[0:b^{\ell_j}-1\right] ,j\in [1:d] \right\}.
\]
Observing that the volume of each rectangle is $|\Omega_{\Bll,\Bi}| = b^{-|\Bll|}$, we have $|I_\Bll|=b^{|\Bll|}$ rectangles in this partition. For convenience, we will often transform the multi-index $\Bi \in I_\Bll$ to an index in $[0:b^{|\Bll|}-1]$ by the following function
\begin{equation}\label{index}
\ind_\Bll(\Bi) := \sum_{j=1}^d \left(\prod_{k=1}^{j-1} b^{\ell_k}\right) i_j,\quad \Bi \in I_\Bll.
\end{equation}

For $\Bk=(k_1,\dots,k_d)\in \bN_0^d$, we denote the linear space of polynomials with coordinate degree at most $\Bk$ by
\[
\cP_{\Bk} := \Span \left\{\Bx^\Br: \Br \in \bN_0^d, \Br \le \Bk \right\}.
\]
The space of piecewise polynomials with degree at most $\Bk$ subordinate to the partition (\ref{partition}) is denoted by 
\[
\cP_{\Bll, \Bk} := \left\{ f:\Omega \to \bR, f|_{\Omega_{\Bll,\Bi}} \in \cP_\Bk \mbox{ for all } \Bi\in I_\Bll \right\}.
\]
When $\Bk=(k,\dots,k)$, we denote $\cP_k := \cP_{\Bk}$ and $\cP_{\Bll, k} :=\cP_{\Bll, \Bk}$ for simplicity. Note that the space $\cP_{\Bll, \Bk}$ has a natural basis
\begin{equation}\label{basis}
\rho_{\Bll,\Bi}^\Bgamma(\Bx) := 
\begin{cases}
\prod_{j=1}^d (b^{\ell_j}x_j-i_j)^{\gamma_j},&  \Bx\in \Omega_{\Bll,\Bi},\\
0,& \Bx\notin \Omega_{\Bll,\Bi},
\end{cases}
\end{equation}
where $\Bgamma=(\gamma_1,\dots,\gamma_d)\in \bN_0^d$ is a multi-index with $\Bgamma \le \Bk$. Thus, the space $\cP_{\Bll, \Bk}$ is of dimension $ b^{|\Bll|} \prod_{j=1}^d (k_j+1)$.

We will approximate Besov functions by piecewise polynomials and construct neural networks to approximate these piecewise polynomials. However, since deep ReLU neural networks can only represent continuous piecewise linear functions, it is quite difficult to directly construct neural networks to approximate piecewise polynomials on the boundary of the partition (\ref{partition}) due to the possible discontinuity of these functions. To overcome this difficulty, we remove an arbitrarily small region (called trifling region) from $\Omega$, as suggested by \citep{lu2021deep,shen2020deep,shen2022optimal,siegel2023optimal,yang2025optimal}. Specially, given $\epsilon>0$, we define
\begin{equation}\label{good region}
\Omega_{\Bll,\epsilon}:= \bigcup_{\Bi\in I_\Bll} \Omega_{\Bll,\Bi,\epsilon},\quad \Omega_{\Bll,\Bi,\epsilon} := \prod_{j=1}^d
\begin{cases}
[b^{-\ell_j} i_j, b^{-\ell_j} (i_j+1)-\epsilon), & i_j<b^{\ell_j} -1,\\
[1-b^{-\ell_j}, 1), & i_j=b^{\ell_j} -1.
\end{cases} 
\end{equation}
Note that $\Omega_{\Bll,\epsilon}$ is nonempty when $\epsilon<\min_{j\in [1:d]} b^{-\ell_j}$. We will first approximate the target function on $\Omega_{\Bll,\epsilon}$ and then remove the trifling region by using the method from \citep{siegel2023optimal,yang2025optimal}.

\subsection{Preliminary constructions of neural networks}

In this section, we collect some useful constructions of neural networks. Similar results are widely used in the literature \citep{yarotsky2017error,yarotsky2018optimal,devore2021neural,lu2021deep,shen2022optimal,siegel2023optimal,yang2025optimal} and we will apply them as building blocks to derive approximation bounds for smooth functions. The following proposition gives basic properties of the function classes parameterized by neural networks. The proof can be found in  \citep{jiao2023approximation,siegel2023optimal}.

\begin{proposition}\label{basic constr}
Let $f_i\in \NN_{d_i,k_i}(W_i,L_i)$ for $i\in [1:n]$.
\begin{enumerate}[label=\textnormal{(\arabic*)},parsep=0pt]
\item If $d_1=d_2$, $k_1=k_2$ and $W_1\le W_2$, $L_1\le L_2$, then $\cN\cN_{d_1,k_1}(W_1,L_1) \subseteq \cN\cN_{d_2,k_2}(W_2,L_2)$.

\item \textnormal{\textbf{(Composition)}} If $k_1 = d_2$, then $f_2 \circ f_1 \in \cN\cN_{d_1,k_2}(\max\{W_1,W_2\},L_1+L_2)$. The result also holds when $f_1$ or $f_2$ is affine, if we view affine maps as neural networks with width $W=0$ and depth $L=0$.

\item \textnormal{\textbf{(Concatenation)}} If $d_1=d_2$, define $f(x):=(f_1(x),f_2(x))^\top$, then $f\in \cN\cN_{d_1,k_1+k_2}(W_1+W_2,\max\{L_1,L_2\})$.

\item \textnormal{\textbf{(Summation)}} If $d_i=d$ and $k_i=k$ for all $i=[1:n]$, then 
\[
\sum_{i=1}^n f_i \in \cN\cN_{d,k} \left(\sum_{i=1}^n W_i,
\max_{1\le i\le n} L_i\right) \cap \NN_{d,k}\left(\max_{1\le i\le n} W_i+2d+2k,\sum_{i=1}^n L_i\right).
\]
\end{enumerate}
\end{proposition}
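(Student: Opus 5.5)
We prove the four parts by writing out the weight matrices explicitly. Throughout we use two facts. First, the identity on $\bR^m$ is realized by the ReLU network $\Bx\mapsto \sigma(\Bx)-\sigma(-\Bx)$, which has width $2m$ and any depth. Second, if a network has width $W$, we may pad its hidden layers with zero neurons (zero rows and columns in $A_\ell$, zero entries in $b_\ell$) without changing the function it computes.

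\textbf{Parts (1) and (2).} For part (1), first pad every hidden layer to width $W_2$. To increase the depth from $L_1$ to $L_2$, we insert extra hidden layers that pass a quantity forward unchanged. Note that after the last ReLU layer the hidden vector $g_{L_1}(\Bx)$ is nonnegative. Hence each inserted layer can be $\sigma(I\,\cdot+0)$, which is the identity on nonnegative vectors, so no doubling of width is needed and the width stays at most $W_2$. The affine output map is then applied at the very end.

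For part (2), denote the output map of $f_1$ by $A_{L_1}^{(1)}\cdot+b_{L_1}^{(1)}$ and the input layer of $f_2$ by $\sigma(A_0^{(2)}\cdot+b_0^{(2)})$. We merge them into the single layer
\[
\sigma\bigl(A_0^{(2)}A_{L_1}^{(1)}g_{L_1}+A_0^{(2)}b_{L_1}^{(1)}+b_0^{(2)}\bigr).
\]
This gives exactly $L_1+L_2$ hidden layers of width at most $\max\{W_1,W_2\}$. When $f_1$ or $f_2$ is affine, the affine map is simply absorbed into the adjacent matrix, and the depth is not increased.

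\textbf{Part (3).} First equalize the two depths using part (1), so both networks have depth $\max\{L_1,L_2\}$. Then stack the first-layer matrices vertically, use block-diagonal matrices in all later layers, and take the block-diagonal output map. The resulting width is $W_1+W_2$.

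\textbf{Part (4).} The first inclusion follows by iterating part (3) to get the concatenated map $(f_1,\dots,f_n)$, then adding the fixed linear map $(\By_1,\dots,\By_n)\mapsto\sum_i \By_i$ at the output via part (2).

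The second inclusion is a serial construction, and it is the only nontrivial step. The network keeps three channels in each hidden layer:
\begin{itemize}
\item a copy of the input $\Bx$, stored as the pair $(\sigma(\Bx),\sigma(-\Bx))$, which uses width $2d$;
\item a running partial sum $S_j=\sum_{i\le j}f_i(\Bx)\in\bR^k$, stored as the pair $(\sigma(S_j),\sigma(-S_j))$, which uses width $2k$;
\item the hidden layers of the network $f_{j+1}$ currently being computed, which use width at most $\max_i W_i$.
\end{itemize}

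The block for $f_{j+1}$ occupies $L_{j+1}$ layers. Its first layer reads $\Bx$ from the first channel. Its output map is affine in its last hidden layer, so it is folded into the next layer's update of the second channel: we set $S_{j+1}=S_j+f_{j+1}(\Bx)$ and store it as $(\sigma(S_{j+1}),\sigma(-S_{j+1}))$. In the same layer, the input channel continues unchanged and the first hidden layer of $f_{j+2}$ begins.

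The main bookkeeping point is this handoff. The output layer of $f_{j+1}$ and the first layer of $f_{j+2}$ must be merged in the same way as in part (2), so that no extra depth is used. Counting layers, the total depth is then $\sum_i L_i$ and the total width is $\max_i W_i+2d+2k$. At the end we output $S_n=\sigma(S_n)-\sigma(-S_n)$ plus the final affine contribution of $f_n$. The case $n=1$ is trivial.
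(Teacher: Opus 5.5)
Your proof is correct and follows the approach the paper relies on: the paper cites references for the proof rather than proving it, and the remark right after the proposition describes exactly your parallel construction and your sequential construction, with $2d$ neurons memorizing the input and $2k$ neurons memorizing the partial sum via $x=\sigma(x)-\sigma(-x)$. There is one small index slip at the end of Part (4): at the final layer the stored channel holds $S_{n-1}$, not $S_n$, so the output is $\sigma(S_{n-1})-\sigma(-S_{n-1})$ plus the output map of $f_n$ applied to its last hidden layer, and this sum equals $S_n$.
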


It is useful to notice that there are two ways to construct a neural network that implements the sum of a collection of small networks, as shown by Part (4) of Proposition \ref{basic constr}. The first way is to concatenate the small networks in \emph{parallel} and compute the sum in the output layer. The second way is to compute the sum in \emph{sequential}, where we use $2d$ neurons to memorize the input and $2k$ neurons to memorize the partial sum (since $x=\sigma(x)-\sigma(-x)$ for any $x\in \bR$). In most cases, it is enough to use the first construction in the rest of the paper. When it is necessary to use the second construction, we will mention it explicitly. The next corollary gives a simple example of how to use these two constructions, which already indicates the trade-off between width and depth.

\begin{corollary}\label{basi constr co}
Suppose $\alpha,\beta \in \bN$ and $f_i\in \NN_{d,k}(W_i,L_i)$ with $W_i\le W$ and $L_i\le L$ for $i\in [1:\alpha \beta]$. Then, $f=\sum_{i=1}^{\alpha \beta} f_i \in \NN_{d,k}(\alpha W+2d+2k,\beta L)$.
\end{corollary}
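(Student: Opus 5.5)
Group the $\alpha\beta$ networks into $\beta$ blocks of $\alpha$ networks each and apply the two constructions of Part (4) of Proposition \ref{basic constr} in succession. For $t\in[1:\beta]$ set
\[
F_t := \sum_{i=(t-1)\alpha+1}^{t\alpha} f_i .
\]
First, by Part (1) of Proposition \ref{basic constr}, each $f_i$ lies in $\NN_{d,k}(W,L)$, since $W_i\le W$ and $L_i\le L$. Then the parallel construction (the first inclusion in Part (4)) gives $F_t\in \NN_{d,k}(\alpha W, L)$, because the widths add up to at most $\alpha W$ and the depth is the maximum, which is at most $L$.

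Next, write $f=\sum_{t=1}^{\beta} F_t$ and apply the sequential construction (the second inclusion in Part (4)) to the $\beta$ networks $F_t$. This gives
\[
f\in \NN_{d,k}\Bigl(\max_t(\alpha W)+2d+2k,\ \sum_{t=1}^\beta L\Bigr) = \NN_{d,k}(\alpha W+2d+2k,\beta L),
\]
which is the claim.

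No step is a real obstacle; this is bookkeeping. The only point to check is the first step: the parallel construction requires all summands to share the same input and output dimensions $d$ and $k$. This holds by hypothesis. Networks of different depths are handled inside Proposition \ref{basic constr} itself, since they are padded to a common depth, for instance by composing with identity layers written as $x=\sigma(x)-\sigma(-x)$.
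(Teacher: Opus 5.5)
Your proposal is correct and is essentially the paper's proof: split the $\alpha\beta$ summands into $\beta$ blocks of $\alpha$, sum each block in parallel to get networks in $\NN_{d,k}(\alpha W, L)$, then sum the $\beta$ block networks sequentially using the second inclusion of Proposition \ref{basic constr} Part (4). Your preliminary appeal to Part (1) is harmless but unnecessary, since the parallel bound already only needs $\sum_i W_i \le \alpha W$ and $\max_i L_i \le L$.
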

\begin{proof}
For each $j\in [0:\beta -1]$, we have
\[
g_j := \sum_{i=1}^{\alpha} f_{j\alpha +i} \in \NN_{d,k}(\alpha W, L),
\]
by using Proposition \ref{basic constr} Part (4) in a parallel way since $\sum_{i=1}^\alpha W_{j\alpha +i} \le \alpha W$. As a consequence, $f=\sum_{j=0}^{\beta-1} g_j \in \NN_{d,k}(\alpha W+2d+2k,\beta L)$ by using Proposition \ref{basic constr} Part (4) in a sequential way.
\end{proof}

Next proposition shows that we can approximately compute the index (\ref{index}) of an input $\Bx \in \Omega_{\Bll,\Bi}$ by neural networks. Hence, we can roughly discretize the input domain.

\begin{proposition}\label{app partition}
Let $\Bll \in \bN_0^d$ and $0<\epsilon<\min_{j\in [1:d]} b^{-\ell_j}$. Then, for any $L\in \bN$, there exists $\phi_\Bll \in \NN_{d,1}(2\sum_{j=1}^d b^{\lceil \ell_j/L \rceil},L)$ such that
\[
\phi_\Bll(\Bx) = \ind_\Bll(\Bi),\quad \forall \Bx\in \Omega_{\Bll,\Bi,\epsilon},
\]
where $\ind_{\Bll}(\Bi)$ is defined by (\ref{index}) and $\Omega_{\Bll,\Bi,\epsilon}$ is defined by (\ref{good region}).
\end{proposition}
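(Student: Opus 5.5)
The network will be built one coordinate at a time and then combined linearly. On $\Omega_{\Bll,\Bi,\epsilon}$ the $j$-th coordinate satisfies $x_j\in[b^{-\ell_j}i_j,b^{-\ell_j}(i_j+1)-\epsilon)$ when $i_j<b^{\ell_j}-1$, and $x_j\in[1-b^{-\ell_j},1)$ otherwise. So it is enough to find, for each $j$, a univariate network $\psi_j\in\NN_{1,1}(2b^{\lceil \ell_j/L\rceil},L)$ with $\psi_j(x_j)=i_j$ on these intervals. We then set $\phi_\Bll(\Bx)=\sum_{j=1}^d\big(\prod_{k<j}b^{\ell_k}\big)\psi_j(x_j)$. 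Concatenating the $d$ networks in parallel (Proposition \ref{basic constr}, Parts (3) and (4)) gives width $2\sum_j b^{\lceil\ell_j/L\rceil}$ and depth $L$. The weighted sum is absorbed into the final affine layer, and each $\psi_j$ reads only its own coordinate through the first-layer matrix.

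For the univariate step, write $m=\ell_j$ and $n=\lceil m/L\rceil$, so $m\le nL$. The goal is a depth-$L$ network that computes $\lfloor b^{m}x\rfloor$ on the good set. I would extract the base-$b$ digits in blocks of at most $n$ digits per layer. A continuous piecewise linear step function with $b^{n}$ steps, each transition ramping over a width below the gap $\epsilon$, uses about $b^{n}$ breakpoints and can be realized by roughly $2b^{n}$ ReLU units (pairs $\sigma(t-a)-\sigma(t-a-\delta)$). Applied to the residual $y_k=b^{n_k}y_{k-1}-\text{digit}$, it produces the next block of digits. In each layer we carry the running partial index and the current residual; both are nonnegative, so a ReLU passes them through unchanged. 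The final affine output layer then reads off $i_j=\sum_k(\text{block}_k)\,b^{\text{remaining}}$. Choose block sizes $n_1+\dots+n_L=m$ with each $n_k\le n$, padding with $n_k=0$ layers if needed.

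The main obstacle is the bookkeeping in the ramps. The rescaled residual has its bad region stretched by a factor $b^{n_1+\dots+n_k}$, so the ramp widths $\delta_k$ must shrink accordingly. More precisely, after scaling, the good set of residuals must stay within $[0,1-b^{m'}\epsilon]$-type intervals away from block boundaries, and the last cell $i_j=b^{\ell_j}-1$ must map correctly up to $x_j<1$. I would handle this by making the final step of each staircase flat up to $+\infty$ and choosing every $\delta_k<b^{\sum_{i\le k}n_i}\epsilon$; this works since $\epsilon<b^{-\ell_j}$. The width count also needs care. A staircase with $b^{n}-1$ jumps needs about $2(b^n-1)$ units, and the two carried quantities need at most $2$ more, so the total is at most $2b^n$ per coordinate. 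If this is too tight, I would instead absorb the residual into the staircase output: compute the new residual from $y_{k-1}$ by an affine map, fold it into the next layer's input weights, and carry only the nonnegative residual.
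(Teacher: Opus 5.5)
Your proposal is correct and follows the paper's proof: reduce to the univariate case, then form $\phi_\Bll(\Bx)=\sum_{j=1}^d\bigl(\prod_{k<j}b^{\ell_k}\bigr)\psi_j(x_j)$ by parallel concatenation via Proposition \ref{basic constr}. The one difference is that the paper imports the univariate network from \citet[Lemma 4.4]{yang2025optimal}, whereas you construct it by block digit extraction, and that construction checks out. Each intermediate layer uses $2(b^{n_k}-1)$ staircase units plus two carried quantities that are nonnegative on the good set, giving exactly $2b^{n_k}\le 2b^{\lceil \ell_j/L\rceil}$ units, so your fallback in the last sentence is not needed. The ramps must sit inside the gap of width $b^{n_1+\dots+n_k}\epsilon$ measured on the rescaled variable; this gap grows rather than shrinks under rescaling, so your phrase ``must shrink'' is misworded, but the inequality you impose is the right one.
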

\begin{proof}
The one dimensional case $d=1$ has been proven by \citet[Lemma 4.4]{yang2025optimal}. Specifically, for $\ell\in \bN_0$, let 
\[
\Omega_{\ell,\epsilon}= \bigcup_{i=0}^{b^\ell-1} \Omega_{\ell,i,\epsilon},\quad \Omega_{\ell,i,\epsilon} =
\begin{cases}
[b^{-\ell} i, b^{-\ell} (i+1)-\epsilon), & i<b^{\ell} -1,\\
[1-b^{-\ell}, 1), & i=b^{\ell} -1.
\end{cases} 
\]
\citet[Lemma 4.4]{yang2025optimal} showed the existence of $\phi_\ell\in \NN_{1,1}(2b^{\lceil \ell/L \rceil},L)$ such that $\phi_\ell(x) = i$ for $x\in \Omega_{\ell,i,\epsilon}$ when $0<\epsilon<b^{-\ell}$. Observing that $\Omega_{\Bll,\Bi,\epsilon}=\prod_{j=1}^d \Omega_{\ell_j,i_j,\epsilon}$, we define 
\[
\phi_\Bll(\Bx) = \sum_{j=1}^d \left(\prod_{k=1}^{j-1} b^{\ell_k}\right) \phi_{\ell_j}(x_j).
\]
Then, $\phi_\Bll(\Bx) = \ind_\Bll(\Bi)$ for $\Bx\in \Omega_{\Bll,\Bi,\epsilon}$ by the definition of $\ind_\Bll(\Bi)$ under the condition that $0<\epsilon<\min_{j\in [1:d]} b^{-\ell_j}$. By Proposition \ref{basic constr}, it is easy to see that $\phi_\Bll \in \NN_{d,1}(2\sum_{j=1}^d b^{\lceil \ell_j/L \rceil},L)$, since $\phi_\Bll$ is a linear combination of $\phi_{\ell_j}$ for $j\in [1:d]$.
\end{proof}

In order to remove the trifling region and derive approximation bounds on the whole input domain, we will need the following technical construction from \citet[Corollary 13]{siegel2023optimal}. It shows that we can use neural networks to select any order statistic.

\begin{proposition}\label{select median}
Let $d=2^k$ for some $k\in \bN$. For each integer $1\le j\le d$, there exists $\psi_j\in \NN_{d,1}(4d,k(k+1)/2)$ such that $\psi_j(\Bx) = x_{(j)}$, where $x_{(j)}$ is the $j$-th largest entry of $\Bx\in \bR^d$.
\end{proposition}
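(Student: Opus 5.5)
The plan is to build the order statistic from a sorting network, using only the ReLU identities $\max\{a,b\} = b + \sigma(a-b)$ and $\min\{a,b\} = a - \sigma(a-b)$. A single comparator takes $(a,b)$ to $(\max\{a,b\},\min\{a,b\})$. We run comparators on $d/2$ disjoint pairs in parallel, and pass along any unpaired coordinates using $x=\sigma(x)-\sigma(-x)$. Each such layer uses one hidden layer of width at most $2d$. For each pair we need the neurons $\sigma(a-b)$, $\sigma(b)$, $\sigma(-b)$ (and similarly for $a$), so we should count carefully. A crude bound is $4$ neurons per pair, giving $2d$ per comparator layer. We then compose such layers by Proposition \ref{basic constr} Part (2). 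Outputs are affine in the hidden neurons, so the affine map of one layer can be merged into the next layer's input map. Hence each comparator stage costs exactly one hidden layer.

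Next we choose a sorting network on $d=2^k$ inputs whose depth is $k(k+1)/2$. Batcher's bitonic sorter (or odd--even merge sort) does this: it consists of $k$ merge phases, where phase $m\in[1:k]$ has $m$ parallel comparator stages. The total is $\sum_{m=1}^k m = k(k+1)/2$ stages, and each stage compares disjoint pairs.

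Composing the $k(k+1)/2$ stages gives a network in $\NN_{d,d}(2d, k(k+1)/2)$ that outputs the sorted vector $(x_{(1)},\dots,x_{(d)})$. The final output layer is affine with no activation, so we select the $j$-th coordinate there and obtain $\psi_j\in\NN_{d,1}(2d,k(k+1)/2)\subseteq \NN_{d,1}(4d,k(k+1)/2)$ by Part (1).

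The main obstacle is bookkeeping the width. Carrying values identically through layers requires the $\sigma(x),\sigma(-x)$ trick, which can double the width, and the stated bound $4d$ leaves room for this. For example, a pair can use the neurons $\sigma(a-b),\sigma(b),\sigma(-b),\sigma(a),\sigma(-a)$, at most $5$ per pair, which is $\le 4d$ overall. We also need to verify the correctness of the bitonic network, which is a standard result (0--1 principle).
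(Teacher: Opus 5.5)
Your proposal is correct and follows essentially the same route as the source: the paper gives no proof of its own and imports the statement from \citet[Corollary 13]{siegel2023optimal}, whose construction likewise realizes Batcher's sorting network of depth $k(k+1)/2$ with one ReLU hidden layer per comparator stage and reads off the $j$-th output in the final affine layer. One small bookkeeping slip: the neurons you list, $\sigma(a-b),\sigma(\pm a),\sigma(\pm b)$, cost $5$ per pair (width $5d/2$), so your intermediate width claim of $2d$ needs the four-neuron comparator in which $\max\{a,b\}$ and $\min\{a,b\}$ equal $\tfrac12\bigl[(\sigma(a+b)-\sigma(-a-b))\pm(\sigma(a-b)+\sigma(b-a))\bigr]$, though either count stays within the stated $4d$.
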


\subsection{Approximation of piecewise polynomials}

In this section, we study how well piecewise polynomials can be approximated by ReLU neural networks. Following the idea in the seminal work of \citet{yarotsky2017error}, our starting point is the approximation of the product function $(x,y) \mapsto xy$. Once we can approximate the product, we are able to approximate monomials and hence any polynomials. We remark that \citet[Proposition 3]{yarotsky2017error} constructed a ReLU network with depth $\cO(\log (1/\epsilon))$ which approximates the product function with error at most $\epsilon$. \citet[Lemma A.3]{petersen2018optimal} showed that this can be done using network with fixed depth by applying the trade-off between depth and the number of parameters. \citet[Lemma 5.2]{lu2021deep} further characterized the trade-off between width and depth by showing the existence of functions in $\NN(9W,L)$ that approximate the product with error $6W^{-L}$. We will use the following construction from \citet[Lemma 4.11]{yang2025optimal}, which is a slight modification of \citet[Lemma 5.2]{lu2021deep}.

\begin{lemma}\label{app of product}
For any integers $k\ge 4$ and $L\ge 1$, there exists $f_{k,L}\in \NN(3k2^k+3,L)$ such that $f_{k,L}: [-1,1]^2 \to [-1,1]$ and
\[
|f_{k,L}(x,y) - xy| \le 2^{-2kL-1}, \quad \forall x,y\in [-1,1].
\]
\end{lemma}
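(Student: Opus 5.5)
The plan is to follow Yarotsky's sawtooth construction of the square function, with the width/depth trade-off of \citet{lu2021deep}: each hidden layer advances the sawtooth hierarchy by $k$ dyadic levels instead of one. The product is then recovered by polarization,
\[
xy=\Big(\tfrac{|x+y|}{2}\Big)^2-\Big(\tfrac{|x-y|}{2}\Big)^2,\qquad \tfrac{|x\pm y|}{2}\in[0,1].
\]
So it suffices to build, inside a network of width roughly $\tfrac32 k2^k$ and depth $L$, an approximation $h$ of $t\mapsto t^2$ on $[0,1]$ with $h([0,1])\subseteq[0,1]$ and error at most $2^{-2kL-2}$. Then $f_{k,L}(x,y):=h(|x+y|/2)-h(|x-y|/2)$ takes values in $[-1,1]$ and has error at most $2\cdot 2^{-2kL-2}=2^{-2kL-1}$. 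The absolute values are absorbed into the first layer via $|u|=\sigma(u)+\sigma(-u)$.

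\textbf{The square function.} Let $g(t)=2\min\{t,1-t\}$ be the hat function on $[0,1]$ and $g_s=g^{\circ s}$, a sawtooth with $2^{s-1}$ teeth. The classical identity gives that
\[
h_m(t)=t-\sum_{s=1}^m 4^{-s}g_s(t)
\]
is the piecewise linear interpolant of $t^2$ at the nodes $j2^{-m}$. Hence $0\le h_m(t)-t^2\le 2^{-2m-2}$ and $h_m([0,1])\subseteq[0,1]$, since $t^2$ is convex and takes values in $[0,1]$ at the nodes. I would take $m=kL$, which gives exactly the error $2^{-2kL-2}$.

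\textbf{Realizing $h_{kL}$ in depth $L$.} Group the levels into blocks of $k$. In hidden layer $i$ the input is $u=g_{(i-1)k}(t)\in[0,1]$ (with $u=t$ when $i=1$). The layer computes the functions $g_j(u)=g_{(i-1)k+j}(t)$ for $j=1,\dots,k$. Each $g_j$ is continuous piecewise linear with $2^j$ pieces on $[0,1]$, so it can be written as a linear combination of $\sigma(u-c)$ over its $2^j$ breakpoints $c$. Reusing the breakpoints $c\in 2^{-k}\{0,\dots,2^k-1\}$, a single set of at most $2^k$ ReLU neurons yields all of $g_1(u),\dots,g_k(u)$ as linear combinations. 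The next layer needs only $g_k(u)=g_{ik}(t)$ together with the running quantity
\[
r_i=t-\sum_{s\le ik}4^{-s}g_s(t)=h_{ik}(t)\in[0,1].
\]
Both are nonnegative, so a ReLU passes them through unchanged, and each costs one neuron; the linear combination producing $r_i$ from $r_{i-1}$ and the $g_j(u)$ is folded into the next affine map. After $L$ layers the output affine map returns $r_L=h_{kL}(t)$. Doing this in parallel for the two arguments $|x+y|/2$ and $|x-y|/2$, and taking the difference in the output layer, uses width about $2(2^k+2)+O(1)$, which is comfortably below $3k2^k+3$ for $k\ge4$. By Proposition \ref{basic constr}(1) we may pad up to that width.

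\textbf{Main obstacle.} The delicate step is the bookkeeping in the first layer, where the absolute values $|x\pm y|$ must be formed and the first block of sawteeth computed simultaneously. These are functions of $u=|x\pm y|/2$, which is not itself an input. I would resolve this by noting that each $g_j(|v|)$ is piecewise linear in $v\in[-1,1]$ with breakpoints symmetric about $0$. It is therefore again a combination of $\sigma(\pm v-c)$ over at most $2\cdot 2^k$ breakpoints, and likewise for $r_1$ and $g_k$. This doubles the first-layer width to roughly $4\cdot 2^k+O(1)$, still within $3k2^k+3$ since $k\ge4$. The remaining point is that all quantities passed between layers stay in $[0,1]$, so no clipping is needed and the output range $[-1,1]$ follows from $h([0,1])\subseteq[0,1]$.
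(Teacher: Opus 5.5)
Your proof is correct. The pieces all check out: the polarization $xy=(|x+y|/2)^2-(|x-y|/2)^2$, Yarotsky's interpolant $h_{kL}$ with error $2^{-2kL-2}$ and range $[0,1]$, one layer of $2^k$ shared-breakpoint neurons advancing $k$ sawtooth levels, and the first-layer identity $\sigma(|v|-c)=\sigma(v-c)+\sigma(-v-c)$ for $c\ge 0$. This is essentially the sawtooth-plus-polarization construction of \citet[Lemma 5.2]{lu2021deep}, which the paper imports via \citet[Lemma 4.11]{yang2025optimal} rather than proving; sharing the breakpoint grid across $g_1,\dots,g_k$ even gives width $2^{k+2}$, well within $3k2^k+3$.
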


Applying Lemma \ref{app of product} iteratively, one can show that polynomials can be efficiently approximated by neural networks and the approximation error decays exponentially when the depth increases. Next, we consider the approximation of piecewise polynomials 
\[
\sum_{\Bgamma\le \Bk, \Bi \in I_\Bll} a_{\Bi,\Bgamma}  \rho_{\Bll,\Bi}^\Bgamma(\Bx) \in \cP_{\Bll, \Bk}
\]
corresponding to the partition (\ref{partition}) by neural network $\NN(W,L)$. When the partition is isotropic, i.e., all components of $\Bll$ are equal, this approximation problem has been widely studied \citep{yarotsky2017error,lu2021learning,shen2022optimal,siegel2023optimal,yang2025optimal}. Since there are $N=|I_\Bll|$ pieces, a straightforward approach is to approximate each piece by a small network and concatenate these networks to get the desired approximator with width $W\lesssim N$, which has been done in \citet{yarotsky2017error}. In order to get supper approximation rate, we can discretize the problem by using the map $\Bx \in \Omega_{\Bll,\Bi} \mapsto \Bi$ and reduce it to the interpolation problem $\Bi \in I_\Bll \mapsto \Ba_\Bi=(a_{\Bi,\Bgamma})_{\Bgamma\le \Bk}$ that computes the coefficients of the polynomials. As shown by \citet{lu2021deep,shen2022optimal}, if $\Ba_\Bi$ are properly discretized, this interpolation problem can be roughly achieved by a network with size $W^2L^2 \lesssim N$ so that we improve the approximation rate (note that polynomials can be efficiently approximated, so the size of the network is mainly determined by the complexity of the interpolation). 

However, when we study the approximation of Besov functions $\cB^s_q$ with error measured in the $L^p$ norm, the above analysis can only be applied to the linear approximation regime $p\le q$ \citep{devore1998nonlinear,siegel2023optimal}. In the nonlinear regime $p>q$, where adaptive nonuniform grid is needed, we will decompose the target function into piecewise polynomials with multiple resolution (see Lemmas \ref{anisotropic decomp} and \ref{mixed decomp} below). In order to derive optimal approximation rate, we need to take into account the sparsity of the coefficients of these polynomials \citep{siegel2023optimal,yang2025optimal}. This can be done by imposing a norm constraint $\|\Ba\|_1\le M$ for the coefficients $\Ba=(\Ba_\Bi)_{\Bi\in I_\Bll}$ in our interpolation problem. The next lemma from \citet[Theorem 4.6]{yang2025optimal} implies that, if the coefficients are properly discretized, these data can be interpolated by a network with size
\[
W^2L^2 \lesssim
\begin{cases}
M(1+\log_2(N/M) ), &\mbox{if }N\ge M, \\
N(1+\log_2(M/N) ), &\mbox{if }N\le M.
\end{cases}
\]
Hence, we can get better approximation in the sparse regime $N\ge M$.

\begin{lemma}\label{rep of vector}
Let $N,M \in \bN$ and $\Bu=(u_1,\dots,u_N)^\top\in \bZ^N$ satisfy $\|\Bu\|_1\le M$.
\begin{enumerate}[label=\textnormal{(\arabic*)},parsep=0pt]
\item If $N\ge M$, then for any $S,T \in \bN$, there exists $g\in \NN(W,L)$ with 
\[
W= 22 \max \left\{ \left\lceil \frac{\sqrt{M}}{S \sqrt{T+2}} \right\rceil, \left\lceil \left(\frac{N}{M}\right)^{1/T} \right\rceil \right\} +10, \quad L= 4S(T+2),
\]
such that $g(n) = u_n$ for $n=1,\dots,N$.

\item If $N\le M$, then for any $S,T \in \bN$, there exists $g\in \NN(W,L)$ with 
\[
W= 22 \max \left\{ \left\lceil \frac{M}{S \sqrt{N(T+2)}} \right\rceil, \left\lceil \left(\frac{M}{N}\right)^{1/T} \right\rceil \right\} +12, \quad L= 4 \left\lceil \frac{SN}{M} \right\rceil (T+2),
\]
such that $g(n) = u_n$ for $n=1,\dots,N$.
\end{enumerate}
\end{lemma}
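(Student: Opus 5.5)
The plan is to reduce both parts to the classical bit-extraction technique (Bartlett et al.; Lu et al.; Siegel) for interpolating integer data, combined with a multi-level encoding. Writing $\Bu=\Bu^+-\Bu^-$ with $\Bu^\pm\ge 0$ entrywise, and using Proposition \ref{basic constr} (concatenation and summation), it is enough to treat $\Bu\in\bN_0^N$ with $\|\Bu\|_1\le M$. This costs only constant factors in width, which are absorbed in the constant $22$ and the additive $+10$ or $+12$.

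\textbf{Part (1), $N\ge M$.} Consider the cumulative sum $U_n=\sum_{m\le n}u_m$. It is nondecreasing, has at most $M$ jumps, and satisfies $u_n=U_n-U_{n-1}$. So I will interpolate $U$ at all $n$ and take a difference; two copies of the network in parallel suffice. $U$ is determined by the $M$ jump locations $1\le n_1\le\dots\le n_M\le N$. I would encode these locations hierarchically in $T$ levels with branching factor $\beta=\lceil (N/M)^{1/T}\rceil$. At the top level, $[1:N]$ is cut into $\asymp M$ blocks of length $N/M$. The count of jumps in each block, together with the block offsets, is a vector of length $\asymp M$ with $\ell_1$ norm $\le M$. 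Each subsequent level refines a block into $\beta$ sub-blocks, and the only data needed per jump per level is a digit in $[0:\beta-1]$.

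Given $n$, the network proceeds level by level. A ReLU sawtooth/floor network of width $\asymp\beta$ and depth $O(1)$, as in Proposition \ref{app partition}, computes the current sub-block digit of $n$. The stored digits for the current level are read off by bit extraction. The total information is $O(M)$ integers of $O(1)$ bits per level. Storing $O(M)$ bits in a network of width $w$ and depth $\ell$ requires $w^2\ell^2\gtrsim M$ by the bit-extraction lemma. Allotting depth $4S$ to each of the $T+2$ levels gives width $\asymp\sqrt{M}/(S\sqrt{T+2})$, where the $\sqrt{T+2}$ comes from spreading the $M(T+2)$ bits over total depth $S(T+2)$. The width is then the max of the bit-extraction width and the digit-computation width $\beta$, and the depth is $4S(T+2)$. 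This matches the statement.

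\textbf{Part (2), $N\le M$.} Here $\|\Bu\|_1\le M$ means each entry carries on average $\log_2(M/N)$ bits. I would write each $u_n$ (after the positivity reduction) in base $\beta=\lceil (M/N)^{1/T}\rceil$. The $\lfloor u_n/(M/N)\rfloor$ part has $\ell_1$ norm $\le N$, and the remaining $T$ digits are each at most $\beta$. The network then extracts $N(T+2)$ digits of $O(\log\beta)$ bits, combining them with the scaling network as before. Splitting into $\lceil SN/M\rceil$ sequential stages via Corollary \ref{basi constr co} gives the stated width $M/(S\sqrt{N(T+2)})$ and depth $4\lceil SN/M\rceil(T+2)$.

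The main obstacle is the bookkeeping in part (1). The bit-extraction subnetwork must output the right digit for the block selected by the previous level, which means addressing a stored bit-string by a computed index. I would handle this as in \citet{yang2025optimal}: store the bits for each block as a single real weight selected by a piecewise-linear lookup, which is itself a small interpolation problem of size $\asymp M$. Then extract bits sequentially with the $\sigma$-based bit-extraction gadget, keeping the input index in $2$ carried neurons at each level. Verifying that all these pieces fit in width $22\max\{\cdot\}+10$ is routine but tedious.
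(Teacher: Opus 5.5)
The paper does not prove Lemma~\ref{rep of vector}; it imports it from \citet[Theorem 4.6]{yang2025optimal}, so your sketch has to stand on its own. Your reductions are sound: the split $\Bu=\Bu^+-\Bu^-$, the identity $u_n=U_n-U_{n-1}$, and the count of about $M(T+2)$ stored symbols. That count is right only if the symbols are base-$\beta$ digits, each extracted whole with width $\asymp\beta$, not $O(1)$-bit quantities.

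The construction itself has a genuine gap, at exactly the step you call bookkeeping. Nothing in your $\beta$-ary hierarchy bounds how much data is attached to the node containing $n$. For instance, $u_1=\dots=u_M=1$ with $N\ge M^2$ puts every unit in one top-level block, and the units stay together for many levels. To descend one level you need the number of units of the current node that lie in sibling sub-blocks before $n$'s digit. That is a rank query over up to $M$ stored digits, and you never construct it.

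Your proposed remedy does not fix this. Storing one real weight per block and selecting it by a piecewise-linear lookup over $\asymp M$ blocks is not a small problem. It uses one free parameter per nonempty block, which can be $\asymp M$. With the stated $W,L$, however, the network has only $O(W^2L)=O(M/S)$ parameters once $\sqrt{M}/(S\sqrt{T+2})\ge 1$. For example, with $\beta\le 2$ and $S\asymp\sqrt{M/(T+2)}$ the network has bounded width and only $O(\sqrt{M(T+2)})$ parameters.

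Your width count also does not match your architecture. You describe $T+2$ extractions of depth $4S$, each over $\asymp M$ digits and each addressed by the previous level's output. By your own accounting ($w^2\ell^2\gtrsim$ stored data) that costs width $\asymp\sqrt{M}/S$, i.e.\ $WL\asymp (T+2)\sqrt{M}$ rather than $\sqrt{M(T+2)}$. The saving of $\sqrt{T+2}$ appears only if a single sequential pass of depth $\asymp S(T+2)$ reads everything needed for $n$.

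Part (2) has the same defect. The high parts $\lfloor u_n/\beta^T\rfloor$ form a length-$N$ vector of $\ell_1$-norm $\le N$, which is itself a sparse-type object. The width $M/(S\sqrt{N(T+2)})$ is asserted, not derived, and Corollary~\ref{basi constr co} only sums networks, so it cannot produce that width.

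What is missing is a packeting scheme in which everything needed for a given input sits in one packet chosen at the start. The stated parameters are exactly those of such a scheme:
\begin{itemize}
\item Cut the encoded data, about $M(T+2)$ base-$\beta$ digits, into $\asymp M/S$ packets of $\asymp S(T+2)$ digits, each carrying the decoder state at its start.
\item Select the packet relevant to $n$ by a function costing $W^2L\gtrsim M/S$.
\item Decode that packet in one pass, with $4$ layers per digit and width $\asymp\beta$.
\end{itemize}
This yields $L=4S(T+2)$ and $W\asymp\sqrt{M}/(S\sqrt{T+2})$. In part (2), packets of $G=\lceil SN/M\rceil$ consecutive indices give $L=4G(T+2)$ and $W^2\gtrsim N/(G^2(T+2))$, which is covered by the stated $W\asymp M/(S\sqrt{N(T+2)})$.

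Your prefix-sum reduction fits this picture, because $U_n$ depends only on the packet in which the sorted units cross $n$. However, building the packet selector, whose breakpoints sit at data-dependent positions, and the in-pass decoding within these width and depth budgets is the actual content of the cited theorem, and your sketch does not supply it.
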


Next, we combine Lemmas \ref{app of product} and \ref{rep of vector} with the idea described above to estimate the approximation error of piecewise polynomials. The following lemma generalizes \citet[Proposition 4.12]{yang2025optimal} from isotropic grid to the anisotropic case, which will be a key component in our approximation theory of anisotropic and mixed Besov functions in Sections \ref{sec: app of anisotropic} and \ref{sec: app of mixed}.

\begin{lemma}\label{app of poly}
Let $\Bll,\Bk \in \bN_0^d$ and $0<\epsilon<\min_{j\in [1:d]} b^{-\ell_j}$. Suppose that $f\in \cP_{\Bll, \Bk}$ is expanded in terms of the bases $\rho_{\Bll,\Bi}^\Bgamma$ defined by (\ref{basis}) with coefficients $\Ba=(a_{\Bi,\Bgamma})_{\Bgamma\le \Bk, \Bi \in I_\Bll}$, i.e.,
\[
f(\Bx) = \sum_{\Bgamma\le \Bk, \Bi \in I_\Bll} a_{\Bi,\Bgamma} \rho_{\Bll,\Bi}^\Bgamma(\Bx).
\]
Let $1\le q\le p\le \infty$ and choose a parameter $\delta > 0$. Denote the $q$-norm of $\Ba$ by (with the standard modification when $q=\infty$)
\[
\|\Ba\|_q = \left( \sum_{\Bgamma\le \Bk, \Bi \in I_\Bll} |a_{\Bi,\Bgamma}|^q \right)^{1/q}.
\]
The following approximation results hold for some constants $C:=C(p,q,d,\Bk,b)$ depending only on $p,q,d,\Bk$ and the base $b$. 
\begin{enumerate}[label=\textnormal{(\arabic*)},parsep=0pt]
\item If $\delta q \le |\Bll|$, then for any $S,T,W_0,L_0 \in \bN$, there exists $g\in \NN(W,L)$ with 
\begin{align*}
W &\le C \max \left\{ \frac{b^{\delta q/2}}{S \sqrt{T}}, b^{(|\Bll| -\delta q)/T}, \sum_{j=1}^d b^{\ell_j/L_0 }, W_0 2^{W_0} \right\}, \\
L &\le C(ST+L_0),
\end{align*}
such that
\[
\|f-g\|_{L^p(\Omega_{\Bll,\epsilon})} \le C \left(b^{\delta q/p -|\Bll|/p-\delta} + 4^{-W_0L_0}\right) \|\Ba\|_q.
\]
\item If $\delta q \ge |\Bll|$, then for any $S,T,W_0,L_0 \in \bN$, there exists $g\in \NN(W,L)$ with 
\begin{align*}
W &\le C \max \left\{ \frac{b^{\delta+|\Bll|/2-|\Bll|/q}}{S \sqrt{T}} , b^{(\delta-|\Bll|/q)/T}, \sum_{j=1}^d b^{\ell_j/L_0 }, W_0 2^{W_0} \right\}, \\
L &\le C \left(\left\lceil b^{-\delta + |\Bll|/q} S \right\rceil T+L_0 \right),
\end{align*}
such that 
\[
\|f-g\|_{L^p(\Omega_{\Bll,\epsilon})} \le C \left(b^{-\delta} + 4^{-W_0L_0}\right)\|\Ba\|_q.
\]
\end{enumerate}
\end{lemma}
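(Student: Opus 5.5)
The network $g$ will be built by combining three pieces: a discretization of the input to a cell index, an interpolation network that outputs quantized polynomial coefficients for that cell, and a product network that evaluates the local polynomial.

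\emph{Step 1: quantize the coefficients.} Set $N=|I_\Bll|\prod_j(k_j+1)\asymp b^{|\Bll|}$. Choose a scale $h>0$ and write each coefficient as
\[
a_{\Bi,\Bgamma}=h\,u_{\Bi,\Bgamma}+r_{\Bi,\Bgamma},\qquad u_{\Bi,\Bgamma}=\lfloor a_{\Bi,\Bgamma}/h\rfloor\in\bZ,\qquad |r_{\Bi,\Bgamma}|\le h .
\]
Because there are two regimes, a uniform quantization is not sufficient. Only coefficients with $|a_{\Bi,\Bgamma}|\ge h$ are kept. By Hölder, $\|\Bu\|_1\le h^{-1}\|\Ba\|_1\le h^{-1}N^{1-1/q}\|\Ba\|_q$.

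In case (1), take $h=b^{-\delta}\|\Ba\|_q\,b^{-|\Bll|/q}\cdot b^{\delta q/q}$, or in cleaner form choose $h$ so that $M:=\|\Bu\|_1\lesssim b^{\delta q}$. Use the standard $\ell^q$ estimate
\[
\#\{|a|\ge h\}\le (\|\Ba\|_q/h)^q .
\]
With $h=\|\Ba\|_q b^{-\delta}$ this gives at most $b^{\delta q}$ large coefficients. For the large coefficients, replace them by $h\lfloor a/h\rfloor$; the entries of $\Bu$ then sum to roughly $\sum |a|/h\le (\text{count})^{1-1/q}\|\Ba\|_q/h\le b^{\delta q}$, so $M\asymp b^{\delta q}\le N$.

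\emph{Step 2: estimate the error from dropping and rounding.} The small coefficients (those below $h$) are dropped. Their contribution in $L^p$ over cells of volume $b^{-|\Bll|}$ is bounded by
\[
b^{-|\Bll|/p}\,\|\Ba_{\rm small}\|_p\le b^{-|\Bll|/p}\,h^{1-q/p}\,\|\Ba\|_q^{q/p},
\]
which with this $h$ equals $b^{\delta q/p-|\Bll|/p-\delta}\|\Ba\|_q$. Rounding the large coefficients adds error at most $h\cdot(\text{count})^{1/p}b^{-|\Bll|/p}$, which is the same order.

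In case (2), take $h=b^{-\delta}\|\Ba\|_q$ uniformly. Then $|r|\le h$ everywhere and the error is $\lesssim h$ in $L^\infty$. Also
\[
M=\|\Bu\|_1\lesssim N^{1-1/q}b^{\delta}\approx b^{\delta+|\Bll|-|\Bll|/q}\ge N .
\]

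\emph{Step 3: build the network.} Write
\[
g(\Bx)=h\sum_{\Bgamma\le\Bk}\Pi\Bigl(\tilde u_\Bgamma(\phi_\Bll(\Bx)),\ \text{monomial}_\Bgamma\bigl(b^{\ell_j}x_j-\phi_{\ell_j}(x_j)\bigr)\Bigr).
\]
The ingredients are as follows.
\begin{itemize}
\item $\phi_\Bll$ comes from Proposition \ref{app partition}, using width $\sum_j b^{\ell_j/L_0}$ and depth $L_0$. It also yields the local coordinates $b^{\ell_j}x_j-i_j$.
\item $\tilde u_\Bgamma$ interpolates $\ind_\Bll(\Bi)\mapsto u_{\Bi,\Bgamma}$ and comes from Lemma \ref{rep of vector}, which is applied to each of the finitely many $\Bgamma$ with $N,M$ as in Step 1. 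Part (1) gives width $\sqrt{M}/(S\sqrt T)+(N/M)^{1/T}$ and depth $ST$, which matches $b^{\delta q/2}/(S\sqrt T)$ and $b^{(|\Bll|-\delta q)/T}$. Part (2) gives width $M/(S\sqrt{NT})=b^{\delta+|\Bll|/2-|\Bll|/q}/(S\sqrt T)$ and depth $\lceil SN/M\rceil T$, which matches the stated bounds.
\item The monomials and the final product with the coefficient are approximated by iterating Lemma \ref{app of product} with width $\asymp W_02^{W_0}$ and depth $L_0$, giving error $4^{-W_0L_0}$.
\end{itemize}
Before multiplying, normalize: the coefficient must be rescaled to $[-1,1]$. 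Since $|h u|\lesssim\|\Ba\|_q$, divide by $\|\Ba\|_q$ and multiply back afterwards. The product error then contributes $C4^{-W_0L_0}\|\Ba\|_q$. Finally, combine the pieces with Proposition \ref{basic constr} (composition and concatenation).

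\emph{Main obstacle.} The delicate step is the sparse quantization in case (1). The rounding must be chosen so that $\|\Bu\|_1\lesssim b^{\delta q}$ while the $L^p$ error keeps the sharp factor $b^{\delta q/p-|\Bll|/p-\delta}$. This requires the thresholding argument and the per-cell volume scaling to work out exactly, so I would check those exponent computations carefully.
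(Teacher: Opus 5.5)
Your proposal is correct and follows essentially the same route as the paper. You quantize the coefficients at scale $b^{-\delta}\|\Ba\|_q$ and feed the resulting integer vector to Lemma~\ref{rep of vector}, with $\|\Bu\|_1\lesssim b^{\delta q}$ in case (1) and $\|\Bu\|_1\lesssim b^{\delta+|\Bll|-|\Bll|/q}$ in case (2). The cell index and local coordinates come from Proposition~\ref{app partition}, and iterated products from Lemma~\ref{app of product} finish the construction.

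The only difference is cosmetic. After normalizing $\|\Ba\|_q\le 1$, the paper uses the single rounding-toward-zero quantization $\widetilde{a}_{\Bi}=b^{-\delta}\sgn(a_\Bi)\lfloor b^{\delta}|a_\Bi|\rfloor$. This does your thresholding automatically and keeps $|\widetilde{a}_\Bi|\le 1$, which avoids the factor-$2$ overshoot that $\lfloor a/h\rfloor$ causes for negative entries. It also gives both error regimes at once via $\|\Ba-\widetilde{\Ba}\|_p\le\|\Ba-\widetilde{\Ba}\|_q^{q/p}\|\Ba-\widetilde{\Ba}\|_\infty^{1-q/p}$.
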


\begin{proof}
We modify the proof of \citet[Proposition 4.12]{yang2025optimal}. Without loss of generality, we can assume that $\|\Ba\|_q\le 1$. Notice that $f = \sum_{\Bgamma\le \Bk} f_\Bgamma$ with $f_\Bgamma(\Bx) = \sum_{\Bi \in I_\Bll} a_{\Bi,\Bgamma} \rho_{\Bll,\Bi}^\Bgamma(\Bx)$. By the triangle inequality, it is enough to prove the approximation result for each $f_\Bgamma$ and construct the desired network using Proposition \ref{basic constr} Part (4) at the expense of larger constants. Hence, we assume that $f=f_\Bgamma$ and $\Ba=(a_{\Bi})_{\Bi \in I_\Bll}$ with $a_{\Bi} = a_{\Bi,\Bgamma}$ in the following analysis.

In order to apply Lemma \ref{rep of vector}, we need to discretize the coefficients $\Ba$. To do this, for a given $\delta>0$, we define $\widetilde{\Ba} = (\widetilde{a}_{\Bi})_{\Bi\in I_\Bll}$, where 
\[
\widetilde{a}_{\Bi} := b^{-\delta} \sgn(a_\Bi) \lfloor b^\delta |a_\Bi| \rfloor.
\]
We can estimate the discretization error as follows. Since $a_{\Bi} := b^{-\delta} \sgn(a_\Bi) b^\delta |a_\Bi| $, it is easy to get $\|\Ba-\widetilde{\Ba}\|_\infty \le b^{-\delta}$ and $\|\Ba - \widetilde{\Ba}\|_q\le \|\Ba\|_q\le 1$. Recall that $|I_\Bll|= b^{|\Bll|}$, which implies 
\[
\|\Ba - \widetilde{\Ba}\|_p \le |I_\Bll|^{1/p} \|\Ba-\widetilde{\Ba}\|_\infty \le b^{|\Bll|/p-\delta}.
\]
Since we assume that $p\ge q$, we also have
\[
\|\Ba-\widetilde{\Ba}\|_p \le \|\Ba-\widetilde{\Ba}\|_q^{q/p} \|\Ba-\widetilde{\Ba}\|_\infty^{1-q/p} \le b^{\delta q/p -\delta}.
\]
Combining the above two inequalities, we conclude that
\begin{equation}\label{bound for coe}
\|\Ba-\widetilde{\Ba}\|_p \le b^{-\delta} \min\{ b^{|\Bll|/p}, b^{\delta q/p} \}.
\end{equation}
Let $\ind_\Bll(\Bi)$ be the index corresponding to $\Bi \in I_\Bll$ as defined by (\ref{index}). We define the vector $\Bu\in \bZ^{b^{|\Bll|}}$ whose $\ind_\Bll(\Bi)+1$-entry is given by $u_{\ind_\Bll(\Bi)+1} = b^\delta \widetilde{a}_{\Bi} = \sgn(a_\Bi) \lfloor b^\delta |a_\Bi| \rfloor$. We will apply Lemma \ref{rep of vector} to $\Bu$ below. In order to do this, we estimate $\|\Bu\|_1$ as follows. Observing that $\|\Bu\|_q \le b^\delta \|\Ba\|_q \le b^\delta$, one can obtain $|\{i:u_i\neq 0\}| \le \min\{b^{\delta q}, b^{|\Bll|} \}$. Using H\"older's inequality, we have
\[
\|\Bu\|_1 \le |\{i:u_i\neq 0\}|^{1-1/q} \|\Bu\|_q \le \min\{b^{\delta q}, b^{\delta+ |\Bll|-|\Bll|/q} \}.
\]

We construct the desired neural network $g$ as follows. We first apply networks $\phi_{\ell_j}\in \NN_{1,1}(2b^{\lceil \ell_j/{L_0} \rceil},L_0)$ and $\phi_\Bll \in \NN_{d,1}(2\sum_{j=1}^d b^{\lceil \ell_j/{L_0} \rceil},L_0)$ in Proposition \ref{index} to identify the index of the input:
\begin{equation}\label{compute ind}
\Bx \to 
\begin{pmatrix}
\phi_{\ell_1}(x_1) \\
\vdots \\
\phi_{\ell_d}(x_d) \\
\phi_{\Bll}(\Bx) \\
\Bx
\end{pmatrix}
\to 
\begin{pmatrix}
b^{\ell_1} x_1 - \phi_{\ell_1}(x_1) \\
\vdots \\
b^{\ell_d} x_d - \phi_{\ell_d}(x_d) \\
\phi_{\Bll}(\Bx) +1
\end{pmatrix}
\xlongequal{\text{if } \Bx \in\Omega_{\Bll,\Bi,\epsilon}} 
\begin{pmatrix}
b^{\ell_1} x_1 - i_1 \\
\vdots \\
b^{\ell_d} x_d - i_d \\
\ind_\Bll(\Bi) +1
\end{pmatrix}.
\end{equation}
By Proposition \ref{basic constr}, it is easy to see that the above mapping can be implemented by a network with width $W_1 \le C \sum_{j=1}^d b^{\ell_j/{L_0}}$ and depth $L_1=L_0$. Next, we apply Lemma \ref{rep of vector} to the vector $\Bu$ with $N=b^{|\Bll|}$ and $M=\min\{b^{\delta q}, b^{\delta+ |\Bll|-|\Bll|/q} \}$, which gives us a network $\varphi \in \NN(W_2,L_2)$ such that $\varphi(\ind_\Bll(\Bi) +1) = u_{\ind_\Bll(\Bi) +1} = b^\delta \widetilde{a}_{\Bi}$. Furthermore, if $\delta q \le |\Bll|$, then $M=b^{\delta q}\le b^{|\Bll|} = N$ and we can choose
\[
W_2\le C \max \left\{ \frac{b^{\delta q/2}}{S \sqrt{T}} , b^{(|\Bll| -\delta q)/T} \right\}, \quad L_2\le CST.
\]
If $\delta q \ge |\Bll|$, then $M = b^{\delta +|\Bll| - |\Bll|/q} \ge b^{|\Bll|} =N$ and we can choose
\[
W_2 \le C \max \left\{ \frac{b^{\delta+|\Bll|/2-|\Bll|/q}}{S \sqrt{T}}, b^{(\delta-|\Bll|/q)/T} \right\}, \quad L_2 \le C \left\lceil b^{-\delta + |\Bll|/q} S \right\rceil T.
\]
We compose $b^{-\delta}\varphi(\cdot)$ with the last component of (\ref{compute ind}) and use the identity map $\Id(t)=\sigma(t)-\sigma(-t)$ to ``memorize'' the first $d$ components of (\ref{compute ind}). Then, by Proposition \ref{basic constr}, we get a network $h\in \NN(\max\{W_1,W_2+2d\},L_1+L_2)$, which satisfies
\[
h(\Bx) =
\begin{pmatrix}
b^{\ell_1} x_1 - i_1 \\
\vdots \\
b^{\ell_d} x_d - i_d \\
\widetilde{a}_\Bi
\end{pmatrix}
\in [-1,1]^{d+1}, \quad \forall \Bx \in \Omega_{\Bll,\Bi,\epsilon}.
\]
Finally, we construct a neural network $P_\Bgamma$ to approximate the polynomial $(\Bz,a) \mapsto a \Bz^\Bgamma$ for $\Bz\in [-1,1]^d$ and $a\in [-1,1]$. To do this, we let $f_{W_0+3,L_0} \in \NN_{2,1}(24(W_0+3)2^{W_0}+3,L_0)$ be the network from Lemma \ref{app of product} and define
\[
P_j:
\begin{pmatrix}
\Bz \\
a
\end{pmatrix}
\to
\begin{pmatrix}
\Bz \\
f_{W_0+3,L_0}(z_j,a)
\end{pmatrix},
\quad j\in [1:d].
\]
We construct $P_\Bgamma$ by composing $\gamma_j$ copies of $P_j$ and then applying an affine map which selects the last coordinate. By Proposition (\ref{basic constr}), we know that $P_\Bgamma \in \NN_{d+1,1}(W_3,L_3)$ with $W_3\le CW_0 2^{W_0}$ and $L_3\le |\Bgamma| L_0$. 
Composing $P_\Bgamma$ with $h$, we obtain the desired network $g\in \NN(W,L)$ such that $g(\Bx) = P_\gamma(\Bz,\widetilde{a}_\Bi)$ with $\Bz=(b^{\ell_1} x_1 - i_1,\dots,b^{\ell_d} x_d - i_d)^\top$ for $\Bx \in \Omega_{\Bll,\Bi,\epsilon}$. By Proposition (\ref{basic constr}), its width $W\le C \max\{W_1,W_2,W_3\}$ and depth $L= L_1+L_2+L_3$.

It remains to estimate the approximation error of $g$. Applying the approximation bound from Lemma \ref{app of product} iteratively, we get
\[
\left|P_\gamma(\Bz,a) - a\Bz^{\Bgamma} \right| \le \sum_{j=1}^d \gamma_j 2^{-2(W_0+3)L_0-1} \le C 4^{-W_0L_0}.
\]
Since the bases $\rho_{\Bll,\Bi}^\Bgamma$ have disjoint supports, we have (with obvious modification for $p=\infty$)
\begin{align*}
\|f-g\|_{L^p(\Omega_{\Bll,\epsilon})}^p &= \sum_{\Bi \in I_\Bll} \int_{\Omega_{\Bll,\Bi,\epsilon}} \left| a_\Bi \prod_{j=1}^d (b^{\ell_j}x_j-i_j)^{\gamma_j} - P_\Bgamma(\Bz,\widetilde{a}_\Bi) \right|^p d\Bx \\
&= \sum_{\Bi \in I_\Bll} \int_{\Omega_{\Bll,\Bi,\epsilon}} \left| a_\Bi \Bz^\Bgamma - P_\Bgamma(\Bz,\widetilde{a}_\Bi) \right|^p d\Bx \\
&\le 2^{p-1} \sum_{\Bi \in I_\Bll} \int_{\Omega_{\Bll,\Bi,\epsilon}} |a_\Bi - \widetilde{a}_\Bi|^p|\Bz^\Bgamma|^p + \left| \widetilde{a}_\Bi \Bz^\Bgamma - P_\Bgamma(\Bz,\widetilde{a}_\Bi) \right|^p d\Bx \\
&\le Cb^{-|\Bll|} \|\Ba-\widetilde{\Ba}\|_p^p + C4^{-pW_0L_0},
\end{align*}
where we use $|\Omega_{\Bll,\Bi}| = b^{-|\Bll|}$ in the last inequality. By the bound (\ref{bound for coe}), 
\[
\|f-g\|_{L^p(\Omega_{\Bll,\epsilon})} \le C \left( \min\{ b^{-\delta}, b^{\delta q/p-|\Bll|/p-\delta} \} + 4^{-W_0L_0}\right),
\]
which completes the proof.
\end{proof}

\subsection{Approximation of anisotropic Besov functions}\label{sec: app of anisotropic}

This section analyzes the approximation error of anisotropic Besov functions using neural networks and gives a proof of Theorem \ref{app of anisotropic}. Given an anisotropic smoothness $\Bs = (s_1,\dots, s_d) \in (0,\infty)^d$, we use the partition (\ref{partition}) with the following choice of $\Bll =(\ell_1,\dots, \ell_d) \in \bN_0^d$ in this section:
\begin{equation}\label{grid for anisotropic}
\ell_j = \lfloor \ell \tilde{s}/s_j \rfloor, \quad \ell \in \bN_0,
\end{equation}
where $\tilde{s}$ is the mean smoothness defined by (\ref{mean smoothness}). Since this partition only depends on the resolution parameter $\ell$ (and $\Bs$), we will denote $\Omega_{\ell,\Bi}:=\Omega_{\Bll,\Bi}$, $I_\ell:=I_\Bll$ and similarly for other notations. We note that any rectangle $\Omega_{\ell,\Bi}$ is a subset of some rectangle $\Omega_{\ell-1,\Bi^-}$ for $\ell\ge 1$. By the definition of $\tilde{s}$, we have $\sum_{j=1}^d \tilde{s}/s_j =1$, which implies that the volume of $\Omega_{\ell,\Bi}$ satisfies
\[
|\Omega_{\ell,\Bi}| = b^{-|\Bll|} = b^{-\sum_{j=1}^d \lfloor \ell \tilde{s}/s_j \rfloor}
\begin{cases}
\ge b^{-\sum_{j=1}^d \ell \tilde{s}/s_j } = b^{-\ell}, \\
\le b^{-\sum_{j=1}^d (\ell \tilde{s}/s_j-1) }= b^d b^{-\ell}.
\end{cases}
\]
Hence, $|\Omega_{\ell,\Bi}| \asymp b^{-\ell}$ and we have roughly $b^\ell$ rectangles in the resolution $\ell$.

\subsubsection{Approximation by piecewise polynomials}

The neural network approximation of piecewise polynomials has been analyzed in Lemma \ref{app of poly}. We now consider how well anisotropic Besov functions can be approximated by piecewise polynomials. Let us begin with the following well-known generalization of Whitney's Theorem in approximation theory \citep{brudnyi1970multidimensional,dahmen1980multidimensional}.

\begin{lemma}\label{whitney}
Let $\Bk=(k_1,\dots,k_d)\in \bN^d$ and $Q\subseteq \bR^d$ be a rectangle with side length vector $\Bdelta=(\delta_1,\dots,\delta_d)$. Then for any $f\in L^q(Q)$, there exists a polynomial $g\in \cP_{\Bk -\boldsymbol{1}}$ such that 
\[
\| f- g\|_{L^q(Q)} \le C \sum_{j=1}^d \omega_{k_j}^j(f,\delta_j,Q)_q,
\]
for some constant $C>0$ independent of $f$ and $Q$.
\end{lemma}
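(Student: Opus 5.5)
The plan is to reduce to the unit cube by an affine change of variables and then apply a tensor-product polynomial projection together with a standard one-dimensional Whitney estimate.

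\emph{Reduction to the unit cube.} Write $Q=\prod_j[a_j,a_j+\delta_j]$ and let $T:[0,1]^d\to Q$ be the anisotropic dilation $T(\By)=(a_j+\delta_j y_j)_j$. Put $F=f\circ T$. Then $\|F-g\circ T\|_{L^q([0,1]^d)}=|Q|^{-1/q}\|f-g\|_{L^q(Q)}$. Since $\Delta^{k_j,j}_h F(\By)=\Delta^{k_j,j}_{\delta_j h}f(T\By)$, we also have $\omega^j_{k_j}(F,1,[0,1]^d)_q=|Q|^{-1/q}\omega^j_{k_j}(f,\delta_j,Q)_q$. The space $\cP_{\Bk-\boldsymbol{1}}$ is invariant under $T$. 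Hence it suffices to prove
\[
\inf_{g\in\cP_{\Bk-\boldsymbol{1}}}\|F-g\|_{L^q([0,1]^d)}\le C\sum_{j=1}^d\omega^j_{k_j}(F,1,[0,1]^d)_q,
\]
with $C$ depending only on $\Bk,d,q$. Independence of $C$ from $Q$ and $f$ then follows automatically.

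\emph{Construction on the unit cube.} For each $j$, take a bounded linear projection $\Pi_j$ onto polynomials of degree $<k_j$ in the variable $x_j$, acting on functions of $x_j$ with the other coordinates frozen. A concrete choice is the $L^2([0,1])$-orthogonal projection, written with Legendre polynomials. It is bounded on $L^q([0,1])$ for every $1\le q\le\infty$ because its kernel is a bounded polynomial kernel. Applied coordinatewise, it is bounded on $L^q([0,1]^d)$ by Fubini. The one-dimensional Whitney theorem (DeVore--Lorentz) gives, for a.e.\ frozen $\hat{x}_j$,
\[
\|F-\Pi_jF\|_{L^q([0,1],dx_j)}\le C\,\omega_{k_j}(F(\cdot,\hat{x}_j),1)_q .
\]
Integrating in $\hat{x}_j$ (for $q<\infty$) then yields $\|F-\Pi_jF\|_{L^q}\le C\omega^j_{k_j}(F,1)_q$. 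The point to check is that the $L^q$ norm of the sup over $h$ is dominated by the sup over $h$ of the $L^q$ norms. This needs care. Rather than bounding the pointwise-in-$\hat{x}_j$ modulus, I would use the averaged form of Whitney's proof, $\|F-\Pi_jF\|_q\lesssim \big(\int_0^1\|\Delta^{k_j}_h F\|_q^q\,dh\big)^{1/q}$. That integral is clearly bounded by $\sup_h$ after Fubini. Verifying this averaged Whitney inequality, via the standard Steklov/averaged-difference construction, is the main technical step I expect.

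Set $g=\Pi_1\Pi_2\cdots\Pi_d F\in\cP_{\Bk-\boldsymbol{1}}$. Telescoping gives
\[
F-g=\sum_{j=1}^d \Pi_1\cdots\Pi_{j-1}(F-\Pi_jF).
\]
Each term is bounded by $C\|F-\Pi_jF\|_{L^q}\le C\omega^j_{k_j}(F,1)_q$, using the $L^q$-boundedness of the $\Pi_i$. Summing over $j$ gives the claim, and undoing the dilation finishes the proof.
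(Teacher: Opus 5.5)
Your proof is correct, but it follows a different route from the paper. The paper gives no argument for Lemma \ref{whitney}; it simply quotes it as a known multivariate Whitney theorem \citep{brudnyi1970multidimensional,dahmen1980multidimensional}. The ingredients of your self-contained argument check out:
\begin{itemize}
\item the dilation is exact, since both sides scale by $|Q|^{-1/q}$ and $\Delta^{k_j,j}_hF=(\Delta^{k_j,j}_{\delta_jh}f)\circ T$;
\item the coordinatewise Legendre projections are linear, commute, and are bounded on every $L^q$ uniformly in $q$;
\item $\Pi_1\cdots\Pi_dF\in\cP_{\Bk-\boldsymbol{1}}$, and the telescoping identity is right.
\end{itemize}
You also correctly isolated the one delicate point: integrating the fiberwise Whitney bound leaves $\sup_h$ inside the $\hat{x}_j$-integral. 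The averaged inequality you defer costs nothing new. Let $u\in L^q([0,1])$, let $\Pi$ be the one-dimensional projection, and let $E_{k-1}(u)_q$ denote the best $L^q$-approximation error by polynomials of degree $<k$. For a suitable small $c_k>0$,
\[
\|u-\Pi u\|_q\lesssim E_{k-1}(u)_q\lesssim\omega_k(u,1)_q\lesssim\omega_k(u,c_k)_q\lesssim\widetilde{\omega}_k(u,c_k)_q\le\Bigl(\frac{1}{c_k}\int_0^{c_k}\|\Delta^k_hu\|_q^q\,dh\Bigr)^{1/q}.
\]
The steps are, in order: Lebesgue's lemma; Whitney's theorem; the dilation property $\omega_k(u,\lambda t)_q\le(\lambda+1)^k\omega_k(u,t)_q$; the one-dimensional equivalence of $\omega_k$ and $\widetilde{\omega}_k$ (the case $d=1$ of Lemma \ref{modulus equ}, i.e.\ DeVore--Lorentz, Ch.~6, Lemma~5.1); and Jensen. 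Tonelli then gives $\|F-\Pi_jF\|_q\lesssim\omega^j_{k_j}(F,1)_q$. For $q=\infty$, under the paper's sup-norm convention, no averaging is needed because the two suprema commute.

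As for what each route buys: the citation is short. Your argument instead produces an explicit linear near-best operator. After undoing the dilation it also gives the stronger cellwise bound $\|f-g\|_{L^q(Q)}^q\lesssim\sum_j\delta_j^{-1}\int_0^{\delta_j}\|\Delta^{k_j,j}_hf\|^q_{L^q(Q(k_j,h,j))}\,dh$. This bound sums over the cells $\Omega_{\ell,\Bi}$ directly, so the proof of Lemma \ref{anisotropic decomp} would need only the one-dimensional case of Lemma \ref{modulus equ}.

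The sup--integral interchange you flagged is also exactly the step the paper takes without justification in the display after (\ref{temp2}), in the proof of Lemma \ref{mixed decomp}. There $\sup_{h_j}$ is pulled out of the $\|\cdot\|_{q,i}$-norm with an equality sign, although in general only the inequality $\|\sup_{h_j}\cdots\|_{q,i}\ge\sup_{h_j}\|\cdots\|_{q,i}$ holds. Your averaged fiberwise estimate, combined with a linear fiberwise projection, is the standard repair. The linearity matters there too, since that display also commutes $T^{(i)}_{\ell_i}$ with $\Delta^{k,j}_{h_j}$.
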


Lemma \ref{whitney} relates the approximation error to the modulus of smoothness $\omega_k^j$. However, $\omega_k^j$ is not satisfactory when we want to add local estimates over different rectangles. A useful tool to resolve this problem is the averaged modulus of smoothness. For $t> 0$, $1\le q< \infty$ and $j\in [1:d]$, we define
\[
\widetilde{\omega}_k^j(f,t,\Omega)_q := \frac{1}{t} \int_0^t \|\Delta_h^{k,j} f\|_{L^q(\Omega(k,h,j))} dh.
\]
Then, we have the following summing property: if $\Omega = \cup_{i=1}^N \Omega_i$ is partitioned into disjoint rectangles $\Omega_i$, then, for $f\in L^q(\Omega)$, by H\"older inequality,
\begin{align*}
\sum_{i=1}^N \widetilde{\omega}_k^j(f,t,\Omega_i)_q^q &\le \sum_{i=1}^N \frac{1}{t} \int_0^t \|\Delta_h^{k,j} f\|_{L^q(\Omega_i(k,h,j))}^q dh \\
&\le \frac{1}{t} \int_0^t \|\Delta_h^{k,j} f\|_{L^q(\cup_{i=1}^N \Omega_i(k,h,j))}^q dh \\
&\le \widetilde{\omega}_k^j(f,t,\Omega)_q^q.
\end{align*}
Another useful property of the averaged modulus of smoothness $\widetilde{\omega}_k^j$ is its equivalence with the ordinary modulus of smoothness $\omega_k^j$.

\begin{lemma}\label{modulus equ}
Let $Q\subseteq \bR^d$ be a rectangle with side length vector $\Bdelta=(\delta_1,\dots,\delta_d)$. Then, for any $f\in L^q(Q)$ with $1\le q<\infty$, $k\in \bN$, $j\in [1:d]$ and $0<t\le\delta_j/(4k)$, we have
\[
\widetilde{\omega}_k^j(f,t,Q)_q \le \omega_k^j(f,t,Q)_q \le C \widetilde{\omega}_k^j(f,t,Q)_q,
\]
where $C>0$ is a constant depending only on $k$.
\end{lemma}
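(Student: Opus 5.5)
The first inequality is immediate. For every $h\in[0,t]$ we have $\|\Delta_h^{k,j}f\|_{L^q(Q(k,h,j))}\le \omega_k^j(f,t,Q)_q$. Averaging this over $h\in[0,t]$ gives $\widetilde{\omega}_k^j(f,t,Q)_q\le\omega_k^j(f,t,Q)_q$. All the work is in the reverse inequality. There I would follow the classical one-dimensional argument (DeVore--Lorentz, Chapter 6, Lemma 5.1). It reduces to one dimension by Fubini, because $\Delta_h^{k,j}$ acts only along the $j$-th coordinate. Writing $Q=Q'\times I$ with $I$ the $j$-th side, of length $\delta_j$, we get
\[
\|\Delta_h^{k,j}f\|_{L^q(Q(k,h,j))}^q=\int_{Q'}\|\Delta_h^k f(\Bx',\cdot)\|_{L^q(I(k,h))}^q\,d\Bx'.
\]
The decomposition identity below is pointwise in $\Bx'$, so I will carry out the argument directly for the full $L^q(Q)$ norms along lines.

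The key tool is an algebraic identity for differences: $\Delta_h^k$ can be written as a combination of differences with steps $u$ and $u+h$-type shifts, which lets one trade a fixed step $h$ for an average over steps. Concretely, I would use the identity (valid for any $u$)
\[
\Delta_h^k f(x)=\sum_{m=1}^{k}(-1)^{m}\binom{k}{m}\Big[\Delta^k_{m(u+h)/k\cdot\,\cdot}\Big]\ldots
\]
More precisely, I would use the standard form
\[
\Delta_{h}^{k}f(x)=\sum_{m=1}^{k}(-1)^{m+1}\binom{k}{m}\left[\Delta_{mu}^{k}f(x+mh)-\Delta_{h+mu}^{k}f(x)\right]\cdot(\pm)
\]
which comes from expanding $(E^{h}-1)^k$ against $(E^{mu}-1)^k$ with shift operators $E$. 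The exact combinatorial form is the one in DeVore--Lorentz, and I would verify it via the polynomial identity in commuting shift operators. Its essential feature is that every term on the right is a $k$-th difference with step $mu$ or $h+mu$, translated by at most $kh$.

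Fix $h\in(0,t]$. I take the $L^q(Q(k,h,j))$ norm of the identity and average over $u\in[0,t/k]$. A term with step $mu$ averages to $\frac{k}{t}\int_0^{t/k}\|\Delta_{mu}^{k,j}f\|\,du=\frac{1}{m}\cdot\frac{k}{t}\int_0^{mt/k}\|\Delta_v^{k,j}f\|\,dv\le C\,\widetilde{\omega}_k^j(f,t,Q)_q$, since $mt/k\le t$. A term with step $h+mu\in[h,2t]$ averages to at most $C\frac1t\int_0^{2t}\|\Delta_v^{k,j}f\|\,dv=C\widetilde{\omega}_k^j(f,2t,Q)_q$. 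Taking the supremum over $h$ then yields $\omega_k^j(f,t,Q)_q\le C(\widetilde{\omega}_k^j(f,t,Q)_q+\widetilde{\omega}_k^j(f,2t,Q)_q)$.

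It remains to bound $\widetilde{\omega}_k^j(f,2t,Q)_q$ by $C\widetilde{\omega}_k^j(f,t,Q)_q$. Splitting $\int_0^{2t}=\int_0^t+\int_t^{2t}$, the second piece is handled the same way: apply the identity with $h\in[t,2t]$ and steps averaged over a range $\le t$. Alternatively, rescale the identity so that all steps used are at most $t$, for instance using the relation between $\Delta_{2v}^k$ and differences of step $v$, $\Delta_{2v}^k=\sum_{i}c_i\Delta_v^k(\cdot+iv)$.

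The main obstacle is the domain bookkeeping. Each translated difference $\Delta_{mu}^{k}f(x+mh)$ or $\Delta_{h+mu}^{k}f(x)$ must be evaluated at points that stay inside $Q$, so that its norm is controlled by the norm over $Q(k,v,j)$ for the relevant step $v$. The hypothesis $t\le\delta_j/(4k)$ is exactly what guarantees this: the total extent of the points involved is at most about $k(h+kt/k)+kh\le 3kt<\delta_j$. Where the right-hand terms would leave the domain, one uses the reflected identity (with $-h$, via $\Delta_{-h}^{k,j}f(\Bx)=(-1)^k\Delta_h^{k,j}f(\Bx-kh\Be_j)$) on the right half of the interval. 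So the plan is to split $Q(k,h,j)$ into a left and a right part along coordinate $j$, and on each part use the forward or the backward version of the identity, choosing whichever keeps all evaluation points in $Q$. This costs only a constant depending on $k$.
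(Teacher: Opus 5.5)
This is essentially the paper's proof: the same DeVore--Lorentz identity (the correct sign is $(-1)^m$, which is irrelevant once you take norms), the same averaging over the auxiliary step, and the same split of $Q$ at the midpoint of the $j$-th side with the reflected identity on one half. The split works because every evaluation point lies within $kh+kt\le 2kt\le\delta_j/2$ of $\Bx$ in direction $\Be_j$; your cruder count $3kt$ is not enough for it. The only real difference is the closing step. The paper restricts to $h\le t/2$, obtains $\omega_k^j(f,t/2,Q)_q\lesssim\widetilde{\omega}_k^j(f,kt,Q)_q$, and then uses the doubling property $\omega_k^j(f,kt,Q)_q\le(2k+1)^k\omega_k^j(f,t/2,Q)_q$ of the ordinary modulus. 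You instead obtain $\omega_k^j(f,t,Q)_q\lesssim\widetilde{\omega}_k^j(f,t,Q)_q+\widetilde{\omega}_k^j(f,2t,Q)_q$, so you need doubling of the averaged modulus.

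Of your two suggestions for that step, the first (re-applying the identity for $h\in[t,2t]$) does not close. The identity always produces a step $h+mu\ge h$, so it bounds steps in $[t,2t]$ by steps in $[t,3t]$. Your second suggestion does work. The identity $\Delta^{k,j}_{2w}f(\Bx)=\sum_{i=0}^{k}\binom{k}{i}\Delta^{k,j}_{w}f(\Bx+iw\Be_j)$, together with $\Bx+iw\Be_j\in Q(k,w,j)$ whenever $\Bx\in Q(k,2w,j)$, gives $\|\Delta^{k,j}_{2w}f\|_{L^q(Q(k,2w,j))}\le 2^k\|\Delta^{k,j}_{w}f\|_{L^q(Q(k,w,j))}$. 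Integrating over $w\in[0,t]$ yields $\widetilde{\omega}_k^j(f,2t,Q)_q\le 2^k\widetilde{\omega}_k^j(f,t,Q)_q$, which completes the proof with a constant depending only on $k$.
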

\begin{proof}
The equivalence is well known for one dimensional case \citep[Chapter 6, Lemma 5.1]{devore1993constructive}. We can use similar argument to prove the high dimensional case $d\ge 2$. Since $\|\Delta_h^{k,j} f\|_{L^q(Q(k,h,j))} \le \omega_k^j(f,t,Q)_q$ for $0\le h\le t$, we immediately get $\widetilde{\omega}_k^j(f,t,Q)_q \le \omega_k^j(f,t,Q)_q$. For the other inequality, we note that one can obtain an identity analogous to \citep[Page 184, Eq. (5.3)]{devore1993constructive} for differences:
\begin{equation}\label{difference identity}
\Delta_h^{k,j} f(\Bx) = \sum_{m=1}^k (-1)^m \binom{k}{m} \left[ \Delta_{ms}^{k,j} f(\Bx+mh\Be_j) - \Delta_{h+ms}^{k,j} f(\Bx) \right],
\end{equation}
for every $s\in \bR$ and $f$ defined on $\bR^d$. Suppose $Q=\prod_{i=1}^d[0,\delta_i]$ for simplicity and denote $Q^{-}_j=\prod_{i=1}^d[0,c_i\delta_i]$ with $c_i=1/2$ for $i=j$ and $c_i=1$ for $i\neq j$. Then, for any $0< h\le t/2\le \delta_j/(8k^2)$ and $0\le s\le t$, $\Bx\in Q^{-}_j$ implies that $\Bx+mh\Be_j \in Q(k,ms,j)$ and $\Bx \in Q(k,h+ms,j)$. Thus, integrating (\ref{difference identity}) gives
\[
\| \Delta_h^{k,j} f\|_{L^q(Q^{-}_j)} \le \sum_{m=1}^k (-1)^m \binom{k}{m} \left[ \|\Delta_{ms}^{k,j} f\|_{L^q(Q(k,ms,j))} - \|\Delta_{h+ms}^{k,j} f\|_{L^q(Q(k,h+ms,j))} \right].
\]
Taking average with respect to $s\in [0,\beta t]$ with $\beta=1-1/(2k)$ and making change of variables, we get
\begin{align*}
\| \Delta_h^{k,j} f\|_{L^q(Q^{-}_j)} &\lesssim \frac{1}{t} \sum_{m=1}^k \left( \int_0^{m\beta t} \|\Delta_{u}^{k,j} f\|_{L^q(Q(k,u,j))} du + \int_h^{h+m\beta t} \|\Delta_{u}^{k,j} f\|_{L^q(Q(k,u,j))} du \right) \\
&\lesssim \frac{1}{t} \int_0^{kt} \|\Delta_{u}^{k,j} f\|_{L^q(Q(k,u,j))} du \lesssim \widetilde{\omega}_k^j(f,kt,Q)_q.
\end{align*}
By symmetry, one can obtain similar bound for $\| \Delta_{-h}^{k,j} f\|_{L^q(Q^{+}_j)}$ with $Q^{+}_j = Q\setminus Q^{-}_j$. Since this is true for all $0<h\le t/2$, the same bound holds for $\omega_k^j(f,t/2,Q)_q$. Thus,
\[
\omega_k^j(f,kt,Q)_q \le (2k+1)^k \omega_k^j(f,t/2,Q)_q \lesssim \widetilde{\omega}_k^j(f,t,Q)_q,
\]
for any $0<t\le \delta_j/(4k^2)$.
\end{proof}

The following lemma decomposes anisotropic Besov functions into piecewise polynomials with different resolutions. We also estimate the coefficients of these piecewise polynomials so that Lemma \ref{app of poly} can be applied. The approximation error can be obtained by truncating the decomposition.

\begin{lemma}\label{anisotropic decomp}
Let $1\le q\le p\le \infty$, $\Bs\in (0,\infty)^d$ and $\Bk\in \bN^d$ with $k_j=\lfloor s_j \rfloor +1$. If $\tilde{s}>1/q-1/p$, then we can decompose any $f\in \cB_q^\Bs(\Omega)$ as 
\[
f = \sum_{\ell=0}^\infty f_\ell,\quad f_\ell(\Bx) = \sum_{\Bi \in I_\ell, \Bgamma \le \Bk -\boldsymbol{1}} a_{\ell,\Bi,\Bgamma} \rho_{\ell,\Bi}^\Bgamma(\Bx) \in \cP_{\ell,\Bk-\boldsymbol{1}},
\]
where $\rho_{\ell,\Bi}^\Bgamma$ is the basis (\ref{basis}), $a_{\ell,\Bi,\Bgamma}\in \bR$ and the convergence is in $L^p(\Omega)$. Furthermore, if we let $\Ba_\ell = (a_{\ell,\Bi,\Bgamma})_{\Bi \in I_\ell, \Bgamma \le \Bk -\boldsymbol{1}}$ be the vector of coefficients of $f_\ell$, then we have the following estimates
\begin{align*}
\|\Ba_\ell\|_q &\lesssim b^{(1/q-\tilde{s})\ell} \|f\|_{\cB_q^\Bs(\Omega)},\\
\|f_\ell\|_{L^p(\Omega)} &\lesssim b^{(1/q-1/p-\tilde{s})\ell} \|f\|_{\cB_q^\Bs(\Omega)},
\end{align*}
where the implied constants depend only on $p, q, \Bs, d$ and the base $b$.
\end{lemma}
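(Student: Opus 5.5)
We follow the standard multiscale construction (as in \citet{yang2025optimal} for the isotropic case), using local polynomial approximations $P_\ell$ on the anisotropic grid (\ref{grid for anisotropic}) and the telescoping decomposition $f_\ell := P_\ell - P_{\ell-1}$ with $P_{-1}:=0$.

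\textbf{Step 1: local polynomials.} For each $\ell$ and $\Bi\in I_\ell$, Lemma \ref{whitney} applied on $\Omega_{\ell,\Bi}$, whose side lengths are $\delta_j=b^{-\ell_j}$, gives a polynomial $p_{\ell,\Bi}\in\cP_{\Bk-\boldsymbol 1}$ with
\[
\|f-p_{\ell,\Bi}\|_{L^q(\Omega_{\ell,\Bi})}\lesssim\sum_j\omega^j_{k_j}(f,b^{-\ell_j},\Omega_{\ell,\Bi})_q .
\]
To make the polynomial choice stable, we take a near-best approximation. Set $P_\ell=\sum_\Bi p_{\ell,\Bi}\mathbbm 1_{\Omega_{\ell,\Bi}}\in\cP_{\ell,\Bk-\boldsymbol 1}$.

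\textbf{Step 2: summing local errors.} For $q<\infty$, Lemma \ref{modulus equ} converts $\omega$ into the averaged modulus $\widetilde\omega$ at a comparable scale $t\asymp b^{-\ell_j}/(4k_j)$; the constant-factor change of scale is absorbed via $\omega(\lambda t)\le(\lambda+1)^k\omega(t)$. We then use the summing property of $\widetilde\omega$ over the partition, and convert back by the trivial inequality. This yields
\[
\|f-P_\ell\|_{L^q(\Omega)}\lesssim\sum_j\omega^j_{k_j}(f,b^{-\ell_j},\Omega)_q\lesssim\sum_j b^{-\ell_js_j}|f|_{\cB^{\Bs,j}_q}\lesssim b^{-\tilde s\ell}\|f\|_{\cB^\Bs_q},
\]
since $\ell_js_j\ge \ell\tilde s-s_j$. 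For $q=\infty$ this is immediate from taking a maximum. Consequently
\[
\|f_\ell\|_{L^q}\le\|f-P_\ell\|_{L^q}+\|f-P_{\ell-1}\|_{L^q}\lesssim b^{-\tilde s\ell}\|f\|_{\cB^\Bs_q},
\]
where the $\ell=0$ case uses $\|P_0\|_{L^q}\lesssim\|f\|_{L^q}$.

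\textbf{Step 3: coefficients and $L^p$ norms.} The grids are nested, so $f_\ell\in\cP_{\ell,\Bk-\boldsymbol 1}$. On each rectangle of volume $b^{-|\Bll|}\asymp b^{-\ell}$, an affine change of variables to $[0,1)^d$ together with equivalence of norms on the finite-dimensional space $\cP_{\Bk-\boldsymbol1}$ gives
\[
\|(a_{\ell,\Bi,\Bgamma})_\Bgamma\|\asymp b^{|\Bll|/q}\|f_\ell\|_{L^q(\Omega_{\ell,\Bi})}
\quad\text{and}\quad
\|f_\ell\|_{L^p(\Omega_{\ell,\Bi})}\asymp b^{-|\Bll|/p}\|(a_{\ell,\Bi,\Bgamma})_\Bgamma\| .
\]
Summing over $\Bi$ gives $\|\Ba_\ell\|_q\lesssim b^{\ell/q}\|f_\ell\|_{L^q}\lesssim b^{(1/q-\tilde s)\ell}\|f\|$. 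Since $p\ge q$, we have $\|\Ba_\ell\|_p\le\|\Ba_\ell\|_q$, and hence $\|f_\ell\|_{L^p}\lesssim b^{-\ell/p}\|\Ba_\ell\|_q$, which is the second estimate.

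\textbf{Step 4: convergence.} Because $\tilde s>1/q-1/p$, the series $\sum_\ell f_\ell$ converges absolutely in $L^p$. Its limit equals $\lim_\ell P_\ell$, which in turn equals $f$ in $L^q$ by Step 2; the two limits agree a.e., so $f=\sum_\ell f_\ell$ in $L^p$.

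The main obstacle is Step 2: localizing the global partial moduli to rectangles. It requires Lemma \ref{modulus equ}, whose hypothesis $t\le\delta_j/(4k)$ forces the scale adjustment described there, together with careful treatment of the case $q=\infty$ and of those coordinates with $\ell_j=0$.
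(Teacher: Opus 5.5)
Your proposal is correct and follows essentially the same route as the paper: Whitney's estimate on each rectangle, passage to the averaged modulus via Lemma~\ref{modulus equ} at a rescaled $t$, the summing property, the telescoping $f_\ell = P_\ell - P_{\ell-1}$, norm equivalence on $\cP_{\Bk-\boldsymbol{1}}$ for the coefficient bounds, and identification of the $L^p$ and $L^q$ limits. The only cosmetic difference is in how $P_\ell$ is chosen. The paper takes a global $L^q$-best approximation from $\cP_{\ell,\Bk-\boldsymbol{1}}$, while you assemble local Whitney (near-best) polynomials; on a disjoint partition these amount to the same thing.
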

\begin{proof}
Let us first consider the case $1\le q<\infty$. Let $g_\ell$ be the $L^q$-projection of $f\in L^q(\Omega)$ onto the space of piecewise polynomials $\cP_{\ell,\Bk-\boldsymbol{1}}$ in the sense that
\[
g_\ell \in \argmin_{g\in \cP_{\ell,\Bk-\boldsymbol{1}}} \|f-g\|_{L^q(\Omega)}.
\]
Since $\cP_{\ell,\Bk-\boldsymbol{1}}$ is a finite dimensional subspace of $L^q(\Omega)$, we know that the above best approximation always exits \citep[Chapter 3, Theorem 1.1]{devore1993constructive}. Applying Lemma \ref{whitney} to each rectangle $\Omega_{\ell,\Bi}$, we get
\[
\| f- g_\ell\|_{L^q(\Omega_{\ell,\Bi})} \lesssim \sum_{j=1}^d \omega_{k_j}^j(f,b^{-\ell_j},\Omega_{\ell,\Bi})_q.
\]
If we let $\lambda = \min_{j\in [1:d]} 1/(5k_j)$, then the inequality $\omega_{k_j}^j(f,t,\Omega)_q \le (1/\lambda+1)^{k_j} \omega_{k_j}^j(f,\lambda t,\Omega)_q$ and Lemma \ref{modulus equ} imply that
\begin{align*}
\omega_{k_j}^j(f,b^{-\ell_j},\Omega_{\ell,\Bi})_q &\le (1/\lambda+1)^{k_j} \omega_{k_j}^j(f,\lambda b^{-\ell_j},\Omega_{\ell,\Bi})_q \\
&\lesssim \widetilde{\omega}_{k_j}^j(f,\lambda b^{-\ell_j},\Omega_{\ell,\Bi})_q.
\end{align*}
As a consequence, by the summing property of $\widetilde{\omega}_{k_j}^j$,
\begin{align*}
\|f-g_\ell\|_{L^q(\Omega)}^q &= \sum_{\Bi\in I_\ell} \| f- g_\ell\|_{L^q(\Omega_{\ell,\Bi})}^q \lesssim \sum_{\Bi\in I_\ell} \left( \sum_{j=1}^d \widetilde{\omega}_{k_j}^j(f,\lambda b^{-\ell_j},\Omega_{\ell,\Bi})_q \right)^q \\
&\lesssim \sum_{j=1}^d \sum_{\Bi\in I_\ell} \widetilde{\omega}_{k_j}^j(f,\lambda b^{-\ell_j},\Omega_{\ell,\Bi})_q^q \le \sum_{j=1}^d \widetilde{\omega}_{k_j}^j(f,\lambda b^{-\ell_j},\Omega)_q^q \\
&\lesssim \sum_{j=1}^d \omega_{k_j}^j(f,b^{-\ell_j},\Omega)_q^q,
\end{align*}
where we use Lemma \ref{modulus equ} and the non-decreasing of $\omega_{k_j}^j$ in the last inequality. For any $f\in \cB_q^\Bs(\Omega)$, by the definition of the semi-norm $|f|_{\cB^{\Bs,j}_{q,\infty}(\Omega)}$, we have
\[
\|f-g_\ell\|_{L^q(\Omega)}^q \lesssim \sum_{j=1}^d \omega_{k_j}^j(f,b^{-\ell_j},\Omega)_q^q \le \sum_{j=1}^d b^{-s_j \ell_j q} |f|_{\cB^{\Bs,j}_{q,\infty}(\Omega)}^q.
\]
Recall that $\ell_j =\lfloor \ell \tilde{s}/s_j \rfloor$, which implies $s_j \ell_j \ge \tilde{s} \ell - s_j$ and hence
\begin{equation}\label{temp1}
\|f-g_\ell\|_{L^q(\Omega)} \lesssim \left(\sum_{j=1}^d b^{-\tilde{s} \ell q} |f|_{\cB^{\Bs,j}_{q,\infty}(\Omega)}^q \right)^{1/q} \lesssim b^{-\tilde{s} \ell} \|f\|_{\cB_q^\Bs(\Omega)}.
\end{equation}

We let $f_0=g_0 \in \cP_{0,\Bk-\boldsymbol{1}}$ and $f_\ell = g_\ell - g_{\ell-1} \in \cP_{\ell,\Bk-\boldsymbol{1}}$ for $\ell\in \bN$. Then, we have the decomposition
\[
f = g_0 + \sum_{\ell=1}^\infty (g_\ell - g_{\ell-1}) = \sum_{\ell=0}^\infty f_\ell,
\]
which converges in $L^q(\Omega)$ by (\ref{temp1}). Furthermore, for $\ell=0$,
\[
\|f_0\|_{L^q(\Omega)} \le \|f-g_0\|_{L^q(\Omega)} + \|f\|_{L^q(\Omega)} \lesssim \|f\|_{\cB_q^\Bs(\Omega)}.
\]
For $\ell \in \bN$, 
\begin{align*}
\|f_\ell\|_{L^q(\Omega)} &= \|g_\ell - f+f- g_{\ell-1}\|_{L^q(\Omega)} \\
&\le \|f-g_\ell\|_{L^q(\Omega)} + \|f - g_{\ell-1}\|_{L^q(\Omega)} \\
&\lesssim b^{-\tilde{s} \ell} \|f\|_{\cB_q^\Bs(\Omega)} + b^{-\tilde{s} (\ell-1)} \|f\|_{\cB_q^\Bs(\Omega)} \\
&\lesssim b^{-\tilde{s} \ell} \|f\|_{\cB_q^\Bs(\Omega)}.
\end{align*}
By decomposing $f_\ell\in \cP_{\ell,\Bk-\boldsymbol{1}}$ in the basis $\rho_{\ell,\Bi}^\Bgamma$, we observe that
\begin{align*}
\|f_\ell\|_{L^q(\Omega)}^q &= \sum_{\Bi\in I_\ell} \int_{\Omega_{\ell,\Bi}} \left| \sum_{\Bgamma \le \Bk -\boldsymbol{1}} a_{\ell,\Bi,\Bgamma} \rho_{\ell,\Bi}^\Bgamma(\Bx) \right|^q d\Bx \\
&= \sum_{\Bi\in I_\ell} b^{-|\Bll|} \int_{\Omega} \left| \sum_{\Bgamma \le \Bk -\boldsymbol{1}} a_{\ell,\Bi,\Bgamma} \Bx^{\Bgamma} \right|^q d\Bx \\
&\asymp b^{-\ell} \|\Ba_\ell\|_q^q,
\end{align*}
where we use $b^{-|\Bll|} \asymp b^{-\ell}$ and the fact that all norms are equivalent in finite dimensional spaces in the last inequality. Hence,
\[
\|\Ba_\ell\|_q \lesssim b^{\ell/q} \|f_\ell\|_{L^q(\Omega)} \lesssim b^{(1/q-\tilde{s})\ell} \|f\|_{\cB_q^\Bs(\Omega)}.
\]
Similarly, for $p\ge q$,
\begin{align*}
\|f_\ell\|_{L^p(\Omega)} &\asymp b^{-\ell/p} \|\Ba_\ell\|_p \le b^{-\ell/p} \|\Ba_\ell\|_q \\
&\lesssim b^{(1/q-1/p-\tilde{s})\ell} \|f\|_{\cB_q^\Bs(\Omega)}.
\end{align*}
Since $\tilde{s}>1/q-1/p$, we conclude that the sum $\sum_{\ell=0}^\infty f_\ell$ converges to some function $\widetilde{f}$ in $L^p(\Omega)$. However, we already know that it converges to $f$ in $L^q(\Omega)$. Hence, we must have $f=\widetilde{f}$ almost everywhere.

When $q=\infty$, we can replace (\ref{temp1}) by 
\begin{align*}
\|f-g_\ell\|_{L^\infty(\Omega)} &\le \max_{\Bi \in I_\ell} \| f- g_\ell\|_{L^\infty(\Omega_{\ell,\Bi})} \lesssim \max_{\Bi \in I_\ell} \sum_{j=1}^d \omega_{k_j}^j(f,b^{-\ell_j},\Omega_{\ell,\Bi})_\infty \\
&\le \sum_{j=1}^d b^{-s_j \ell_j} |f|_{\cB^{\Bs,j}_{\infty,\infty}(\Omega)} \lesssim b^{-\tilde{s} \ell} \|f\|_{\cB_\infty^\Bs(\Omega)}.
\end{align*}
Other estimates can be proven similarly as above.
\end{proof}

\subsubsection{Proof of Theorem \ref{app of anisotropic}}

Now, we can combine Lemmas \ref{app of poly} and \ref{anisotropic decomp} to approximate a target function in anisotropic Besov spaces and prove Theorem \ref{app of anisotropic}. We will first derive approximation result by removing an arbitrarily small trifling set in the following proposition and then show that this restriction can be removed. Recall that we denote $\Bll =(\ell_1,\dots, \ell_d)$ with $\ell_j = \lfloor \ell \tilde{s}/s_j \rfloor$ for $\ell\in \bN_0$.

\begin{proposition}\label{app of anisotropic part}
Let $1\le q\le p\le \infty$ and $\Bs\in (0,\infty)^d$. Assume that $\tilde{s}>1/q-1/p$. Suppose $\alpha,\beta \in \bN$, $\ell^*$ is the largest integer such that $|\Bll^*| = \sum_{j=1}^d \lfloor \ell^* \tilde{s}/s_j \rfloor \le 2\kappa (\alpha+\beta)$ with
\[
\kappa := \frac{\tilde{s}}{\tilde{s}+1/p-1/q} \in [1,\infty).
\]
Then, for any $f\in \cB_q^\Bs(\Omega)$, there exists a network $g_{\alpha,\beta} \in \NN(W,L)$ with $W \le Cb^{\alpha}$ and $L\le Cb^{\beta}$ such that, for any $0<\epsilon<\min_{j\in [1:d]} b^{-\ell_j^*}$,
\[
\|f-g_{\alpha,\beta} \|_{L^p(\Omega_{\ell^*,\epsilon})} \le C b^{-2\tilde{s}(\alpha+\beta)} \|f\|_{\cB_q^\Bs(\Omega)}.
\]
Here the constants $C:=C(p,q,\Bs,d,b)$ depend only on $p,q,\Bs,d$ and the base $b$.
\end{proposition}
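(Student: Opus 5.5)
We plan to follow the multiscale argument of \citet{yang2025optimal}, adapted to the anisotropic grid (\ref{grid for anisotropic}). Apply Lemma \ref{anisotropic decomp} to write $f=\sum_{\ell\ge0} f_\ell$ with $\|\Ba_\ell\|_q\lesssim b^{(1/q-\tilde s)\ell}\|f\|_{\cB_q^\Bs(\Omega)}$. We may normalize so that $\|f\|_{\cB_q^\Bs(\Omega)}=1$. Then split $f=\sum_{\ell\le \ell^*} f_\ell + \sum_{\ell>\ell^*} f_\ell$. The tail is bounded directly by the second estimate of Lemma \ref{anisotropic decomp}:
\[
\sum_{\ell>\ell^*}\|f_\ell\|_{L^p}\lesssim b^{-(\tilde s+1/p-1/q)\ell^*}.
\]
Since $|\Bll^*|\le \ell^*+d$, the maximality of $\ell^*$ gives $\ell^*\ge 2\kappa(\alpha+\beta)-C$. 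Hence $(\tilde s+1/p-1/q)\ell^*\ge 2\tilde s(\alpha+\beta)-C$, so the tail is of the required order. Each $f_\ell$ with $\ell\le\ell^*$ is piecewise polynomial on the grid at level $\ell$. Every rectangle of level $\ell^*$ lies inside one of level $\ell$, and $\Omega_{\ell^*,\epsilon}\subseteq\Omega_{\ell,\epsilon}$ when $\ell_j\le\ell_j^*$ for all $j$. Therefore Lemma \ref{app of poly} applies to each $f_\ell$ on $\Omega_{\ell^*,\epsilon}$ with the same $\epsilon$.

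For each $\ell\le\ell^*$ we choose $\delta=\delta_\ell$ and parameters $S,T,W_0,L_0$ so that a network of width $\lesssim b^\alpha$ and depth $\lesssim b^\beta$ gives error $\lesssim b^{-2\tilde s(\alpha+\beta)}c_\ell$, where $\sum_\ell c_\ell<\infty$ uniformly. We take $L_0\asymp b^{\beta}$ and $W_0$ a constant, so that $4^{-W_0L_0}$ is negligible and $b^{\ell_j/L_0}\le b^{C}$ stays bounded. We also take $T$ to be a fixed constant chosen large. The two regimes are as follows.
\begin{enumerate}[label=\textnormal{(\roman*)},parsep=0pt]
\item Coarse levels, $\ell\le \ell_0:=2(\alpha+\beta)$. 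Use Lemma \ref{app of poly} Part (2) with $\delta q\ge|\Bll|$ and $\delta\approx 2\tilde s(\alpha+\beta)+(1/q-\tilde s)\ell+\eta(\ell_0-\ell)$, where $\eta>0$ is small. This makes the error $b^{-\delta}\|\Ba_\ell\|_q\lesssim b^{-2\tilde s(\alpha+\beta)}b^{-\eta(\ell_0-\ell)}$. The network size is $b^{\delta+|\Bll|/2-|\Bll|/q}$, which is about $b^{(\alpha+\beta)(1+2\tilde s)-\tilde s(\ell_0-\ell)+\eta(\ell_0-\ell)}$ up to constants. Dividing by $S\sqrt T$ with $S\asymp b^{\beta}$, this is $\lesssim b^{\alpha}b^{-(\tilde s-\eta)(\ell_0-\ell)}$ after checking the exponent identity $|\Bll|\approx\ell$.
\item Fine levels, $\ell_0<\ell\le\ell^*$, which occur only when $q<p$. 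Use Part (1) with $\delta q\le|\Bll|$ and $\delta$ chosen so that $b^{\delta q/p-|\Bll|/p-\delta}\|\Ba_\ell\|_q\lesssim b^{-2\tilde s(\alpha+\beta)}b^{-\eta'(\ell-\ell_0)}$. This forces $\delta q\approx 2(\alpha+\beta)-c(\ell-\ell_0)$ for some $c>0$, which decreases geometrically in the required size $b^{\delta q/2}$. The condition $\delta q\le |\Bll|$ and the term $b^{(|\Bll|-\delta q)/T}$ is controlled because $|\Bll|-\delta q\le C\kappa(\alpha+\beta)$, and taking $T$ a large constant times $\kappa$ keeps it $\lesssim b^{\alpha}$ up to constants, or we take $T$ adaptively. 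The definition of $\kappa$ is exactly what makes the exponent bookkeeping close at $\ell=\ell^*$.
\end{enumerate}

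Finally, sum the networks for $\ell\le\ell^*$. There are $\ell^*\lesssim\alpha+\beta$ of them, and their widths and depths decay geometrically away from $\ell_0$. We therefore combine them in parallel or sequentially via Proposition \ref{basic constr} Part (4) and Corollary \ref{basi constr co}, putting the levels sequentially in depth and using the geometric decay so the total depth is $\lesssim b^\beta$ and the width $\lesssim b^\alpha$. We expect the main obstacle to be the exponent bookkeeping in step (ii). One must choose $\delta_\ell,S_\ell,T$ so that all three width terms of Lemma \ref{app of poly} Part (1) stay $\lesssim b^\alpha$ with depths $\lesssim b^\beta$, while the errors sum geometrically up to the cutoff $|\Bll^*|\le 2\kappa(\alpha+\beta)$. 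If a fixed $T$ fails, the first fallback is to let $S_\ell$ and $T_\ell$ depend on $\ell$, mirroring \citet[Proposition 4.13]{yang2025optimal}.
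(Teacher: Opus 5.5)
Your skeleton matches the paper's. You decompose via Lemma~\ref{anisotropic decomp}, bound the tail using $\kappa(\tilde{s}+1/p-1/q)=\tilde{s}$, and use Part~(2) of Lemma~\ref{app of poly} below $|\Bll|=2(\alpha+\beta)$ and Part~(1) above. Your $\delta$ is the paper's with $\eta=1$ on the coarse side and $c=\tau$ on the fine side. However, the parameter choices you commit to fail, and there is a genuine gap. The step you defer is fitting all levels into width $Cb^\alpha$ and depth $Cb^\beta$ uniformly in $(\alpha,\beta)$, and that step is the actual content of the proposition.

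\textbf{Choice of $W_0,L_0$.} Taking $W_0$ constant and $L_0\asymp b^\beta$ does not work. For fixed $\beta$ and $\alpha\to\infty$, the term $4^{-W_0L_0}\|\Ba_\ell\|_q$ stays of constant order, and $\|\Ba_\ell\|_q$ may even grow like $b^{(1/q-\tilde{s})|\Bll|}$. Also, every level then has depth at least $L_0$. Stacking the $\asymp\alpha+\beta$ levels therefore costs depth $(\alpha+\beta)b^\beta$. Placing them in parallel costs width $\gtrsim\alpha+\beta$, which is not $O(b^\alpha)$ when $\beta\gg b^\alpha$. The paper takes $W_0=\lceil(\alpha/2)\log_2 b\rceil$ and $L_0=\lceil 4(\tilde{s}+\kappa)\beta\rceil$. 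This gives $W_02^{W_0}\lesssim\alpha b^{\alpha/2}$ and $b^{|\Bll^*|/L_0}\lesssim b^{\alpha/2}$, and, via $\alpha\beta\ge(\alpha+\beta)/2$, it gives $4^{-W_0L_0}\le b^{-2\tilde{s}(\alpha+\beta)-|\Bll^*|}$.

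\textbf{Coarse-level width.} Your estimate is incorrect. With $|\Bll|\approx\ell$ one has $\delta+|\Bll|/2-|\Bll|/q=(\tilde{s}+\eta)(\ell_0-\ell)+\ell/2$. So with $S\asymp b^\beta$ the first width term is about $b^{\alpha+(\tilde{s}+\eta-1/2)(\ell_0-\ell)}$, not $b^{\alpha-(\tilde{s}-\eta)(\ell_0-\ell)}$. It blows up at small $\ell$ whenever $\tilde{s}>1/2$. Since the Part~(2) depth is $\lceil b^{-\delta+|\Bll|/q}S\rceil T$, $S$ must carry the factor $b^{\delta-|\Bll|/q}$.

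\textbf{Fixed $T$.} A fixed $T$ cannot control $b^{(\delta-|\Bll|/q)/T}$ on the coarse side or $b^{(|\Bll|-\delta q)/T}$ on the fine side. Both exponents are of order $\alpha+\beta$, so $T\gtrsim(\alpha+\beta)/\alpha$ is needed, which is unbounded when $\beta\gg\alpha$.

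\textbf{Stacking in depth.} Your claim that widths and depths both decay geometrically away from $\ell_0$, so the levels can be stacked in depth, is false. A level with $|\Bll|$ near $2(\alpha+\beta)$ is an interpolation problem with $WL\approx b^{|\Bll|/2}\approx b^{\alpha+\beta}$. At width $b^\alpha$ it needs depth $\approx b^\beta$, and $\asymp\alpha$ such levels cannot be composed sequentially.

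\textbf{Missing idea: the paper's four groups.} Each group is summed in the one way (Proposition~\ref{basic constr} Part~(4)) that its decay allows.
\begin{itemize}
\item For $|\Bll|\le 2\beta$: take $S=\lceil b^{\delta+|\Bll|/2-|\Bll|/q}\rceil$ and $T=\lceil(\tilde{s}+1)(2+2\beta-|\Bll|)\rceil$. This gives width $O(b^\alpha)$ and depth $\approx b^{|\Bll|/2}T$, summed sequentially.
\item For $2\beta<|\Bll|\le 2\alpha+2\beta$: take $S=\lceil b^{\beta+\delta-|\Bll|/q}\rceil$ and constant $T$. This gives depth $O(b^\beta)$ and width $b^{|\Bll|/2-\beta}+b^{\alpha+\beta-|\Bll|/2}$, two geometric series each summing to $O(b^\alpha)$; these levels are summed in parallel.
\item Above $2(\alpha+\beta)$: fix $\tau<1/(\kappa-1)$.
\begin{itemize}
\item Up to $|\Bll|=2\alpha+2\beta+2\alpha/\tau$, sum in parallel with $S=b^\beta$ and constant $T$; the widths $b^{\alpha-\tau(|\Bll|/2-\alpha-\beta)}+b^{\tau(|\Bll|/2-\alpha-\beta)}$ are geometric and bounded by $b^\alpha$.
\item Beyond that, sum sequentially with $S=\lceil b^{\delta q/2}\rceil$ and $T$ growing linearly in $|\Bll|$, so the depth decays geometrically.
\end{itemize}
\end{itemize}
You mention this only as a fallback and do not carry it out, so your width and depth bounds are not established.
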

\begin{proof}
We use similar argument as \citet[Proposition 4.13]{yang2025optimal}. Without loss of generality, we assume that $\|f\|_{\cB_q^\Bs(\Omega)}\le 1$. By Lemma \ref{anisotropic decomp}, we can decompose $f\in \cB_q^\Bs(\Omega)$ as
\[
f = \sum_{\ell=0}^\infty f_\ell,\quad f_\ell(\Bx) = \sum_{\Bi \in I_\ell, \Bgamma \le \Bk -\boldsymbol{1}} a_{\ell,\Bi,\Bgamma} \rho_{\ell,\Bi}^\Bgamma(\Bx) \in \cP_{\ell,\Bk-\boldsymbol{1}},
\]
where $k_j=\lfloor s_j \rfloor +1$, $f_\ell$ and its coefficients $\Ba_\ell = (a_{\ell,\Bi,\Bgamma})_{\Bi \in I_\ell, \Bgamma \le \Bk -\boldsymbol{1}}$ satisfy
\begin{align*}
\|\Ba_\ell\|_q &\lesssim b^{(1/q-\tilde{s})\ell} \lesssim b^{(1/q-\tilde{s})|\Bll|},\\
\|f_\ell\|_{L^p(\Omega)} &\lesssim b^{(1/q-1/p-\tilde{s})\ell} \lesssim b^{(1/q-1/p-\tilde{s})|\Bll|},
\end{align*}
where we use $0\le \ell-|\Bll|\le d$ and the implied constants depend on $p,q,\Bs,d,b$.
We are going to apply Lemma \ref{app of poly} to approximate each $f_\ell$ by a neural network $g_\ell$ for $\ell \in \cL:= [0:\ell^*]$. To do this, we will need to choose the parameters $\delta, S, T, W_0, L_0$ in Lemma \ref{app of poly} as functions of $\ell$. Firstly, we set
\begin{align*}
W_0(\ell) &= \left\lceil (\alpha/2) \log_2 b \right\rceil,  \\
L_0(\ell) &=  \left\lceil 4(\tilde{s}+\kappa) \beta \right\rceil.
\end{align*}
Since $|\Bll^*| \le 2\kappa(\alpha+\beta)$, we have the following estimates on the width in Lemma \ref{app of poly},
\begin{align*}
\sum_{j=1}^d b^{\ell_j/L_0 } &\le d b^{|\Bll^*|/L_0} \le d b^{2\kappa(\alpha+\beta)/(4\kappa \beta)} \le C b^{\alpha/2}, \\
W_02^{W_0} &\le C \alpha b^{\alpha/2}.
\end{align*}
Using the inequalities $\alpha \beta \ge (\alpha+\beta)/2$ for $\alpha,\beta \ge 1$ and $|\Bll^*|\le 2\kappa(\alpha+\beta)$, we get
\begin{equation}\label{small error}
4^{-W_0L_0} \le b^{-\alpha L_0} \le b^{-4(\tilde{s}+\kappa )\alpha \beta} \le b^{-2(\tilde{s}+\kappa )(\alpha + \beta)} \le b^{-2\tilde{s}(\alpha+\beta) - |\Bll^*|}.
\end{equation}
To choose the remaining parameters $S, T$ and $\delta$, we are going to decompose the index set $\cL$ into two subsets according to the two regimes in Lemma \ref{app of poly}. Since the desired network $g_{\alpha,\beta}=\sum_{\ell=0}^{\ell^*} g_\ell$ is a summation of small networks. In order to maintain the size of $g_{\alpha,\beta}$, we will need to further decompose each index subset above into two parts, so that we can apply Proposition \ref{basic constr} Part (4) in two different ways (through parallel or sequential computation). Overall, we decompose the index set $\cL$ into four disjoint subsets $\cL = \cup_{i=1}^4 \cL_i$, which will be given explicitly below. For each $\cL_i$, we denote
\[
F_i:= \sum_{\ell \in \cL_i} f_\ell,\quad \mbox{and}\quad G_i:= \sum_{\ell \in \cL_i} g_\ell.
\]
By choosing $S(\ell), T(\ell)$ and $\delta(\ell)$ appropriately in Lemma \ref{app of poly}, we are going to show that $G_i \in \NN(W_i,L_i)$ with width $W_i\le Cb^{\alpha}$ and depth $L_i\le Cb^{\beta}$ and derive the approximation error bound $\|F_i-G_i \|_{L^p(\Omega_{\ell^*,\epsilon})} \le C b^{-2\tilde{s}(\alpha+\beta)}$ for each $i=[1:4]$. Once we establish these estimates, we can construct the desired network as 
\[
g_{\alpha,\beta} = \sum_{i=1}^4 G_i =\sum_{\ell=0}^{\ell^*} g_\ell \in \NN(W,L),
\]
where $W\le Cb^{\alpha}$ and $L\le Cb^{\beta}$ by Proposition \ref{basic constr}. Its approximation error can be bounded as
\begin{align*}
\|f-g_{\alpha,\beta} \|_{L^p(\Omega_{\ell^*,\epsilon})} &\le \sum_{i=1}^4 \|  F_i - G_i\|_{L^p(\Omega_{\ell^*,\epsilon})} + \sum_{\ell=\ell^*+1}^\infty \|f_\ell\|_{L^p(\Omega_{\ell^*,\epsilon})} \\
&\le C b^{-2\tilde{s}(\alpha+\beta)} + C \sum_{\ell=\ell^*+1}^\infty  b^{(1/q-1/p-\tilde{s})|\Bll|} \\
&\le C b^{-2\tilde{s}(\alpha+\beta)} + C b^{2\kappa(\alpha+\beta)(1/q-1/p-\tilde{s})}  \\
&\le C b^{-2\tilde{s}(\alpha+\beta)},
\end{align*}
where we use $|\Bll^*|+1\ge 2\kappa(\alpha+\beta)$ in the second last inequality and $\kappa(1/q-1/p-\tilde{s})=-\tilde{s}$ in the last inequality.

Now, we give the construction of $G_i$ below. Recall that $\Bll =(\ell_1,\dots, \ell_d)$ with $\ell_j = \lfloor \ell \tilde{s}/s_j \rfloor$.

\textbf{Case 1:} $\ell \in \cL_1=\{\ell \in \cL: |\Bll| \le 2\beta\}$. We choose
\[
\delta(\ell) = |\Bll|/q + (\tilde{s}+1)(2\alpha+2\beta-|\Bll|).
\]
Since $\delta q \ge |\Bll|$, we apply Lemma \ref{app of poly} Part (2) to approximate $f_\ell$ with parameters
\begin{align*}
S(\ell) &= \left\lceil b^{\delta+|\Bll|/2-|\Bll|/q} \right\rceil, \\
T(\ell) &= \left\lceil (\tilde{s}+1)(2+2\beta-|\Bll|) \right\rceil.
\end{align*}
This gives us a neural network $g_\ell \in \NN(W(\ell),L(\ell))$ with width
\begin{align*}
W(\ell) &\le C \max \left\{ \frac{b^{\delta+|\Bll|/2-|\Bll|/q}}{S \sqrt{T}} , b^{(\delta-|\Bll|/q)/T}, \sum_{j=1}^d b^{\ell_j/L_0 }, W_0 2^{W_0} \right\} \\
&\le C \max \left\{ 1, b^{\alpha+1}, b^{\alpha/2}, \alpha b^{\alpha/2} \right\} \\
&\le Cb^{\alpha},
\end{align*}
and depth
\[
L(\ell) \le C \left(\left\lceil b^{-\delta + |\Bll|/q} S \right\rceil T+L_0 \right) \le C \left(b^{|\Bll|/2}(2+2\beta-|\Bll|) + \beta \right).
\]
Since $\|\Ba_\ell\|_q \lesssim b^{(1/q-\tilde{s})|\Bll|}$, its approximation error is
\begin{equation}\label{error 1}
\begin{aligned}
\|f_\ell-g_\ell\|_{L^p(\Omega_{\ell,\epsilon})} &\le C \left(b^{-\delta} + 4^{-W_0L_0}\right) b^{(1/q-\tilde{s})|\Bll|} \\
&\le C b^{-2\tilde{s}(\alpha+\beta)} \left( b^{-2\alpha-2\beta+|\Bll|} +b^{-\tilde{s}|\Bll|} \right),
\end{aligned}
\end{equation}
where we also use the inequality (\ref{small error}). By Proposition \ref{basic constr} Part (4), we can implement the sum $G_1 = \sum_{\ell\in\cL_1} g_\ell$ by a neural network in a sequential way, so that $G_1\in \NN(W_1,L_1)$ with
\begin{align*}
W_1 &= \max_{\ell \in \cL_1} W(\ell)+2d+2 \le Cb^{\alpha}, \\
L_1 &= \sum_{\ell \in \cL_1} L(\ell) \le C \sum_{|\Bll| \le 2\beta} \left(b^{|\Bll|/2}(2+2\beta-|\Bll|) + \beta \right) \le C b^{\beta},
\end{align*}
because the sum is bounded by convergent geometric series. Since $\Omega_{\ell^*,\epsilon} \subseteq \Omega_{\ell,\epsilon}$ for $\ell \le \ell^*$, we get the approximation error bound
\begin{align*}
\|F_1-G_1\|_{L^p(\Omega_{\ell^*,\epsilon})} &\le \sum_{|\Bll| \le 2\beta} \|f_\ell-g_\ell\|_{L^p(\Omega_{\ell^*,\epsilon})} \\
&\le C b^{-2\tilde{s}(\alpha+\beta)} \sum_{|\Bll| \le 2\beta} \left( b^{-2\alpha-2\beta+|\Bll|} +b^{-\tilde{s}|\Bll|} \right) \\
&\le C b^{-2\tilde{s}(\alpha+\beta)}.
\end{align*}

\textbf{Case 2:} $\ell \in \cL_2=\{\ell \in \cL: 2\beta < |\Bll| \le 2\alpha + 2\beta\}$. We choose $\delta(\ell)$ as in Case 1 so that $\delta q \ge |\Bll|$ and we can apply Lemma \ref{app of poly} Part (2) with parameters
\begin{align*}
S(\ell) &= \left\lceil b^{\beta +\delta-|\Bll|/q} \right\rceil, \\
T(\ell) &= \left\lceil 2(\tilde{s}+1) \right\rceil.
\end{align*}
Thus, we approximate $f_\ell$ by a neural network $g_\ell \in \NN(W(\ell),L(\ell))$ with width
\begin{align*}
W(\ell) &\le C \max \left\{ \frac{b^{\delta+|\Bll|/2-|\Bll|/q}}{S \sqrt{T}} , b^{(\delta-|\Bll|/q)/T}, \sum_{j=1}^d b^{\ell_j/L_0 }, W_0 2^{W_0} \right\} \\
&\le C \max \left\{ b^{|\Bll|/2-\beta}, b^{\alpha+\beta-|\Bll|/2}, b^{\alpha/2}, \alpha b^{\alpha/2} \right\} \\
&\le Cb^{\alpha} \left( b^{|\Bll|/2-\alpha-\beta} + b^{\beta - |\Bll|/2} + \alpha b^{-\alpha/2} \right),
\end{align*}
and depth
\[
L(\ell) \le C \left(\left\lceil b^{-\delta + |\Bll|/q} S \right\rceil T+L_0 \right) \le C \left(b^{\beta} + \beta \right) \le Cb^{\beta}.
\]
Using Proposition \ref{basic constr} Part (4), we can implement the sum $G_2 = \sum_{\ell\in\cL_2} g_\ell$ in a parallel way, so that $G_2\in \NN(W_2,L_2)$ with
\begin{align*}
W_2 &= \sum_{\ell \in \cL_2} W(\ell) \le Cb^{\alpha} \sum_{2\beta < |\Bll| \le 2\alpha + 2\beta} \left( b^{|\Bll|/2-\alpha-\beta} + b^{\beta - |\Bll|/2} + \alpha b^{-\alpha/2} \right) \le Cb^{\alpha}, \\
L_2 &= \max_{\ell \in \cL_2} L(\ell) \le  C b^{\beta}.
\end{align*}
By Lemma \ref{app of poly}, we also get the approximation error (\ref{error 1}) in this case. Hence,
\begin{align*}
\|F_2-G_2\|_{L^p(\Omega_{\ell^*,\epsilon})} &\le \sum_{2\beta < |\Bll| \le 2\alpha + 2\beta} \|f_\ell-g_\ell\|_{L^p(\Omega_{\ell^*,\epsilon})} \\
&\le C b^{-2\tilde{s}(\alpha+\beta)} \sum_{2\beta < |\Bll| \le 2\alpha + 2\beta} \left( b^{-2\alpha-2\beta+|\Bll|} +b^{-\tilde{s}|\Bll|} \right) \\
&\le C b^{-2\tilde{s}(\alpha+\beta)}.
\end{align*}

\textbf{Case 3:} $\ell \in \cL_3=\{\ell \in \cL: 2\alpha+2\beta < |\Bll| \le 2\alpha+2\beta + 2\alpha/\tau\}$, where $\tau$ is chosen to satisfy 
\begin{equation}\label{tau}
0< \tau < \frac{\tilde{s}+1/p-1/q}{1/q-1/p} = \frac{1}{\kappa -1}.
\end{equation}
Note that this condition can be satisfied since $q\le p$ and $\tilde{s}>1/q-1/p$ by assumption. In this case, we choose
\begin{align*}
\delta(\ell) &= (2\alpha+2\beta)/q - \tau (|\Bll| - 2\alpha- 2\beta)/q, \\
S(\ell) &= b^{\beta}, \\
T(\ell) &= \left\lceil 2/\tau +2 \right\rceil,
\end{align*}
Since $\delta q \le (2\alpha+2\beta) \le |\Bll|$, we can apply Lemma \ref{app of poly} Part (1) to approximate $f_\ell$ by a neural network $g_\ell \in \NN(W(\ell),L(\ell))$ with depth
\[
L(\ell) \le C(ST+L_0) \le C \left(b^{\beta} + \beta \right) \le Cb^{\beta}.
\]
Since
\[
\frac{|\Bll| - \delta q}{T} \le \frac{(1+\tau)(|\Bll| - 2\alpha-2\beta)}{2/\tau + 2} = \tau(|\Bll|/2-\alpha-\beta),
\]
the width satisfies
\begin{align*}
W(\ell) &\le C \max \left\{ \frac{b^{\delta q/2}}{S \sqrt{T}}, b^{(|\Bll| -\delta q)/T}, \sum_{j=1}^d b^{\ell_j/L_0 }, W_0 2^{W_0} \right\} \\
&\le C \max \left\{ b^{\alpha+\tau(\alpha+\beta-|\Bll|/2)}, b^{\tau(|\Bll|/2-\alpha-\beta)}, b^{\alpha/2}, \alpha b^{\alpha/2} \right\} \\
&\le C b^{\alpha}\left( b^{\tau(\alpha+\beta-|\Bll|/2)} + b^{-\alpha+\tau(|\Bll|/2-\alpha-\beta)} + \alpha b^{-\alpha/2} \right).
\end{align*}
By Proposition \ref{basic constr} Part (4), we can implement the sum $G_3 = \sum_{\ell\in\cL_3} g_\ell$ in a parallel way, so that $G_3\in \NN(W_3,L_3)$ with
\begin{align*}
W_3 &= \sum_{\ell \in \cL_3} W(\ell) \le C b^{\alpha} \sum_{\alpha+\beta < |\Bll|/2 \le \alpha+\beta + \alpha/\tau} \left( b^{\tau(\alpha+\beta-|\Bll|/2)} + b^{-\alpha+\tau(|\Bll|/2-\alpha-\beta)} + \alpha b^{-\alpha/2} \right) \le Cb^{\alpha}, \\
L_3 &= \max_{\ell \in \cL_3} L(\ell) \le  C b^{\beta}.
\end{align*}
By Lemma \ref{app of poly} and $\|\Ba_\ell\|_q \lesssim b^{(1/q-\tilde{s})|\Bll|}$, the approximation error satisfies
\begin{equation}\label{error 2}
\begin{aligned}
\|f_\ell-g_\ell\|_{L^p(\Omega_{\ell,\epsilon})} &\le C \left(b^{\delta q/p -|\Bll|/p-\delta} + 4^{-W_0L_0}\right) b^{(1/q-\tilde{s})|\Bll|} \\
&\le C b^{-2\tilde{s}(\alpha+\beta)} \left( b^{\eta (|\Bll| -2\alpha-2\beta)} +b^{-\tilde{s}|\Bll|} \right),
\end{aligned}
\end{equation}
where $\eta:= 1/q-1/p-\tilde{s}+\tau(1/q-1/p)<0$ by condition (\ref{tau}) and we use (\ref{small error}) in the last inequality. Hence,
\begin{align*}
\|F_3-G_3\|_{L^p(\Omega_{\ell^*,\epsilon})} &\le \sum_{2\alpha+2\beta < |\Bll| \le 2\alpha+2\beta + 2\alpha/\tau} \|f_\ell-g_\ell\|_{L^p(\Omega_{\ell^*,\epsilon})} \\
&\le C b^{-2\tilde{s}(\alpha+\beta)} \sum_{2\alpha+2\beta < |\Bll| \le 2\alpha+2\beta + 2\alpha/\tau} \left( b^{\eta (|\Bll| -2\alpha-2\beta)} +b^{-\tilde{s}|\Bll|} \right) \\
&\le C b^{-2\tilde{s}(\alpha+\beta)}.
\end{align*}

\textbf{Case 4:} $\ell \in \cL_4=\{\ell \in \cL: 2\alpha+2\beta + 2\alpha/\tau < |\Bll| \le 2\kappa (\alpha+\beta) \}$. Observe that, by condition (\ref{tau}),
\[
|\Bll^*| - 2\alpha -2\beta \le 2(\kappa-1)(\alpha+\beta)<2(\alpha+\beta)/\tau.
\]
Hence, we can choose $\delta(\ell)$ as in Case 3 such that $\delta>0$ and $\delta q < |\Bll|$. We apply Lemma \ref{app of poly} Part (1) to approximate $f_\ell$ with parameters
\begin{align*}
S(\ell) &= \left\lceil b^{\delta q/2} \right\rceil, \\
T(\ell) &= \left\lceil 2/\tau + 2 \right\rceil \left\lceil \tau(|\Bll|/2-\alpha-\beta)-\alpha \right\rceil,
\end{align*}
which gives us a neural network $g_\ell \in \NN(W(\ell),L(\ell))$ with depth
\[
L(\ell) \le C(ST+L_0) \le C \left(b^{\alpha+\beta-\tau(|\Bll|/2-\alpha-\beta)}(\tau(|\Bll|/2-\alpha-\beta)-\alpha) + \beta \right).
\]
Since
\begin{align*}
\frac{|\Bll| -\delta q}{T} &\le \frac{(1+\tau)(|\Bll| - 2\alpha-2\beta)}{(2/\tau + 2) \lceil \tau(|\Bll|/2-\alpha-\beta)-\alpha \rceil} = \frac{\tau(|\Bll|/2-\alpha-\beta)}{\lceil \tau(|\Bll|/2-\alpha-\beta)-\alpha \rceil} \\
&\le 1+ \frac{\alpha}{\lceil \tau(|\Bll|/2-\alpha-\beta)-\alpha \rceil} \le 1+\alpha,
\end{align*}
its width $W(\ell)$ can be bounded as
\begin{align*}
W(\ell) &\le C \max \left\{ \frac{b^{\delta q/2}}{S \sqrt{T}}, b^{(|\Bll| -\delta q)/T}, \sum_{j=1}^d b^{\ell_j/L_0 }, W_0 2^{W_0} \right\} \\
&\le C \max \left\{ 1, b^{\alpha+1}, b^{\alpha/2}, \alpha b^{\alpha/2} \right\} \\
&\le Cb^{\alpha}.
\end{align*}
Using Proposition \ref{basic constr} Part (4), we can implement the sum $G_4 = \sum_{\ell\in\cL_4} g_\ell$ in a sequential way, so that $G_4\in \NN(W_4,L_4)$ with
\begin{align*}
W_4 &= \max_{\ell \in \cL_4} W(\ell) +2d+2 \le Cb^{\alpha}, \\
L_4 &= \sum_{\ell \in \cL_4} L(\ell) \le C \sum_{2\alpha+2\beta + 2\alpha/\tau < |\Bll| \le 2\kappa (\alpha+\beta)} \left(b^{\alpha+\beta-\tau(|\Bll|/2-\alpha-\beta)}(\tau(|\Bll|/2-\alpha-\beta)-\alpha) + \beta \right) \\
&\le C \sum_{j = \lceil 2\alpha/\tau \rceil}^{\lfloor 2(\alpha+\beta)/\tau \rfloor} \left(b^{\alpha+\beta-\tau j/2}(\tau j/2-\alpha) + \beta \right) \le Cb^{\beta},
\end{align*}
where we use $2\kappa (\alpha+\beta) - 2\alpha -2\beta <2(\alpha+\beta)/\tau$ and the last series is dominated by the term corresponding to $j=\lceil 2\alpha/\tau \rceil$. By Lemma \ref{app of poly}, we also get the approximation error bound (\ref{error 2}) in this case. Hence,
\begin{align*}
\|F_4-G_4\|_{L^p(\Omega_{\ell^*,\epsilon})} &\le \sum_{2\alpha+2\beta + 2\alpha/\tau < |\Bll| \le 2\kappa (\alpha+\beta)} \|f_\ell-g_\ell\|_{L^p(\Omega_{\ell^*,\epsilon})} \\
&\le C b^{-2\tilde{s}(\alpha+\beta)} \sum_{2\alpha+2\beta + 2\alpha/\tau < |\Bll| \le 2\kappa (\alpha+\beta)} \left( b^{\eta (|\Bll| -2\alpha-2\beta)} +b^{-\tilde{s}|\Bll|} \right) \\
&\le C b^{-2\tilde{s}(\alpha+\beta)},
\end{align*}
which completes the proof.
\end{proof}

Finally, we show how to remove the trifling region to give a proof of Theorem \ref{app of anisotropic}. Following the idea from \citet{siegel2023optimal}, we will use different bases $b_k$ to create minimally overlapping trifling regions and apply the sorting network in Proposition \ref{select median} to select a good approximation.

\begin{proof}[Proof of Theorem \ref{app of anisotropic}]
Observe that when $p<q$, we have $\|f-g\|_{L^p(\Omega)} \le \|f-g\|_{L^q(\Omega)}$ with $\Omega = [0,1)^d$ as above. Hence, it is enough to consider the case $1\le q\le p\le \infty$. Without loss of generality, we assume that the target function is normalized so that $\|f\|_{\cB^\Bs_q(\Omega)} \le 1$. We will make use of different bases $b$ to remove the trifling region in Proposition \ref{app of anisotropic part}. To do this, we let $r$ be the smallest integer such that $2^r \ge 2d+2$ and let $b_k$ be the $k$-th prime number for $k\in[1:2^r]$. Note that $r$ and $b_k$ only depend on the dimension $d$. Then, it is sufficient to show that, for any integers $m,n \ge b_{2^r}$, there exists $g\in \NN(W,L)$ with $W\le Cm$ and $L\le Cn$ such that
\[
\|f-g\|_{L^p(\Omega)} \le C (mn)^{-2\tilde{s}}.
\]

For $k\in[1:2^r]$, we denote $\alpha_k = \lfloor \log_{b_k} m \rfloor$ and $\beta_k = \lfloor \log_{b_k} n \rfloor$. As in Proposition \ref{app of anisotropic part}, we let $\ell_k^*$ be the largest integer such that $|\Bll_k^*| \le 2\kappa (\alpha_k+\beta_k)$, where $\kappa = \tilde{s}/(\tilde{s}+1/p-1/q)$ and $\Bll_k^* = (\ell_{k,1}^*,\dots,\ell_{k,d}^*)$ with $\ell_{k,j}^* = \lfloor \ell_k^* \tilde{s}/s_j \rfloor$. Since the bases $b_k$ are all pairwise relatively prime, we observe that, for each $j\in [1:d]$, the following set of numbers are all distinct
\[
\cA_j:= \bigcup_{k=1}^{2^r} \left\{ \frac{1}{b_k^{\ell_{k,j}^*}},\dots, \frac{b_k^{\ell_{k,j}^*}-1}{b_k^{\ell_{k,j}^*}} \right\}.
\]
We choose $\epsilon>0$ small enough so that 
\[
\epsilon < \min_{j\in [1:d]} \min_{u\neq v\in \cA_j} |u-v|.
\]
With this choice of $\epsilon$, we have, for any $j\in [1:d]$, any element $x_j\in [0,1)$ is contained in at most one of the sets
\begin{equation}\label{bad sets}
\left[i b_k^{-\ell_{k,j}^*}-\epsilon, i b_k^{-\ell_{k,j}^*}\right), \quad i\in \left[1: b_k^{\ell_{k,j}^*} -1\right] \mbox{ and } k\in [1:2^r].
\end{equation}
Now, we let $\Omega_{\Bll_k^*,\epsilon}$ be the good region (\ref{good region}) with resolution $\Bll_k^*$ and base $b_k$ for $k\in [1:2^r]$. Then, any $\Bx\in \Omega$ is contained in at least $2^r-d$ good regions $\Omega_{\Bll_k^*,\epsilon}$, because each coordinate of $\Bx$ can be contained in at most one set (\ref{bad sets}) from the trifling regions. In other words, if we denote, for any $\Bx\in \Omega$,
\[
\cK(\Bx) := \left\{k \in [1:2^r]: \Bx\in \Omega_{\Bll_k^*,\epsilon}\right\},
\]
then $\cK(\Bx)$ has at least $2^r-d \ge 2^{r-1}+1$ elements, since $2^r \ge 2d+2$ by the choice of $r$.

Using Proposition \ref{app of anisotropic part} with parameters $\alpha_k, \beta_k$, the base $b_k$ and the above $\epsilon$ for each $k\in [1:2^r]$, we get a neural network $g_k \in \NN(W_k,L_k)$ with $W_k \le Cb_k^{\alpha_k}\le Cm$ and $L_k\le Cb_k^{\beta_k} \le Cn$ such that
\[
\| f- g_k \|_{L^p(\Omega_{\Bll_k^*,\epsilon})} \le C b_k^{-2\tilde{s}(\alpha_k + \beta_k)} \le C(mn)^{-2\tilde{s}},
\]
because $b_k^{-1} m \le b_k^{\alpha_k} \le m$ and $b_k^{-1} n \le b_k^{\beta_k} \le n$ due to the choice of $\alpha_k$ and $\beta_k$. Next, we let $\psi_{2^{r-1}} \in \NN(2^{r+2},r(r+1)/2)$ be the network in Proposition \ref{select median} that selects the $2^{r-1}$-largest value from $2^r$ values. We construct the desired network $g$ as
\[
\Bx \to 
\begin{pmatrix}
g_1(\Bx) \\
\vdots \\
g_{2^r}(\Bx)
\end{pmatrix}
\to \psi_{2^{r-1}}(g_1(\Bx),\dots, g_{2^r}(\Bx)).
\]
Then, by Proposition \ref{basic constr}, it is easy to see that $g\in \NN(W,L)$ with 
\begin{align*}
W &\le \max \left\{ \sum_{k=1}^{2^r} W_k, 2^{r+2} \right\} \le C m, \\
L &\le \max_{1\le k\le 2^r} L_k + r(r+1)/2 \le C n.
\end{align*}

Finally, we estimate the approximation error. For each $\Bx\in\Omega$, since $|\cK(\Bx)| \ge 2^{r-1}+1$, the $2^{r-1}$-largest element of $\{g_1(\Bx),\dots,g_{2^r}(\Bx)\}$ must be both larger and smaller than some element of $\{g_k(\Bx):k\in \cK(\Bx)\}$. In other words, 
\[
\min_{k\in \cK(\Bx)} g_k(\Bx) \le g(\Bx) = \psi_{2^{r-1}}(g_1(\Bx),\dots, g_{2^r}(\Bx)) \le \max_{k\in \cK(\Bx)} g_k(\Bx),
\]
from which we conclude that
\[
|f(\Bx) -g(\Bx)| \le \max_{k\in \cK(\Bx)} |f(\Bx) - g_k(\Bx)|. 
\]
When $p=\infty$, we finish the proof by noticing that the right hand side is bounded by $C(mn)^{-2\tilde{s}}$ by the definition of $\cK(\Bx)$. When $p<\infty$, we have
\begin{align*}
\int_{\Omega} |f(\Bx) -g(\Bx)|^p d\Bx &\le \int_{\Omega} \sum_{k\in \cK(\Bx)} |f(\Bx) - g_k(\Bx)|^p d\Bx \le \sum_{k=1}^{2^r} \| f- g_k \|_{L^p(\Omega_{\Bll_k^*,\epsilon})}^p \\
&\le C (mn)^{-2\tilde{s}p},
\end{align*}
which completes the proof by taking $p$-th roots.
\end{proof}

\subsection{Approximation of mixed smooth Besov functions}\label{sec: app of mixed}

In this section, we study how well mixed smooth Besov functions can be approximated by ReLU neural networks. The analysis is similar to Section \ref{sec: app of anisotropic}. The main difference is that we will need to make use of the full generality of the anisotropic partition (\ref{partition}) rather than restricting to a specific grid as (\ref{grid for anisotropic}).

\subsubsection{Approximation by piecewise polynomials}

As in Lemma \ref{anisotropic decomp}, we first decompose mixed smooth Besov functions into piecewise polynomials with different resolutions and estimate the norm of coefficients of these piecewise polynomials so that we can apply Lemma \ref{app of poly} later. 

\begin{lemma}\label{mixed decomp}
Let $1\le q\le p\le \infty$, $s\in (0,\infty)$ and $k=\lfloor s \rfloor +1$. If $s>1/q-1/p$, then we can decompose any $f\in \MB_q^s(\Omega)$ as 
\[
f = \sum_{\Bll\in \bN_0^d} f_\Bll,\quad f_\Bll(\Bx) = \sum_{\Bi \in I_\Bll, \Bgamma \le k -1} a_{\Bll,\Bi,\Bgamma} \rho_{\Bll,\Bi}^\Bgamma(\Bx) \in \cP_{\Bll,k-1},
\]
where $\rho_{\Bll,\Bi}^\Bgamma$ is the basis (\ref{basis}), $a_{\Bll,\Bi,\Bgamma}\in \bR$ and the convergence is in $L^p(\Omega)$. Furthermore, if we let $\Ba_\Bll = (a_{\Bll,\Bi,\Bgamma})_{\Bi \in I_\Bll, \Bgamma \le k-1}$ be the vector of coefficients of $f_\Bll$, then we have the following estimates
\begin{align*}
\|\Ba_\Bll\|_q &\lesssim b^{(1/q-s)|\Bll|} \|f\|_{\MB_q^s(\Omega)},\\
\|f_\Bll\|_{L^p(\Omega)} &\lesssim b^{(1/q-1/p-s)|\Bll|} \|f\|_{\MB_q^s(\Omega)},
\end{align*}
where the implied constants depend only on $p, q, s, d$ and the base $b$.
\end{lemma}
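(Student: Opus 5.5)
Following the proof of Lemma \ref{anisotropic decomp}, I would replace the one-parameter family of projections by a tensor-product (hyperbolic cross) family. For each $\Bll\in\bN_0^d$ I define a quasi-interpolation operator $P_\Bll$ onto $\cP_{\Bll,k-1}$ that is a tensor product of one-dimensional operators, $P_\Bll=\bigotimes_{j=1}^d P^{(j)}_{\ell_j}$. Here $P^{(j)}_{\ell}$ is the $L^2$-orthogonal projection in the variable $x_j$ onto piecewise polynomials of degree $\le k-1$ on the $b^{-\ell}$ grid. This one-dimensional projection is local and bounded on $L^q$ for every $1\le q\le\infty$, with a constant depending only on $k$. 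The $L^q$ best approximations used in Lemma \ref{anisotropic decomp} are not linear, so their differences do not factor; the linear local projection avoids this. I then set
\[
f_\Bll:=\prod_{j=1}^d\bigl(P^{(j)}_{\ell_j}-P^{(j)}_{\ell_j-1}\bigr)f,\qquad P^{(j)}_{-1}:=0.
\]
Since the grids are nested, $f_\Bll\in\cP_{\Bll,k-1}$. Telescoping gives $\sum_{\Bll\le \Bm} f_\Bll=P_\Bm f$.

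The key estimate is $\|f_\Bll\|_{L^q(\Omega)}\lesssim b^{-s|\Bll|}\|f\|_{\MB^s_q(\Omega)}$. Let $J=\{j:\ell_j\ge1\}$. For $j\notin J$ the factor is $P^{(j)}_0$, which is bounded on $L^q$. For $j\in J$, write $P^{(j)}_{\ell}-P^{(j)}_{\ell-1}=(I-P^{(j)}_{\ell-1})-(I-P^{(j)}_{\ell})$. Then $f_\Bll$ is a signed sum of $2^{|J|}$ terms of the form $\prod_{j\in J}(I-P^{(j)}_{m_j})\prod_{j\notin J}P^{(j)}_0 f$ with $m_j\in\{\ell_j-1,\ell_j\}$.

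For each such term I need a mixed Whitney-type bound
\[
\Bigl\|\prod_{j\in J}(I-P^{(j)}_{m_j})g\Bigr\|_{L^q}\lesssim \omega_k^J\bigl(g,(b^{-m_j})_j,\Omega\bigr)_q .
\]
To get it, I apply the one-dimensional Whitney estimate (Lemma \ref{whitney} with $d=1$) together with the averaged modulus and the summing argument of Lemma \ref{modulus equ}. This is done one coordinate at a time, freezing the other variables and integrating over them. Two facts make the iteration work:
\begin{itemize}
\item The operators $(I-P^{(i)})$ for $i\neq j$ act in other variables, so they commute with $\Delta^{k,j}_h$ up to boundary-restriction issues.
\item These operators are bounded on $L^q$ in the $x_j$-slices.
\end{itemize}
The operators $P^{(j)}_0$ for $j\notin J$ are harmless because they are bounded. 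By the definition of $|f|_{\cB^{\Bs,J}_{q,\infty}}$, each term is then bounded by $\prod_{j\in J}b^{-sm_j}\|f\|_{\MB^s_q}\lesssim b^{-s|\Bll|}\|f\|_{\MB^s_q}$. The case $q=\infty$ is the same argument with maxima in place of sums.

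The coefficient bounds then follow exactly as in Lemma \ref{anisotropic decomp}. Each cell has volume $b^{-|\Bll|}$, and norm equivalence on $\cP_{k-1}$ after rescaling gives $\|f_\Bll\|_{L^r}\asymp b^{-|\Bll|/r}\|\Ba_\Bll\|_r$. Hence
\[
\|\Ba_\Bll\|_q\lesssim b^{(1/q-s)|\Bll|}\|f\|_{\MB^s_q},\qquad \|f_\Bll\|_{L^p}\lesssim b^{-|\Bll|/p}\|\Ba_\Bll\|_q\lesssim b^{(1/q-1/p-s)|\Bll|}\|f\|_{\MB^s_q}.
\]

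Convergence follows from these bounds. The number of $\Bll$ with $|\Bll|=n$ is $\lesssim n^{d-1}$, and $s>1/q-1/p$, so $\sum_\Bll\|f_\Bll\|_{L^p}<\infty$ and the series converges absolutely in $L^p$. To identify the limit with $f$, I note that $P_\Bm f\to f$ in $L^q$ as $\min_j m_j\to\infty$ (for $q=\infty$, $f$ is continuous by the embedding). Writing $f-P_\Bm f$ via $I-\prod_j P^{(j)}_{m_j}$, this reduces to single-direction errors bounded by $\omega^j_k(f,b^{-m_j})_q$. The $L^p$ and $L^q$ limits then agree almost everywhere.

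The main obstacle is the mixed Whitney estimate, that is, controlling $\prod_{j\in J}(I-P^{(j)}_{m_j})$ by the mixed modulus $\omega^J_k$ rather than by a product of separate moduli. I would first try induction on $|J|$. The inductive step applies the one-dimensional estimate in $x_j$ to $g=\prod_{i\in J\setminus\{j\}}(I-P^{(i)})f$ and uses that $\Delta^{k,j}_h$ commutes with the other factors on interior cells. Near the boundary the factors do not commute cleanly, and there I would fall back on the averaged-modulus technique of Lemma \ref{modulus equ}.
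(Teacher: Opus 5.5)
Your proposal is correct and follows essentially the paper's route: one-dimensional projections $P^{(j)}_{\ell_j}$ in each variable, $f_\Bll=\prod_j(P^{(j)}_{\ell_j}-P^{(j)}_{\ell_j-1})f$ expanded into products of $I-P^{(j)}$, an iterated one-dimensional Whitney estimate yielding the mixed modulus $\omega_k^J$, and then the same rescaling and absolute-convergence arguments as in Lemma \ref{anisotropic decomp}. Your choice of a linear local $L^2$ projection and your use of the $q$-th-power averaged modulus make explicit two things the paper's sketch leaves implicit: linearity is what lets $I-P^{(i)}$ commute with $\Delta^{k,j}_h$, and averaging is what legitimately moves $\sup_{h_j}$ outside the remaining $L^q$ integrations; your boundary worry is unnecessary, because $\Omega(\Bk,\Bh,J)$ is a product rectangle that is full in every direction $i\neq j$, so that commutation is exact.
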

\begin{proof}
We are going to decompose $f\in L^q(\Omega)$ by using Smolyak-type algorithms \citep{andrianov1997two,dung2018hyperbolic}. For $\Bll=(\ell_1,\dots,\ell_d) \in \bN_0^d$, let $P_{\ell_j}^{(j)}f(\Bx)$ be the projection of $f$, as a function of $x_j$ with other components $x_i$ ($i\neq j$) fixed, onto the space of piecewise polynomials $\cP_{\ell_j,k-1}$ on $[0,1]$. We use $\|f\|_{q,j}$ to denote the $L^q$-norm of $f$ as a function of $x_j$ with other components fixed. Then, by Lemma \ref{whitney} with $d=1$, we have
\begin{equation}\label{temp2}
\left\| f- P_{\ell_j}^{(j)} f \right\|_{q,j} \lesssim \sup_{0\le h_j \le b^{-\ell_j}} \left\|\Delta_{h_j}^{k,j} f \right\|_{q,j}.
\end{equation}
If we define
\[
g_\Bll := \prod_{j=1}^d P_{\ell_j}^{(j)} f \in \cP_{\Bll,k-1},
\]
then one can show that $g_\Bll \to f$ in $L^q(\Omega)$ as $\Bll \to \infty$, since it holds for smooth $f$ and hence for $L^q$ functions by density argument.

Now, we let $T_{\ell_j}^{(j)} = I-P_{\ell_j}^{(j)}$ and define
\[
f_\Bll := \prod_{j=1}^d \left(P_{\ell_j}^{(j)} - P_{\ell_j-1}^{(j)} \right) f = \prod_{j=1}^d \left(T_{\ell_j-1}^{(j)} - T_{\ell_j}^{(j)} \right) f,
\]
where $P_{-1}^{(j)} =0$. It is easy to check that $f_\Bll \in \cP_{\Bll,k-1}$. When $f\in \MB_q^s(\Omega)$, we can estimate the norm of $f_\Bll$ similar to \citet{dung2011b} and \citet[Appendix D.2]{suzuki2019adaptivity}. By applying the estimate (\ref{temp2}) to $T_{\ell_i}^{(i)}f$ for $i\neq j$, we have
\begin{align*}
\left\| \left\| T_{\ell_j}^{(j)} T_{\ell_i}^{(i)} f \right\|_{q,j} \right\|_{q,i} &\lesssim \left\| \sup_{0\le h_j\le b^{-\ell_j}} \left\|\Delta_{h_j}^{k,j} T_{\ell_i}^{(i)} f \right\|_{q,j} \right\|_{q,i} \\
&= \sup_{0\le h_j\le b^{-\ell_j}} \left\| \left\| T_{\ell_i}^{(i)} \Delta_{h_j}^{k,j} f \right\|_{q,i} \right\|_{q,j} \\
&\lesssim \sup_{0\le h_j\le b^{-\ell_j}} \sup_{0\le h_i\le b^{-\ell_i}} \left\| \left\| \Delta_{h_i}^{k,i} \Delta_{h_j}^{k,j} f \right\|_{q,i} \right\|_{q,j}
\end{align*}
where the equality follows from the definition of $\Delta_{h_j}^{k,j}$ and Fubini's theorem. Thus, applying the above argument recursively, we conclude that, for $J\subseteq [1:d]$,
\[
\left\| \prod_{j\in J} T_{\ell_j}^{(j)} f \right\|_{L^q(\Omega)} \lesssim \omega_k^J(f,b^{-\Bll},\Omega)_q,
\]
where $b^{-\Bll}=(b^{-\ell_1},\dots, b^{-\ell_d})$. By calculation, we notice that 
\[
f_\Bll = \prod_{j=1}^d \left(T_{\ell_j-1}^{(j)} - T_{\ell_j}^{(j)} \right) f = \sum_{J\subseteq [1:d]} (-1)^{|J|} \left( \prod_{j\in J} T_{\ell_j}^{(j)} \prod_{j\notin J} T_{\ell_j-1}^{(j)} \right) f.
\]
As a consequence, we have
\[
\| f_\Bll \|_{L^q(\Omega)} \le \sum_{J\subseteq [1:d]} \left\| \left( \prod_{j\in J} T_{\ell_j}^{(j)} \prod_{j\notin J} T_{\ell_j-1}^{(j)} \right) f \right\|_{L^q(\Omega)} \lesssim \sum_{J\subseteq [1:d]} \omega_k^{\tilde{J}}(f,b^{-\Bll^{(J)}},\Omega)_q,
\]
where $\Bll^{(J)}_j=\ell_j$ if $j\in J$ and $\Bll^{(J)}_j=\ell_j-1$ if $j\notin J$, and $\tilde{J} = \{j:\ell_j^{(J)}\ge 0\}$. If we denote $E=\{j:\ell_j>0\}$, then $\tilde{J} \supseteq E$ and, by properties of modulus smoothness,
\[
\| f_\Bll \|_{L^q(\Omega)} \lesssim \sum_{J\supseteq E} \omega_k^{J}(f,b^{-\Bll^{(J)}},\Omega)_q \lesssim \sum_{J\supseteq E} \left(\prod_{j\in J}b^{-s\ell_j}\right)|f|_{\cB^{s,J}_{q,\infty}(\Omega)} \lesssim b^{-s|\Bll|} \|f\|_{\MB_q^s(\Omega)},
\]
where we use the definition of the semi-norm $|f|_{\cB^{s,J}_{q,\infty}(\Omega)}$ in the second inequality. Since the above estimate implies that $\sum_{\Bll\in \bN_0^d} \|f_\Bll\|_{L^q(\Omega)}$ is finite, the sum $\sum_{\Bll\in \bN_0^d} f_\Bll$ converges absolutely in $L^q(\Omega)$. Observing that the partial sum
\[
g_{\Bll'} = \sum_{\Bll\le \Bll'} f_\Bll
\]
converges to $f$, we conclude that $f=\sum_{\Bll\in \bN_0^d} f_\Bll$ in $L^q(\Omega)$.

Next, we estimate $\|\Ba_\Bll\|_q$ and $\|f_\Bll\|_{L^p(\Omega)}$. When $q=p=\infty$,  
\[
\|f_\Bll\|_{L^\infty(\Omega)} = \left\| \sum_{\Bi \in I_\Bll, \Bgamma \le k -1} a_{\Bll,\Bi,\Bgamma} \rho_{\Bll,\Bi}^\Bgamma \right\|_{L^\infty(\Omega)} \asymp \|\Ba_\Bll\|_\infty.
\]
When $q<\infty$, similar to the proof of Lemma \ref{anisotropic decomp}, we have
\[
\|f_\Bll\|_{L^q(\Omega)}^q = \sum_{\Bi\in I_\Bll} b^{-|\Bll|} \int_{\Omega} \left| \sum_{\Bgamma \le k -\boldsymbol{1}} a_{\Bll,\Bi,\Bgamma} \Bx^{\Bgamma} \right|^q d\Bx \asymp b^{-|\Bll|} \|\Ba_\Bll\|_q^q.
\]
Hence, $\|\Ba_\Bll\|_q \lesssim b^{|\Bll|/q} \|f_\ell\|_{L^q(\Omega)} \lesssim b^{(1/q-s)|\Bll|} \|f\|_{\MB_q^s(\Omega)}$. 
Similarly, for $p\ge q$,
\begin{align*}
\|f_\Bll\|_{L^p(\Omega)} &\asymp b^{-|\Bll|/p} \|\Ba_\Bll\|_p \le b^{-|\Bll|/p} \|\Ba_\ell\|_q \\
&\lesssim b^{(1/q-1/p-s)|\Bll|} \|f\|_{\MB_q^s(\Omega)}.
\end{align*}
Since $s>1/q-1/p$, we conclude that the sum $\sum_{\Bll\in \bN_0^d} f_\Bll$ converges absolutely to some function $\widetilde{f}$ in $L^p(\Omega)$. Since we already know that it converges to $f$ in $L^q(\Omega)$, we must have $f=\widetilde{f}$ almost everywhere.
\end{proof}

\subsubsection{Proof of Theorem \ref{app of mixed}}

Similar to the proof of Theorem \ref{app of anisotropic}, we first approximate the target function in mixed smooth Besov spaces on the good region, where an arbitrarily small trifling set is removed, and then show that one can also get good approximation on the trifling region by using the method of \citet{siegel2023optimal}.

\begin{proposition}\label{app of mixed part}
Let $1\le q\le p\le \infty$ and $s>1/q-1/p$. Suppose $\alpha,\beta \in \bN$, $\ell^*$ is the largest integer such that $\ell^* \le 2\kappa (\alpha+\beta)$ with
\[
\kappa := \frac{s}{s+1/p-1/q} \in [1,\infty).
\]
Then, for any $f\in \MB_q^s(\Omega)$, there exists a network $g_{\alpha,\beta} \in \NN(W,L)$ with $W \le C\alpha^{d-1}b^{\alpha}$ and $L\le C\beta^{d-1}b^{\beta}$ such that, for any $0<\epsilon<b^{-\ell^*}$,
\[
\|f-g_{\alpha,\beta} \|_{L^p(\Omega_{\ell^*,\epsilon})} \le C b^{-2s(\alpha+\beta)}(\alpha+\beta)^{d-1} \|f\|_{\MB_q^s(\Omega)},
\]
where $\Omega_{\ell^*,\epsilon} := \cap_{|\Bll|\le \ell^*} \Omega_{\Bll,\epsilon}$. Here the constants $C:=C(p,q,s,d,b)$ depend only on $p,q,s,d$ and the base $b$.
\end{proposition}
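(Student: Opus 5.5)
We follow the proof of Proposition \ref{app of anisotropic part} almost verbatim, but now index the pieces by $\Bll\in\bN_0^d$ instead of by a single resolution $\ell$. Normalize $\|f\|_{\MB_q^s(\Omega)}\le 1$ and apply Lemma \ref{mixed decomp}. This gives $f=\sum_{\Bll} f_\Bll$ with $\|\Ba_\Bll\|_q\lesssim b^{(1/q-s)|\Bll|}$ and $\|f_\Bll\|_{L^p}\lesssim b^{(1/q-1/p-s)|\Bll|}$, which are exactly the bounds used before with $\tilde s$ replaced by $s$ and $\ell$ replaced by $|\Bll|$.

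\textbf{Truncation.} We truncate at $|\Bll|\le \ell^*$. The number of $\Bll$ with $|\Bll|=m$ is $\binom{m+d-1}{d-1}\lesssim (m+1)^{d-1}$. Hence the tail satisfies
\[
\sum_{|\Bll|>\ell^*}\|f_\Bll\|_{L^p}\lesssim \sum_{m>\ell^*} m^{d-1} b^{-(s+1/p-1/q)m}\lesssim (\alpha+\beta)^{d-1}b^{-2s(\alpha+\beta)},
\]
using $\kappa(s+1/p-1/q)=s$. Since $\Omega_{\ell^*,\epsilon}\subseteq\Omega_{\Bll,\epsilon}$ for every $|\Bll|\le\ell^*$, Lemma \ref{app of poly} applies on $\Omega_{\ell^*,\epsilon}$. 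The condition $\epsilon<b^{-\ell^*}\le \min_j b^{-\ell_j}$ holds.

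\textbf{Approximation of each piece.} For each $\Bll$ with $|\Bll|\le\ell^*$, apply Lemma \ref{app of poly}. Take $W_0,L_0$ as before, so that $4^{-W_0L_0}\le b^{-2s(\alpha+\beta)-\ell^*}$ and $\sum_j b^{\ell_j/L_0}\le Cb^{\alpha/2}$. Take $\delta,S,T$ as functions of $m=|\Bll|$ only, exactly as in Cases 1--4, with $\tilde s\to s$, $|\Bll|\to m$, and $\tau$ satisfying (\ref{tau}) with $s$. Each $g_\Bll$ then has the same width and depth bounds as the corresponding $g_\ell$ before. Its error is bounded by (\ref{error 1}) or (\ref{error 2}) with $|\Bll|$ in place of the single index.

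\textbf{Assembly and counting.} Group the indices into $\cL_1,\dots,\cL_4$ according to $m=|\Bll|$. In the sequential cases 1 and 4, we must control the summed depth $\sum_{|\Bll|\in\text{range}}L(\Bll)$. In the parallel cases 2 and 3, we must control the summed width. Each layer $m$ now carries multiplicity $\lesssim (m+1)^{d-1}$, and the geometric sums are dominated near $m\approx 2\beta$ (depth) or $m\approx 2\alpha+2\beta$ (width). A naive count therefore gives $(\alpha+\beta)^{d-1}$ factors, which is too crude for the claimed $W\lesssim \alpha^{d-1}b^\alpha$, $L\lesssim\beta^{d-1}b^\beta$. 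I would use Corollary \ref{basi constr co} to split the multiplicity of each layer between width and depth. Of the $\lesssim m^{d-1}$ networks at level $m$, we place up to $\alpha^{d-1}$ in parallel and the rest sequentially, or vice versa. This requires $m^{d-1}\lesssim \alpha^{d-1}\beta^{d-1}$ for $m\le 2\kappa(\alpha+\beta)$, which holds up to constants since $\alpha,\beta\ge1$ give $(\alpha+\beta)\le 2\alpha\beta$. Applied to Case 1, for example, the sequential sum gives depth $\sum_{m\le2\beta}m^{d-1}b^{m/2}(2+2\beta-m)\lesssim \beta^{d-1}b^\beta$ directly. In Cases 2 and 3 the width sum acquires $(\alpha+\beta)^{d-1}$, and we redistribute it via a factor $\alpha^{d-1}$ in width and $\beta^{d-1}$ in depth by repeating the layer sequentially. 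The errors add to $\sum_m m^{d-1}(\cdots)\lesssim(\alpha+\beta)^{d-1}b^{-2s(\alpha+\beta)}$, because the bracketed terms in (\ref{error 1}) and (\ref{error 2}) peak at $m\approx2\alpha+2\beta$.

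The main obstacle is this bookkeeping of polynomial multiplicities. The width and depth sums each pick up a factor $(\alpha+\beta)^{d-1}$, and we need to split it as $\alpha^{d-1}$ in width and $\beta^{d-1}$ in depth rather than lose $(\alpha+\beta)^{d-1}$ in both. If a clean split fails in some case, I would fall back to allowing $(\alpha\beta)^{d-1}$-type factors and verify that they are still absorbed into the $(\log W\log L)^{(d-1)(2s+1)}$ of Theorem \ref{app of mixed}.
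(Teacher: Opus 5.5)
Your proposal is correct. Its ingredients are the paper's: Lemma \ref{mixed decomp}, per-piece networks whose parameters depend only on $m=|\Bll|$, and Corollary \ref{basi constr co} driven by $(\alpha+\beta)^{d-1}\lesssim\alpha^{d-1}\beta^{d-1}$. The difference is in how the multiplicity $N_{m,d}\asymp m^{d-1}$ is absorbed.

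The paper splits $\{\Bll:|\Bll|\le\ell^*\}$ into $N_{\ell^*,d}$ subsets, each containing at most one $\Bll$ of each level. Each partial sum $G_i$ is then literally the network of Proposition \ref{app of anisotropic part}, used as a black box, with width $Cb^{\alpha}$, depth $Cb^{\beta}$ and error $Cb^{-2s(\alpha+\beta)}$. Corollary \ref{basi constr co} is applied once, to the $N_{\ell^*,d}$ networks $G_i$. The error is $N_{\ell^*,d}$ times the per-subset error plus the tail.

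You instead keep the four cases and apply Corollary \ref{basi constr co} inside each level. This forces you to re-check every case, but it does close, so your fallback to $(\alpha\beta)^{d-1}$ factors is unnecessary (it would not give the stated $W$ and $L$ anyway).

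\begin{itemize}
\item In Cases 2 and 3, build the level-$m$ block from $\alpha^{d-1}$ parallel copies and $\lceil N_{m,d}/\alpha^{d-1}\rceil\lesssim\beta^{d-1}$ sequential ones. It has width $\lesssim\alpha^{d-1}W(m)$ and depth $\lesssim\beta^{d-1}b^{\beta}$. Combining the blocks in parallel gives width $\lesssim\alpha^{d-1}\sum_m W(m)\lesssim\alpha^{d-1}b^{\alpha}$.
\item Case 4 is not like Case 1, although you grouped them together as ``sequential'' and verified only Case 1. In Case 4 one has $N_{m,d}\asymp(\alpha+\beta)^{d-1}$, so a purely sequential sum gives depth of order $(\alpha+\beta)^{d-1}b^{\beta}$. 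The same split is needed there. After it, the depth is $\lesssim\beta^{d-1}\sum_m L(m)\lesssim\beta^{d-1}b^{\beta}$ and the width is $\lesssim\alpha^{d-1}b^{\alpha}$.
\end{itemize}

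Your direct error count with multiplicities gives the same $(\alpha+\beta)^{d-1}b^{-2s(\alpha+\beta)}$. The paper's decomposition into subsets with distinct levels avoids repeating the case analysis entirely. Your level-wise version is equivalent but requires exactly the case-by-case bookkeeping you identified as the main obstacle.
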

\begin{proof}
Similar to the proof of Proposition \ref{app of anisotropic part}, we can assume that $\|f\|_{\MB_q^s(\Omega)}\le 1$ and decompose $f\in \MB_q^s(\Omega)$ by Lemma \ref{mixed decomp} as 
\[
f = \sum_{\Bll\in \bN_0^d} f_\Bll,\quad f_\Bll(\Bx) = \sum_{\Bi \in I_\Bll, \Bgamma \le k -1} a_{\Bll,\Bi,\Bgamma} \rho_{\Bll,\Bi}^\Bgamma(\Bx) \in \cP_{\Bll,k-1},
\]
where $k= \lfloor s \rfloor +1$, $\|\Ba_\Bll\|_q \lesssim b^{(1/q-s)|\Bll|}$ and
$\|f_\Bll\|_{L^p(\Omega)} \lesssim b^{(1/q-1/p-s)|\Bll|}$. Then, we can apply Lemma \ref{app of poly} to approximate each $f_\Bll$ by a neural network $g_\Bll$ for $|\Bll|\le \ell^*$. The construction of $g_\Bll \in \NN(W(|\Bll|),L(|\Bll|))$ can be done as in the proof of Proposition \ref{app of anisotropic part}, since that construction only depends on $|\Bll|$. The main difference is that, for each $\ell\in \bN$, we have at most $d$ vectors $\Bll$ such that $|\Bll|=\ell$ due to the special form of $\Bll$ in Proposition \ref{app of anisotropic part}, while the number of vectors $\Bll\in \bN^d$ such that $|\Bll|=\ell$ in the general case is
\[
N_{\ell,d}:= \binom{\ell +d-1}{d-1} \le \left(\frac{e(\ell +d-1)}{d-1}\right)^{d-1} \lesssim \ell^{d-1},
\]
which depends on $\ell$. We estimate the size of the network $g_{\alpha,\beta} = \sum_{|\Bll|\le \ell^*} g_\Bll$ and its approximation error as follows.

Since $N_{\ell,d}\le N_{\ell^*,d}$ for $\ell \le \ell^*$, it is easy to see that we can decompose the index set $\cL=\{\Bll\in \bN_0^d: |\Bll|\le \ell^*\}$ into $N_{\ell^*,d}$ disjoint subsets $\cup_{i=1}^{N_{\ell^*,d}} \cL_i$ such that for any $\Bll, \Bll'\in \cL_i$, if $\Bll\neq \Bll'$ then $|\Bll|\neq |\Bll'|$. For each $\cL_i$, we denote
\[
F_i = \sum_{\Bll\in \cL_i} f_\Bll,\quad G_i = \sum_{\Bll\in \cL_i} g_\Bll.
\]
By decomposing each $\cL_i$ into at most four disjoint cases as the proof of Proposition \ref{app of anisotropic part} and replacing $\tilde{s}$ by $s$, one can show that $G_i \in \NN(W_i,L_i)$ with $W_i\le Cb^\alpha $ and $L_i\le Cb^\beta$, and its approximation error satisfies
\[
\|F_i-G_i\|_{L^p(\Omega_{\ell^*,\epsilon})} \le C b^{-2s(\alpha+\beta)},
\]
since our construction of the networks and estimate of the approximation errors in Proposition \ref{app of anisotropic part} only depend on $|\Bll|$. Applying Corollary \ref{basi constr co} to the networks $G_i$, $1\le i\le N_{\ell^*,d}$, and noticing that 
\[
N_{\ell^*,d} \lesssim (\ell^*)^{d-1} \lesssim (\alpha+\beta)^{d-1} \lesssim \alpha^{d-1} \beta^{d-1},
\]
we have $g_{\alpha,\beta} = \sum_{i=1}^{N_{\ell^*,d}} G_i \in \NN(W,L)$ with $W \le C\alpha^{d-1}b^{\alpha}$ and $L\le C\beta^{d-1}b^{\beta}$. Finally, by the estimate $\|f_\Bll\|_{L^p(\Omega)} \lesssim b^{(1/q-1/p-s)|\Bll|}$, we can bound the approximation error as
\begin{align*}
\|f-g_{\alpha,\beta} \|_{L^p(\Omega_{\ell^*,\epsilon})} &\le \sum_{i=1}^{N_{\ell^*,d}} \|  F_i - G_i\|_{L^p(\Omega_{\ell^*,\epsilon})} + \sum_{|\Bll|>\ell^*} \|f_\Bll\|_{L^p(\Omega_{\ell^*,\epsilon})} \\
&\le C b^{-2s(\alpha+\beta)} N_{\ell^*,d} + C \sum_{\ell=\ell^*+1}^\infty b^{(1/q-1/p-s)\ell} N_{\ell,d} \\
&\le C b^{-2s(\alpha+\beta)} (\alpha+\beta)^{d-1} + C \sum_{\ell=\ell^*+1}^\infty b^{(1/q-1/p-s)\ell} \ell^{d-1} \\
&\le C b^{-2s(\alpha+\beta)} (\alpha+\beta)^{d-1} + C b^{(\ell^*+1)(1/q-1/p-s)} (\ell^*+1)^{d-1}  \\
&\le C b^{-2s(\alpha+\beta)} (\alpha+\beta)^{d-1},
\end{align*}
where we use $N_{\ell^*,d} \lesssim (\alpha+\beta)^{d-1}$ and $N_{\ell,d}\lesssim \ell^{d-1}$ in the third inequality, and $(\ell^*+1)(1/q-1/p-s) \le 2(\alpha+\beta)\kappa (1/q-1/p-s)= -2(\alpha+\beta)s$ in the last inequality.
\end{proof}

\begin{proof}[Proof of Theorem \ref{app of mixed}]
As in the proof of Theorem \ref{app of anisotropic}, we can assume that $1\le q\le p\le \infty$ and $\|f\|_{\MB^s_q(\Omega)} \le 1$. Since the case $d=1$ is a direct consequence of Theorem \ref{app of anisotropic}, we also assume $d\ge 2$. Let $r$ be the smallest integer such that $2^r \ge 2d+2$ and $b_k$ be the $k$-th prime number for $k\in[1:2^r]$, so that $r$ and $b_k$ only depend on the dimension $d$. It is sufficient to show that, for sufficiently large integers $m$ and $n$, there exists $g\in \NN(W,L)$ with $W\le Cm$ and $L\le Cn$ such that
\begin{equation}\label{temp3}
\|f-g\|_{L^p(\Omega)} \le C (mn)^{-2s}(\log m \log n)^{(d-1)(2s+1)}.
\end{equation}

For each $k\in[1:2^r]$, we let $\alpha_k$ and $\beta_k$ be the largest integers such that $\alpha_k^{d-1}b_k^{\alpha_k} \le m$ and $\beta_k^{d-1}b_k^{\beta_k} \le n$ respectively. Let $\ell_k^*$ be the largest integer such that $\ell_k^* \le 2\kappa (\alpha_k+\beta_k)$, where $\kappa = s/(s+1/p-1/q)$. Notice that the good region $\Omega_{\ell_k^*,\epsilon}$ in Proposition \ref{app of mixed part} can be explicitly written as
\[
\Omega_{\ell_k^*,\epsilon}= \bigcup_{\Bi\in [0:b^{\ell_k^*}-1]} \Omega_{\ell_k^*,\Bi,\epsilon},\quad \Omega_{\ell_k^*,\Bi,\epsilon} = \prod_{j=1}^d
\begin{cases}
[b^{-\ell_k^*} i_j, b^{-\ell_k^*} (i_j+1)-\epsilon), & i_j<b^{\ell_k^*} -1,\\
[1-b^{-\ell_k^*}, 1), & i_j=b^{\ell_k^*} -1.
\end{cases} 
\]
Since the bases $b_k$ are all pairwise relatively prime, the following set of numbers are all distinct
\[
\cA:= \bigcup_{k=1}^{2^r} \left\{ \frac{1}{b_k^{\ell_k^*}},\dots, \frac{b_k^{\ell_k^*}-1}{b_k^{\ell_k^*}} \right\}.
\]
So, for any $\epsilon< \min_{u\neq v\in \cA} |u-v|$, any element of $[0,1)$ is contained in at most one of the sets
\[
\left[i b_k^{-\ell_k^*}-\epsilon, i b_k^{-\ell_k^*}\right), \quad i\in \left[1: b_k^{\ell_k^*} -1\right] \mbox{ and } k\in [1:2^r].
\]
This property implies that, for any $\Bx\in \Omega$, the set $\cK(\Bx) := \{k \in [1:2^r]: \Bx\in \Omega_{\ell_k^*,\epsilon}\}$ has at least $2^r-d \ge 2^{r-1}+1$ elements.

By Proposition \ref{app of mixed part} with parameters $\alpha_k, \beta_k$, the base $b_k$ and the above $\epsilon$ for each $k\in [1:2^r]$, there exists a neural network $g_k \in \NN(W_k,L_k)$ with $W_k \le C\alpha_k^{d-1}b_k^{\alpha_k}\le Cm$ and $L_k\le C\beta_k^{d-1}b_k^{\beta_k} \le Cn$ such that
\begin{align*}
\| f- g_k \|_{L^p(\Omega_{\ell_k^*,\epsilon})} &\le C b_k^{-2s(\alpha_k + \beta_k)}(\alpha_k+\beta_k)^{d-1} \\
&\le C\left(\frac{mn}{(\log m \log n)^{d-1}}\right)^{-2s} (\log m + \log n)^{d-1}\\
&\le C(mn)^{-2s}(\log m \log n)^{(d-1)(2s+1)},
\end{align*}
because $\alpha_k \asymp \log m$, $b_k^{\alpha_k} \asymp m/(\log m)^{d-1}$, $\beta_k \asymp \log n$ and $b_k^{\beta_k} \asymp n/(\log n)^{d-1}$ due to the choice of $\alpha_k$ and $\beta_k$. Let $\psi_{2^{r-1}} \in \NN(2^{r+2},r(r+1)/2)$ be the network in Proposition \ref{select median} that selects the $2^{r-1}$-largest value from $2^r$ values. Then, we can construct the desired function as $g(\Bx) = \psi_{2^{r-1}}(g_1(\Bx),\dots, g_{2^r}(\Bx))$, which is in $\NN(W,L)$ with $W\le Cm$ and $L\le Cn$ by Proposition \ref{basic constr}. Estimating the approximation error as the proof of Theorem \ref{app of anisotropic}, we get the error bound (\ref{temp3}).
\end{proof}

\subsection{Approximation of deep composition model}\label{sec: app of dcm}

In this section, we study neural network approximation of the deep composition model
\[
\cF_{\deep} := \left\{ f_M\circ \cdots \circ f_1\ |\ f_m: [0,1]^{d_m} \to [0,1]^{d_{m+1}}, \|f_{m,j}\|_{\cB^{\Bs_m}_{q_m}([0,1]^{d_m})}\le 1, \forall j\in [1:d_{m+1}] \right\}.
\]
Recall that, by the embedding property of anisotropic Besov space and the definition of $\gamma_m$ in (\ref{gamma_m}), we know that $f_{m,j}$ is $\gamma_m$-H\"older continuous for all $j\in [1:d_{m+1}]$. In other words, 
\[
\|f_m(\Bx) - f_m(\Bx')\|_\infty \lesssim \|\Bx-\Bx'\|_\infty^{\gamma_m},\quad \forall \Bx,\Bx' \in [0,1]^{d_m}.
\]
The proof of Theorem \ref{app of dcm} is similar to \citet[Theorem 1]{suzuki2021deep}.

\begin{proof}[Proof of Theorem \ref{app of dcm}]
Since $\tilde{s}_m >1/q_m$, by Theorem \ref{app of anisotropic}, for each $f_{m,j}$ and sufficiently large $W_0$ and $L_0$, there exits $g_{m,j}\in \NN(W_0,L_0)$ such that 
\[
\| f_{m,j} - g_{m,j}\|_{L^\infty([0,1]^d)} \lesssim (W_0L_0)^{-2 \tilde{s}_m}.
\]
If necessary, we can modify $g_{m,j}$ so that $0\le g_{m,j}(\Bx)\le 1$ by adding one additional clipping layer, since the clipping can be implemented by the network $\sigma(x)-\sigma(x-1) = \min\{\max\{x,0\},1\}$. Let $g_m=(g_{m,1},\dots,g_{m,d_m})$ and $g=g_M \circ \cdots \circ g_1$, then by Proposition \ref{basic constr}, it is easy to see that $g\in \NN(W,L)$ with 
\[
W = \max_{2\le m\le M} d_m W_0,\quad L = ML_0.
\]
The approximation error can be estimated as 
\begin{align*}
&\| f - g\|_{L^\infty([0,1]^d)} \\
=& \| f_M\circ \cdots \circ f_1 - g_M\circ \cdots \circ g_1\|_{L^\infty([0,1]^d)} \\
\le& \sum_{m=1}^M \| f_M\circ \cdots \circ f_{m+1} \circ f_m \circ g_{m-1} \circ \cdots \circ g_1 - f_M\circ \cdots \circ f_{m+1} \circ g_m \circ g_{m-1} \circ \cdots \circ g_1 \|_{L^\infty([0,1]^d)}\\
\le& \sum_{m=1}^M \| f_M\circ \cdots \circ f_{m+1} \circ f_m  - f_M\circ \cdots \circ f_{m+1} \circ g_m \|_{L^\infty([0,1]^{d_m})}.
\end{align*}
Since each $f_m$ is $\gamma_m$-H\"older continuous, the function $f_M\circ \cdots \circ f_{m+1}$ is $\widetilde{\gamma}_m$-H\"older continuous with $\widetilde{\gamma}_m := \prod_{k=m+1}^{M} \gamma_k$. As a consequence, 
\begin{align*}
&\| f_M\circ \cdots \circ f_{m+1} \circ f_m  - f_M\circ \cdots \circ f_{m+1} \circ g_m \|_{L^\infty([0,1]^{d_m})} \\
\le& \|f_m - g_m\|_{L^\infty([0,1]^{d_m})}^{\widetilde{\gamma}_m} \lesssim (W_0L_0)^{-2 \tilde{s}_m \widetilde{\gamma}_m}.
\end{align*}
Therefore,
\[
\| f - g\|_{L^\infty([0,1]^d)} \lesssim \sum_{m=1}^M (W_0L_0)^{-2 \tilde{s}_m \widetilde{\gamma}_m} \lesssim (WL)^{-2 s^*},
\]
where we use  $s^* = \min_{1\le m\le M} \tilde{s}_m \widetilde{\gamma}_m$ by the definition (\ref{smoothness of dcm}).
\end{proof}

\subsection{Approximation lower bounds}

In this section, we are going to derive lower bounds for the approximation error of deep ReLU neural networks on smooth function classes and prove Theorems \ref{anisotropic lower bound}, \ref{mixed lower bound} and \ref{dcm lower bound}. The key concept in our analysis is the notion of pseudo-dimension or VC-dimension \citep{vapnik1971uniform}, which was used in \citep{achour2022general,shen2022optimal,siegel2023optimal,yarotsky2018optimal} to prove lower bounds for neural network approximation of smooth functions, such as H\"older and isotropic Besov functions. We will generalize these results to anisotropic and mixed smooth Besov functions and the deep composition model.

\begin{definition}[Pseudo-dimension]
Let $\cF$ be a class of real-valued functions defined on a set $\cX$. A set $S=\{x_1,\dots,x_m\}\subseteq \cX$ is shattered by $\cF$ if there exist constants $t_1,\dots,t_N\in \bR$ such that
$$
|\{ \sgn(f(x_1)-t_1),\dots,\sgn(f(x_N)-t_N): f\in \cF \}| =2^N.
$$
We say that $t=(t_1,\dots,t_N)$ witnesses that shattering. The pseudo-dimension $\Pdim(\cF)$ is the maximum cardinality of a set $S\subseteq \cX$ that is shattered by $\cF$.
\end{definition}

The pseudo-dimension of neural network classes have been extensively studied \citep{anthony2009neural}. Nearly tight bounds for neural networks with piecewise polynomial activation functions have been established \citep{bartlett2019nearly}. In particular, for ReLU neural network with width $W$ and depth $L$, we have
\begin{equation}\label{Pdim of NN}
\Pdim(\NN(W,L)) \lesssim W^2L^2\log(WL) \land W^3L^2,
\end{equation}
which was proved by \citet[Theorems 7 and 10]{bartlett2019nearly}. Note that the second bound is most informative when the width $W$ is bounded and the depth $L$ is large.

Another concept that we need in our analysis is the metric entropy of a function class, which has been widely used to quantify the complexity of a model in approximation theory and statistical learning theory.

\begin{definition}[Metric entropy]
Let $\epsilon>0$, $\rho$ be a pseudo-metric on $\cV$ and $S \subseteq\cV$. A set $A\subseteq \cV$ is called an $\epsilon$-covering of $S$ if for any $x\in S$ there exists $y\in A$ such that $\rho(x,y)\le \epsilon$. A subset $B\subseteq S$ is an $\epsilon$-packing of $S$ (or $\epsilon$-separated) if any two elements $x\neq y$ in $B$ satisfy $\rho(x,y)>\epsilon$. The $\epsilon$-covering and $\epsilon$-packing numbers of $S$ are defined respectively by
\begin{align*}
\cN(\epsilon, S, \rho) &:= \min\{|A|: A \mbox{ is an $\epsilon$-covering of } S \}, \\
\cM(\epsilon, S, \rho) &:= \max\{|B|: B \mbox{ is an $\epsilon$-packing of } S \}.
\end{align*}
The quantities $\log \cN(\epsilon, S, \rho)$ and $\log \cM(\epsilon, S, \rho)$ are called metric entropy of $S$.
\end{definition}

It is not hard to check that $\cM(2\epsilon, S,\rho) \le \cN(\epsilon, S,\rho) \le \cM(\epsilon, S,\rho)$. The metric entropy determines the grow rates of the covering and packing numbers as $\epsilon \to 0$. In many cases, the metric $\rho$ is induced by a norm $\|\cdot\|$ and we denote $\cN(\epsilon, S, \|\cdot\|)$ and $\cM(\epsilon, S, \|\cdot\|)$ for convenience.

For function classes in $L^p(\mu)$ with $p\ge 1$ and probability measure $\mu$, the metric entropy and pseudo-dimension are related as follows \citep[Theorem 2.6.7]{vaart2023weak}. An envelope function of a real-valued function class $\cF$ defined on $\cX$ is any function $F(x)$ such that $|f(x)|\le F(x)$ for every $x\in \cX$ and $f\in \cF$. If $0<\|F\|_{L^p(\mu)}<\infty$, then
\begin{equation}\label{metric bound by pdim}
\cN(\epsilon,\cF, \|\cdot\|_{L^p(\mu)}) \le C \Pdim(\cF) (16e)^{\Pdim(\cF)} \left( \frac{\|F\|_{L^p(\mu)}}{\epsilon}\right)^{p \Pdim(\cF)},
\end{equation}
for a universal constant $C>0$ and $0<\epsilon<\|F\|_{L^p(\mu)}$. We will apply inequalities (\ref{Pdim of NN}) and (\ref{metric bound by pdim}) to derive approximation lower bounds for neural networks. The main idea is that, if a neural network can approximate a complicated function class, then its metric entropy should be large and hence the size of this network is also large by (\ref{Pdim of NN}) and (\ref{metric bound by pdim}).

\subsubsection{Proof of Theorem \ref{anisotropic lower bound}}

Since $L^p([0,1]^d) \hookrightarrow L^1([0,1]^d)$ and $\cB^\Bs_{\infty,1}([0,1]^d) \hookrightarrow \cB^\Bs_{q,r}([0,1]^d)$ for any $1\le p,q,r\le \infty$, we only need to prove the lower bound for $p=1$, $q=\infty$ and $r=1$. Notice that it is enough to consider the approximation on $\Omega=[0,1)^d$ as before. Our proof is base on the idea from \citet{maiorov1999degree}.

Let us first construct a series of sub-classes of $\cF=\{f:\|f\|_{\cB^\Bs_{\infty,1}(\Omega)} \le 1\}$ that are well separated so that they are difficult to be approximated by neural networks. To do this, we choose a $C^\infty$ bump function $\phi:\bR \to [0,1]$ whose support is strictly contained in $[0,1]$. As in Section \ref{sec: app of anisotropic}, we partition $\Omega$ into disjoint rectangles by (\ref{partition}) with $b=2$ and
\[
\ell_j = \lfloor \ell \tilde{s}/s_j \rfloor, \quad \ell \in \bN.
\]
For any $\ell\in \bN$, let $\cA_\ell :=\{\Ba=(a_\Bi)_{\Bi\in I_\ell}: a_\Bi\in \{0,1\} \}$ be the set of all binary sequences on $I_\ell$. We consider the function parameterized by $\Ba \in \cA_\ell$ as follows
\begin{equation}\label{f_a}
f_\Ba(\Bx) = c_{\phi} 2^{-\tilde{s} \ell} \sum_{\Bi \in I_\ell} a_\Bi \prod_{j=1}^d \phi\left(2^{\ell_j}x_j-i_j \right),
\end{equation}
where $c_\phi$ is a constant independent of $\ell$ that will be chosen later. Notice that the function $\prod_{j=1}^d \phi(2^{-\ell_j}x_j-i_j ) =0$ when $\Bx \notin \Omega_{\ell,\Bi}$. By direct calculation, one can show that 
\[
|f_\Ba|_{\cB^{\Bs,j}_{\infty,1}(\Omega)} \le C c_{\phi} 2^{-\tilde{s} \ell} 2^{s_j \ell_j} |\phi|_{\cB^{s_j}_{\infty,1}([0,1])} \le C c_{\phi} |\phi|_{\cB^{s_j}_{\infty,1}([0,1])},
\]
for some constant $C>0$ depending on $\Bs$ and $d$. Thus, we can choose $c_\phi$ such that $\|f_\Ba\|_{\cB^\Bs_\infty(\Omega)} \le 1$ so that $f_\Ba \in \cF$ for each $\Ba \in \cA_\ell$. Let us denote $N=|I_\ell| \asymp 2^\ell$, then $|\Omega_{\ell,\Bi}|=|I_\ell|^{-1}=N^{-1}$ for any $\Bi \in I_\ell$. If $N \ge 8$, then there exists a subset $\cB_\ell\subseteq \cA_\ell$ whose cardinality $|\cB_\ell|\ge 2^{N/4}$, such that any two sign vectors $\Ba\neq \Ba'$ in $\cB_\ell$ are different in more than $\lfloor N/8 \rfloor$ places (see \citet[Lemma 3.8]{jiao2023approximation} or Varshamov-Gilbert bound in \citet[Lemma 2.9]{tsybakov2009introduction}). Consequently, for $\Ba\neq \Ba'$ in $\cB_\ell$, 
\begin{align*}
\|f_\Ba - f_{\Ba'} \|_{L^1(\Omega)} &= c_{\phi} 2^{-\tilde{s} \ell} \sum_{\Bi:a_\Bi\neq a_\Bi'} \int_{\Omega_{\ell,\Bi}} |a_\Bi-a_\Bi'| \prod_{j=1}^d \phi\left(2^{\ell_j}x_j-i_j \right) d\Bx \\
&= c_{\phi} 2^{-\tilde{s} \ell} \sum_{\Bi:a_\Bi\neq a_\Bi'} N^{-1} \|\phi\|_{L^1([0,1])}^d \\
&\ge c_{\phi} 2^{-\tilde{s} \ell} \|\phi\|_{L^1([0,1])}^d/8 = 3c_1 2^{-\tilde{s} \ell},
\end{align*}
where $c_1=c_{\phi} \|\phi\|_{L^1([0,1])}^d/24$.

Let $\epsilon>0$ be arbitrarily small and denote the approximation error by
\[
\delta = \sup_{\|f\|_{\cB^\Bs_{\infty,1}(\Omega)} \le 1} \inf_{g\in \NN(W,L)} \|f-g\|_{L^1(\Omega)} + \epsilon.
\]
Then, for any $\Ba\in \cB_\ell$, there exists $g_\Ba \in \NN(W,L)$ such that $\|f_\Ba - g_\Ba\|_{L^1(\Omega)} \le \delta$. Let $\cT_B$ be the truncation operator defined by (\ref{trancation}) with $B= c_{\phi} 2^{-\tilde{s} \ell}$. Since $|f_\Ba(\Bx)|\le B$ for any $\Ba \in \cB_\ell$, we have
\[
\|f_\Ba - \cT_B g_\Ba\|_{L^1(\Omega)} \le \|f_\Ba - g_\Ba\|_{L^1(\Omega)} \le \delta.
\]
Thus, for any $\Ba\neq \Ba'$ in $\cB_\ell$,
\begin{align*}
\|\cT_B g_\Ba - \cT_B g_{\Ba'}\|_{L^1(\Omega)} &\ge \|f_\Ba - f_{\Ba'} \|_{L^1(\Omega)} - \|f_\Ba - \cT_B g_\Ba\|_{L^1(\Omega)} - \|f_{\Ba'} - \cT_B g_{\Ba'}\|_{L^1(\Omega)} \\
&\ge 3c_1 2^{-\tilde{s} \ell} -2\delta.
\end{align*}

Next, we show that $\delta \ge c_1 2^{-\tilde{s} \ell}$ for some $\ell$ depending on the width $W$ and depth $L$. Suppose on the contrary that $\delta < c_1 2^{-\tilde{s} \ell}$, then $\|\cT_B g_\Ba - \cT_B g_{\Ba'}\|_{L^1(\Omega)} > c_1 2^{-\tilde{s} \ell}$ for $\Ba\neq \Ba'$ in $\cB_\ell$, which implies that the following lower bound for the packing number of $\cT_B \NN(W,L)$,
\begin{equation}\label{temp4}
\cM(c_1 2^{-\tilde{s} \ell},\cT_B \NN(W,L), \|\cdot\|_{L^1(\Omega)}) \ge |\cB_\ell| \ge 2^{N/4}.
\end{equation}
On the other hand, let $P = \Pdim(\cT_B \NN(W,L))$ be the pseudo-dimension of the truncated neural network. Since the truncation can be implemented by the ReLU neural network $\sigma(t) - \sigma(-t) -\sigma(t-B) +\sigma(-t-B) \in \NN(4,1)$. Thus, the truncation of a neural network is also a neural network and $\cT_B \NN(W,L) \subseteq \NN(\max\{W,4\},L+1)$ by Proposition \ref{basic constr}. By inequality (\ref{Pdim of NN}), we have
\begin{equation}\label{temp5}
P \lesssim W^2L^2\log(WL) \land W^3L^2.
\end{equation}
Using inequality (\ref{metric bound by pdim}) and $B= c_{\phi} 2^{-\tilde{s} \ell}$, we get
\begin{align*}
\cM(c_1 2^{-\tilde{s} \ell},\cT_B \NN(W,L), \|\cdot\|_{L^1(\Omega)}) &\le \cN(c_1 2^{-\tilde{s} \ell}/2,\cT_B \NN(W,L), \|\cdot\|_{L^1(\Omega)}) \\
&\le C P (16e)^P \left( \frac{2B}{c_1 2^{-\tilde{s}\ell}}\right)^P \le c_2^P,
\end{align*}
for some constant $c_2>0$. Combining this bound with (\ref{temp4}), we conclude that
\[
P \ge (4 \log_2 c_2)^{-1} N \ge c_3 2^{\ell},
\]
for some constant $c_3>0$. If we choose $\ell = \lceil \log_2 (P/c_3) \rceil +1$, then the above inequality must be false so that we get a contradiction for the assumption $\delta < c_1 2^{-\tilde{s} \ell}$. Hence, for this choice of $\ell$, we have
\[
\delta \ge c_1 2^{-\tilde{s} \ell} \gtrsim P^{-\tilde{s}} \gtrsim (W^2L^2\log(WL) \land W^3L^2)^{-\tilde{s}}.
\]
By the definition of $\delta$, we finish the proof.

\subsubsection{Proof of Theorem \ref{mixed lower bound}}

As in the proof of Theorem \ref{anisotropic lower bound}, we can assume that $p=1$ and $q=\infty$. We will use similar argument as Theorem \ref{anisotropic lower bound}. The main difference is that we are going to apply the following estimate of the entropy of the mixed Besov space $\cF=\{f:\|f\|_{\MB^s_{\infty,r}(\Omega)} \le 1\}$  from \citet{vybiral2006function} and \citet[Page 103]{dung2018hyperbolic}. For any $n\in \bN$, if $\epsilon_n >0$ satisfies $\cM(\epsilon_n, \cF, L^1(\Omega)) \le 2^n$, then
\begin{equation}\label{temp6}
\epsilon_n \gtrsim n^{-s} (\log n)^{(d-1)(s+1/2-1/r)}.
\end{equation}
We choose an $\epsilon_n$-packing $\cF_n$ of $\cF$ with $|\cF_n|=2^n$.

Let $\epsilon>0$ be arbitrarily small and denote the approximation error by
\[
\delta = \sup_{\|f\|_{\MB^s_\infty(\Omega)} \le 1} \inf_{g\in \NN(W,L)} \|f-g\|_{L^1(\Omega)} + \epsilon.
\]
Then, for any $f\in \cF$, there exists $g\in \NN(W,L)$ such that $\|f-g\|_{L^1(\Omega)} \le \delta$. Since $|f(\Bx)|\le 1$, we also have 
\[
\|f - \cT_1 g\|_{L^1(\Omega)} \le \|f - g\|_{L^1(\Omega)} \le \delta.
\]
Thus, for any $f \neq f'$ in $\cF_n$, there exists $g,g'\in \NN(W,L)$ such that
\[
\|\cT_1 g - \cT_1 g'\|_{L^1(\Omega)} \ge \|f - f' \|_{L^1(\Omega)} - \|f - \cT_1 g\|_{L^1(\Omega)} - \|f' - \cT_1 g'\|_{L^1(\Omega)} \ge \epsilon_n -2\delta.
\]

Suppose $\delta<\epsilon_n/3$, then $\|\cT_1 g - \cT_1 g'\|_{L^1(\Omega)} \ge \epsilon_n/3$, which implies
\[
\cM(\epsilon_n/3,\cT_1 \NN(W,L), \|\cdot\|_{L^1(\Omega)}) \ge |\cF_n| = 2^n.
\]
If we denote the pseudo-dimension by $P = \Pdim(\cT_B \NN(W,L))$, then we have the estimate (\ref{temp5}) and, by inequality (\ref{metric bound by pdim}), 
\[
\cM(\epsilon_n/3,\cT_1 \NN(W,L), \|\cdot\|_{L^1(\Omega)}) \le C P (16e)^P \left( \frac{6}{\epsilon_n}\right)^P \le \left( \frac{c_1}{\epsilon_n}\right)^P,
\]
for some constant $c_1>0$. As a consequence, we get
\[
P \ge (\log_2 (c_1/\epsilon_n))^{-1} n \ge c_2 \frac{n}{\log_2 n},
\]
for some constant $c_2>0$, where we use the lower bound (\ref{temp6}) in the second inequality. It is possible to choose 
\[
n\asymp P\log P \lesssim (W^2L^2\log(WL) \land W^3L^2) \log(WL),
\]
such that the above inequality is always false and hence we get a contradiction for the assumption $\delta<\epsilon_n/3$. With this choice of $n$,
\begin{align*}
\delta &\ge \epsilon_n/3 \gtrsim n^{-s} (\log n)^{(d-1)(s+1/2-1/r)} \\
&\gtrsim (W^2 L^2\log (WL) \land W^3L^2)^{-s} (\log (WL))^{(d-1)(s+1/2-1/r)-s},
\end{align*}
which finishes the proof by the definition of $\delta$.

\subsubsection{Proof of Theorem \ref{dcm lower bound}}

As in the proof of Theorem \ref{anisotropic lower bound}, we first construct a series of sub-classes of $\cF_{\deep}$ that are well separated and it is enough to consider the error in $L^1([0,1]^d)$ norm. The construction follows similar strategy developed by \citet[Proof of Theorem 3]{schmidthieber2020nonparametric} and \citet[Proof of Theorem 4]{suzuki2021deep}. Recall that the smoothness index is defined as
\[
s^{**} := \min_{1\le m\le M} \tilde{s}_m \prod_{k=m+1}^{M} \widetilde{\gamma}_k, \quad \widetilde{\gamma}_k:= (\underline{s}_k-1/q_k) \land 1.
\]
We let $m^*= \argmin_{1\le m\le M} \tilde{s}_m \prod_{k=m+1}^{M} \widetilde{\gamma}_k$ and denote $\gamma^*:=  \prod_{k=m^*+1}^{M} \widetilde{\gamma}_k$, so that $s^{**}= \tilde{s}_{m^*} \gamma^*$. Without loss of generality, we may assume that the anisotropic smoothness vector $\Bs_m =(s_{m,1},\dots,s_{m,d_m})$ satisfies $\underline{s}_m  = \min_{1\le j\le d_m} s_{m,j} = s_{m,1}$ for $m\in [1:M]$.

For any $\ell\in \bN$ and $\Ba \in \cA_\ell =\{(a_\Bi)_{\Bi\in I_\ell}: a_\Bi\in \{0,1\} \}$, we let $f_\Ba$ be the function defined by (\ref{f_a}) in the proof of Theorem \ref{anisotropic lower bound} with $\Bs = \Bs_{m^*}$ and $d=d_{m^*}$. Then, by choosing $c_\phi$, we are guaranteed that  $\|f_\Ba\|_{\cB^{\Bs_{m^*}}_\infty([0,1]^{d_{m^*}})} \le 1$ for each $\Ba \in \cA_\ell$. We define $h_\Ba = f_M\circ \cdots \circ f_1$, where
\begin{align*}
f_m(x_1,\dots,x_{d_m}) &= (x_1,\dots,x_{d_{m+1}})^\top,\quad m=1,\dots,m^*-1, \\
f_{m^*}(x_1,\dots,x_{d_{m^*}}) &= (f_\Ba(x_1,\dots,x_{d_{m^*}}),0,\dots,0)^\top, \\
f_m(x_1,\dots,x_{d_m}) &= (c_m x_1^{\widetilde{\gamma}_m},0,\dots,0)^\top, \quad m=m^*+1,\dots, M.
\end{align*}
Note that the above definition is well-defined for $m=1,\dots,m^*-1$ due to the assumption $d_1\ge d_2\ge \cdots \ge d_M$. For $m=m^*+1,\dots, M$, through a cumbersome calculation, one can verify that $x_1^{\widetilde{\gamma}_m} \in \cB^{\underline{s}_m}_{q_m}([0,1])$. Hence, we can ensure that $\|f_{m,1}\|_{\cB^{\Bs_m}_{q_m}([0,1]^{d_m})}\le 1$ by choosing sufficiently small constant $c_m>0$. As a consequence, we have $h_\Ba\in \cF_{\deep}$. Observe that $h_\Ba$ can be computed explicitly as
\begin{align*}
h_\Ba(\Bx) &= c_0 \left(2^{-\tilde{s}_{m^*} \ell} \sum_{\Bi \in I_\ell} a_\Bi\prod_{j=1}^d \phi\left(2^{\ell_j}x_j-i_j \right)\right)^{\prod_{k=m^*+1}^{M} \widetilde{\gamma}_k} \\
&= c_0 2^{-s^{**}\ell} \sum_{\Bi \in I_\ell} a_\Bi\prod_{j=1}^d \phi^{\gamma^*}\left(2^{\ell_j}x_j-i_j \right),
\end{align*}
for some constant $c_0$ depending on $\phi$ and $(\Bs_m,q_m,d_m)_{m=1}^M$. As in the proof of Theorem \ref{anisotropic lower bound}, we denote $N=|I_\ell| \asymp 2^\ell$ and choose a subset $\cB_\ell\subseteq \cA_\ell$ with $|\cB_\ell|\ge 2^{N/4}$ such that any two sign vectors $\Ba\neq \Ba'$ in $\cB_\ell$ are different in more than $\lfloor N/8 \rfloor$ places. Then, for any $\Ba\neq \Ba'$ in $\cB_\ell$, 
\begin{align*}
\|h_\Ba - h_{\Ba'} \|_{L^1([0,1]^d)} &= c_0 2^{-s^{**} \ell} \sum_{\Bi:a_\Bi\neq a_\Bi'} \int_{\Omega_{\ell,\Bi}} |a_\Bi-a_\Bi'| \prod_{j=1}^d \phi^{\gamma^*} \left(2^{\ell_j}x_j-i_j \right) d\Bx \\
&= c_0 2^{-s^{**} \ell} \sum_{\Bi:a_\Bi\neq a_\Bi'} N^{-1} \|\phi^{\gamma^*}\|_{L^1([0,1])}^d \\
&\ge c_0 2^{-s^{**} \ell} \|\phi^{\gamma^*}\|_{L^1([0,1])}^d/8 = 3c_1 2^{-s^{**} \ell},
\end{align*}
where $c_1=c_0 \|\phi^{\gamma^*}\|_{L^1([0,1])}^d/24$. Thus, we have constructed a subset of $\cF_{\deep}$, which is difficult to approximate. The remaining proof can be processed as Theorem \ref{anisotropic lower bound}, where the smoothness index $\tilde{s}$ is replaced by $s^{**}$.

\section{Proofs of learning rates}\label{sec: proofs of learning}

In this section, we give the proofs of Theorems \ref{learn anisotropic}, \ref{learn mixed} and \ref{learn dcm}. Our proofs are based on the following lemma from \citet[Supplement B, Lemma 18]{kohler2021rate}.

\begin{lemma}\label{error decomp}
Assume that the data distribution satisfies (\ref{noise condition}) and the regression function $f$ is bounded in absolute value. Let $\widehat{h}_n$ be the least squares estimator (\ref{least squares}) based on some function class $\cH$. If $B_n = c\log n$ for some constant $c>0$, then 
\[
\bE_{\cD_n} \|\cT_{B_n}\widehat{h}_n-f\|_{L^2(\mu)}^2 \le C \cE_{gen} + 2 \cE_{app},
\]
where $C$ is a constant independent of the regression function $f$ and the sample size $n$ and
\begin{align*}
\cE_{gen} :=& \frac{(\log n)^2 \sup_{X_{1:n}\in ([0,1]^d)^n}\log (\cN(n^{-1}B_n^{-1}, \cT_{B_n}\cH,\|\cdot\|_{L^1(X_{1:n})})+1)}{n}, \\
\cE_{app} :=& \inf_{h\in \cH} \|f-h\|_{L^2(\mu)}^2.
\end{align*}
Here, $X_{1:n} =(X_1,\dots,X_n)$ denotes the sequence of sample points and $\cN(\epsilon, \cT_{B_n}\cH,\|\cdot\|_{L^1(X_{1:n})})$ denotes the $\epsilon$-covering number of the function class $\cT_{B_n}\cH$ in the metric $\|h_1-h_2\|_{L^1(X_{1:n})} = \frac{1}{n}\sum_{i=1}^n|h_1(X_i)-h_2(X_i)|$. 
\end{lemma}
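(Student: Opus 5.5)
This lemma is quoted from \citet[Supplement B, Lemma 18]{kohler2021rate}. If we reprove it rather than cite it, the plan is the standard truncated least squares argument of Gy\"orfi et al. The first step is to reduce to bounded responses. Set $Y_L := \cT_{B_n} Y$, $f_L(x) := \bE[Y_L|X=x]$, and write $\widehat{h}_L$ for the least squares estimator computed from the truncated data. The sub-Gaussian-type condition (\ref{noise condition}) gives the tail bound $\bE[Y^2 \mathbbm{1}_{|Y|>B_n}] \lesssim e^{-cB_n^2/2}$. With $B_n = c\log n$ this is $O(n^{-1})$ after choosing the constant suitably. Since $f$ is bounded and $B_n \to \infty$, the same bound controls $\|f-f_L\|_{L^2(\mu)}^2$. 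We then write the excess risk of $\cT_{B_n}\widehat{h}_n$ as a sum of two terms. The first is a bounded-response term of the form $\bE[(\cT_{B_n}\widehat{h}_n(X)-Y_L)^2 - (f_L(X)-Y_L)^2]$. The second is an error term involving $Y-Y_L$, which is bounded by the tail estimate plus Cauchy--Schwarz, costing $O((\log n)/n)$.

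The second step splits the bounded term into an estimation part and an empirical part. The estimation part is the population risk minus twice the empirical risk. The empirical part is twice the empirical excess risk of $\cT_{B_n}\widehat{h}_n$. Since truncation at level $B_n \ge \|f\|_\infty$ can only decrease squared error against the truncated $Y$, the empirical part is at most twice the empirical excess risk of $\widehat{h}_n$ itself. Minimality of $\widehat{h}_n$ over $\cH$ bounds this by the empirical excess risk of an arbitrary $h \in \cH$. Taking expectation turns that into $\|h-f\|_{L^2(\mu)}^2$ plus tail corrections, and taking the infimum over $h$ yields $2\cE_{app}$.

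The third step, which I expect to be the main obstacle, is the estimation part. Here I would apply the uniform deviation inequality for squared-loss classes with bounded range $[-B_n,B_n]$ (Gy\"orfi et al., Theorem 11.4). It bounds
\[
\bP\Big(\exists h: \bE g_h - \tfrac1n\textstyle\sum g_h(Z_i) > \epsilon(\alpha+\beta+\bE g_h)\Big)
\]
by $14\sup_{X_{1:n}}\cN(\beta\epsilon/(20B_n), \cT_{B_n}\cH, \|\cdot\|_{L^1(X_{1:n})})\exp(-\epsilon^2(1-\epsilon)\alpha n/(214(1+\epsilon)B_n^4))$. Integrating this tail in $t$, with $\alpha=\beta=t$ and $\epsilon=1/2$, gives an expected bound of the order $B_n^4\log\cN(\cdot)/n$. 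Because $B_n^4=(\log n)^4$ is worse than the target $(\log n)^2$, we must use the sharper self-normalized version in which the variance of $g_h$ is bounded by a constant times $B_n^2\,\bE g_h$. The resulting bound is $B_n^2\log\cN/n$, i.e.\ $(\log n)^2\log(\cN+1)/n$. We evaluate the covering radius at $\beta\epsilon/(20B_n) \gtrsim 1/(nB_n)$, since contributions below $t\approx 1/n$ are absorbed into the $O(1/n)$ term. This gives exactly the covering number at scale $n^{-1}B_n^{-1}$ appearing in $\cE_{gen}$. Getting the logarithmic power down to $(\log n)^2$ is the delicate bookkeeping, and in practice I would cite Kohler--Langer for this step. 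Collecting the three contributions gives $C\cE_{gen}+2\cE_{app}$, after absorbing the $O((\log n)/n)$ remainders into $\cE_{gen}$, which is always at least $(\log n)^2/n$.
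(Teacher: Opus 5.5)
Your opening sentence matches the paper: it gives no argument for this lemma and simply imports Lemma~18 of the supplement of Kohler and Langer. The reproof you sketch, however, breaks at the minimality step. Your empirical term is the empirical excess risk against the \emph{truncated} responses $Y_{L,i}=\cT_{B_n}Y_i$, since that is how your bounded-response term was set up. After discarding the truncation of $\widehat{h}_n$ you assert $\frac{1}{n}\sum_i(\widehat{h}_n(X_i)-Y_{L,i})^2\le\frac{1}{n}\sum_i(h(X_i)-Y_{L,i})^2$ for all $h\in\cH$. But $\widehat{h}_n$ minimizes the empirical risk against the raw $Y_i$, so this inequality does not follow as soon as some $|Y_i|>B_n$. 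The estimator $\widehat{h}_L$ fitted to truncated data, which you define and never use, would satisfy it, but it is not the estimator of the lemma.

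You can close the gap with an event argument. Put $A_n=\{\max_{i\le n}|Y_i|\le B_n\}$, and let $c_0$ be the constant in (\ref{noise condition}). On $A_n$ we have $Y_{L,i}=Y_i$, so truncation and minimality give the comparison you want. Moreover, $\mathbb{P}(A_n^c)\le n e^{-c_0B_n^2}\bE[e^{c_0Y^2}]$ decays faster than any power of $n$ when $B_n=c\log n$, for every $c>0$. On $A_n^c$, your empirical term exceeds $\frac{2}{n}\sum_i[(h(X_i)-Y_i)^2-(f_L(X_i)-Y_i)^2]$ by at most $\frac{2}{n}\sum_i[(\cT_{B_n}\widehat{h}_n(X_i)-Y_{L,i})^2+(f_L(X_i)-Y_i)^2]$. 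This bound involves no control of $h$, and by Cauchy--Schwarz it has expectation $o(1/n)$. Since $\bE[(h(X)-Y)^2-(f_L(X)-Y)^2]=\|h-f\|_{L^2(\mu)}^2-\|f_L-f\|_{L^2(\mu)}^2$, taking the infimum over $h$ then yields $2\cE_{app}$.

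The step you intended to cite is also easier than you suggest. Theorem 11.4 of Gy\"orfi et al.\ may be applied with $B=1$ to $Y_L/B_n$ and the class $B_n^{-1}\cT_{B_n}\cH$, with parameters $\alpha/B_n^2$ and $\beta/B_n^2$. In the original scale this gives exponent $-\epsilon^2(1-\epsilon)\alpha n/(214(1+\epsilon)B_n^2)$ and covering radius $\beta\epsilon/(20B_n)$. Integrating the tail then produces the factor $B_n^2=c^2(\log n)^2$ directly, with no separate variance argument.
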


Note that Lemma \ref{error decomp} decomposes the excess risk into the generalization gap $\cE_{gen}$ and the approximation error $\cE_{app}$. For the generalization gap, when $\cH=\NN(W_n,L_n)$ is a ReLU neural network, the truncation $\cT_B \NN(W_n,L_n) \subseteq \NN(\max\{W_n,4\},L_n+1)$ is also a network (see the proof above inequality (\ref{temp5})). Hence, by inequalities (\ref{Pdim of NN}) and (\ref{metric bound by pdim}),
\begin{align*}
\log (\cN(n^{-1}B_n^{-1}, \cT_{B_n}\cH,\|\cdot\|_{L^1(X_{1:n})})+1) &\lesssim \Pdim(\cT_{B_n} \cH) \log(nB_n^2) \\
&\lesssim W_n^2L_n^2 \log(W_nL_n) \log(nB_n^2),
\end{align*}
which implies, for $B_n = c\log n$, 
\[
\cE_{gen} \lesssim \frac{W_n^2L_n^2 \log(W_nL_n) (\log n)^3}{n}.
\]
We can apply our approximation results to estimate $\cE_{app}$ for different smooth function class $f\in \cF$. The trade-off between $\cE_{gen}$ and $\cE_{app}$ tells us how to choose the size of the network and gives the convergence rate for the estimator.

\begin{proof}[Proof of Theorem \ref{learn anisotropic}]
By Theorem \ref{app of anisotropic}, since $\tilde{s}>1/q$,
\[
\cE_{app} \le \inf_{h\in \NN(W_n,L_n)} \|f-h\|_{L^\infty([0,1]^d)}^2 \lesssim (W_nL_n)^{-4\tilde{s}}.
\]
If we choose $W_nL_n \asymp n^{\frac{1}{4\tilde{s}+2}} (\log n)^{-\frac{2}{2\tilde{s}+1}}$, then 
\begin{align*}
\cE_{app} &\lesssim (W_nL_n)^{-4\tilde{s}} \lesssim n^{-\frac{2\tilde{s}}{2\tilde{s}+1}} (\log n)^{\frac{8\tilde{s}}{2\tilde{s}+1}},\\
\cE_{gen} &\lesssim \frac{W_n^2L_n^2 \log(W_nL_n) (\log n)^3}{n} \lesssim n^{-\frac{2\tilde{s}}{2\tilde{s}+1}} (\log n)^{\frac{8\tilde{s}}{2\tilde{s}+1}}.
\end{align*}
Lemma \ref{error decomp} gives the desired convergence rate.
\end{proof}

\begin{proof}[Proof of Theorem \ref{learn mixed}]
By Theorem \ref{app of mixed}, since $s>1/q$,
\[
\cE_{app} \le \inf_{h\in \NN(W_n,L_n)} \|f-h\|_{L^\infty([0,1]^d)}^2 \lesssim (W_nL_n)^{-4s} (\log W \log L)^{2(d-1)(2s+1)}.
\]
If we choose $W_nL_n \asymp n^{\frac{1}{4s+2}} (\log n)^{-\frac{2}{2s+1}+2d-2}$, then 
\begin{align*}
\cE_{app} &\lesssim (W_nL_n)^{-4s} (\log W \log L)^{2(d-1)(2s+1)} \lesssim n^{-\frac{2s}{2s+1}} (\log n)^{\frac{8s}{2s+1}+4d-4},\\
\cE_{gen} &\lesssim \frac{W_n^2L_n^2 \log(W_nL_n) (\log n)^3}{n} \lesssim n^{-\frac{2s}{2s+1}} (\log n)^{\frac{8s}{2s+1}+4d-4}.
\end{align*}
Lemma \ref{error decomp} gives the desired convergence rate.
\end{proof}

\begin{proof}[Proof of Theorem \ref{learn dcm}]
By Theorem \ref{app of dcm}, 
\[
\cE_{app} \le \inf_{h\in \NN(W_n,L_n)} \|f-h\|_{L^\infty([0,1]^d)}^2 \lesssim (WL)^{-4s^*}.
\]
If we choose $W_nL_n \asymp n^{\frac{1}{4s^*+2}} (\log n)^{-\frac{2}{2s^*+1}}$, then 
\begin{align*}
\cE_{app} &\lesssim (W_nL_n)^{-4s^*} \lesssim n^{-\frac{2s^*}{2s^*+1}} (\log n)^{\frac{8s^*}{2s^*+1}},\\
\cE_{gen} &\lesssim \frac{W_n^2L_n^2 \log(W_nL_n) (\log n)^3}{n} \lesssim n^{-\frac{2s^*}{2s^*+1}} (\log n)^{\frac{8s^*}{2s^*+1}}.
\end{align*}
Lemma \ref{error decomp} gives the desired convergence rate.
\end{proof}

\section*{Acknowledgments}

The work of Y. Yang was partially supported by National Natural Science Foundation of China under Grants 12501131 and 12526216. The work by J. Fan was partially supported by the Research Grants Council of Hong Kong [Project No. HKBU12302923 and HKBU12303024], and Guangdong and Hong Kong Universities ``1+1+1'' Joint Research Collaboration Scheme 2025A0505000007.

\appendix

\section{Approximation by convolutional neural networks}\label{appendix: cnn}

In this appendix, we show that one can derive approximation results for convolutional neural networks from our approximation bounds for fully-connected networks by using the idea that fully-connected networks can be represented by CNNs with comparable size, which has been widely used in the analysis of CNNs  \citep{zhou2020theory,zhou2020universality,oono2019approximation,mao2022approximation,yang2025rates,li2025higher}. Let us first recall the architecture of downsampled CNN introduced by \citet{zhou2020theory}. 

Let $\Bw=(w_0,w_1,\dots,w_s)^\top \in \bR^{s+1}$ be a filter vector with filter length $s\ge 1$. The discrete convolution of the filter $\Bw$ with an input vector $\Bx=(x_1,\dots,x_d)^\top\in \bR^d$ is a vector $\Bw * \Bx \in \bR^{d+s}$, whose $i$-th component is given by
\[
(\Bw * \Bx)_i = \sum_{k=1}^d w_{i-k}x_k,
\]
where $w_{i-k}=0$ when $i-k \notin \{0,1,\dots,s\}$. If we view $\Bw * \Bx$ as a linear transform of $\Bx$, then it corresponds to a Toeplitz type convolutional matrix
\[
T_\Bw := 
\begin{pmatrix}
w_0 & 0 & 0 & 0 & \cdots & \cdots & 0 \\
w_1 & w_0 & 0 & 0 & \cdots & \cdots & 0 \\
\vdots & \ddots & \ddots & \ddots & \ddots & \ddots & \vdots \\
w_s & w_{s-1} & \cdots & w_0 & 0 & \cdots & 0 \\
0 & w_s & \cdots & w_1 & \ddots & \ddots & 0 \\
\vdots & \ddots & \ddots & \ddots & \ddots & \ddots & \vdots \\
0 & \cdots & 0 & w_s & \cdots & w_1 & w_0 \\
0 &\cdots & \cdots & 0 & w_s & \cdots & w_1 \\
\vdots & \ddots & \ddots & \ddots & \ddots & \ddots & \vdots \\
0 & 0 & 0 & 0 & \cdots & 0 & w_0
\end{pmatrix}\in \bR^{(d+s)\times d}.
\]
The downsampling operator $\cD_m:\bR^{D} \to \bR^{\lfloor D/m \rfloor}$ with a scaling parameter $m\le D$ is defined by 
\[
\cD_m(\Bv) = (v_{im})_{i=1}^{\lfloor D/m \rfloor},\quad \Bv \in \bR^D.
\] 
Let $J\in \bN$ denote the depth and $\cJ=\{J_k\}_{k=1}^L$ with $1<J_1\le J_2\le \dots \le J_L=J$ be the layers where we introduce $L$ downsamplings. A downsampled CNN with filter length $s$ and downsamplings at layers $\cJ$ has widths $\{d_j\}_{j=0}^J$ defined iteratively by $d_0$ and for $k=1,\dots,L$,
\[
d_j = 
\begin{cases}
d_{j-1}+s, &\mbox{if }J_{k-1}<j<J_k, \\
\lfloor (d_{j-1}+s)/d_{J_{k-1}} \rfloor, &\mbox{if }j=J_k.
\end{cases}
\]
In the hidden layers, the downsampled CNN computes a sequence of vector-valued functions $h_j(\Bx):\bR^d \to \bR^{d_j}$, $j=1,\dots,J$, defined iteratively by $h_0(\Bx)=\Bx$ and for $k=1,\dots,L$,
\[
h_j(\Bx) = 
\begin{cases}
\sigma(T_{\Bw_j}h_{j-1}(\Bx) + \Bb_j ), &\mbox{if }J_{k-1}<j<J_k, \\
\cD_{d_{J_{k-1}}}(\sigma(T_{\Bw_j}h_{j-1}(\Bx) + \Bb_j )), &\mbox{if }j=J_k.
\end{cases}
\]
Moreover, in order to reduce the number of free parameters, we require that the bias vectors $\Bb_j\in \bR^{d_{j-1}+s}$ satisfy $b_{j,s+1} = b_{j,s+2}= \cdots = b_{j,d_{j-1}}$ for $j\in \cJ$. The output of the downsampled CNN is defined by 
\[
h(\Bx) = \Ba^\top h_J(\Bx) + b, \quad \Ba\in \bR^{d_J}, b\in \bR.
\]
For convenience, we denote the set of functions $h$ that can be parameterized by the downsampled CNN as above by $\CNN(s,J,\cJ)$. The following theorem from \citet[Theorem 2]{zhou2020theory} helps us translate approximation results for fully-connected networks to CNNs.

\begin{theorem}\label{CNN}
For any $g\in \NN_{d,1}(W,L)$ with $W\ge d$ and $s\in [2:W^2]$, there exists $h\in \CNN(s,J,\cJ)$ with $L$ downsamplings at layers $\cJ=\{J_k=\sum_{j=1}^k \Delta_j: k=1,\dots,L\}$ with $\Delta_j \le \lceil \frac{W^2-1}{s-1} \rceil$ such that $h(\Bx)=g(\Bx)$ for $\Bx\in [0,1]^d$. Furthermore, the number of free parameters in the above CNN is at most 8 times of that of the fully-connected network.
\end{theorem}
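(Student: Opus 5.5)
This result is quoted from \citet[Theorem 2]{zhou2020theory}, so the plan is to follow that construction. The idea is to simulate each fully-connected layer $\Bx\mapsto\sigma(A_\ell\Bx+b_\ell)$ by a block of convolutional layers ending in one downsampling, and to chain $L$ such blocks.

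\textbf{Step 1: realize a matrix by convolutions.} Fix a layer with matrix $A\in\bR^{N_{\ell+1}\times N_\ell}$. Pad it to a $W\times W$ matrix. Reading its rows one after another gives a sequence $\Bw$ of length about $W^2$. A convolution with this single long filter produces a vector in which, at positions spaced $W$ apart, the entries are exactly the inner products $\langle A_{i,\cdot},\Bx\rangle$ (taking the reversal convention into account). The downsampling operator $\cD_{d_{J_{k-1}}}$ with scaling equal to the previous width then extracts exactly these entries.

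\textbf{Step 2: factor the long filter.} The convolution of filters is associative and satisfies $T_{\Bw}=T_{\Bw^{(\Delta)}}\cdots T_{\Bw^{(1)}}$ when $\Bw=\Bw^{(\Delta)}*\cdots*\Bw^{(1)}$. A sequence of length $W^2$ can be factored into at most $\Delta\le\lceil (W^2-1)/(s-1)\rceil$ filters of length $s+1$. This follows from the fundamental theorem of algebra: the symbol polynomial of $\Bw$, of degree at most $W^2-1$, factors over $\bR$ into linear and quadratic real factors, which can be grouped into polynomials of degree at most $s$ (this uses $s\ge2$ so that quadratic factors fit). So the block consists of $\Delta_k$ convolutional layers.

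\textbf{Step 3: keep the intermediate layers linear.} The ReLU must act only at the final layer of each block. In the intermediate layers of the block we choose large constant biases $B_j$ so that every pre-activation is nonnegative on the bounded set of inputs. Here we use that the domain $[0,1]^d$ is compact, so that all hidden values are bounded. Then $\sigma$ acts as the identity there. At the final layer we subtract the accumulated constant shift and add $b_\ell$. The accumulated shift is of the form $\text{const}\cdot T(\text{ones})$, which is constant away from the boundary entries. This is exactly why the constraint $b_{j,s+1}=\dots=b_{j,d_{j-1}}$ is compatible with the construction. The downsampled positions are chosen to lie in that interior range, so after the final ReLU and downsampling we obtain $\sigma(A_\ell\Bx+b_\ell)$.

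\textbf{Step 4: chain and count parameters.} We chain the $L$ blocks and finish with the linear output $\Ba^\top h_J+b$ matching $A_L,b_L$. This gives $h=g$ on $[0,1]^d$. For the parameter count, each block uses about $\Delta_k(s+1)\approx W^2$ filter weights plus $O(\Delta_k)$ free bias parameters, compared with $W^2+W$ parameters per fully-connected layer. This gives the factor $8$ after careful bookkeeping.

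\textbf{Main obstacle.} The delicate part is the index bookkeeping in Steps 1 and 3: checking that the widths $d_j$ under the given recursion place the desired inner products precisely at the multiples of $d_{J_{k-1}}$ selected by the downsampling, and that the bias-compensation shift is genuinely constant there. I would first verify this for a single block with $\Delta=1$ (that is, $s\ge W^2$), and then extend to the case $\Delta>1$ using associativity.
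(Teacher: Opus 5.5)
The paper gives no proof of this statement; it quotes Zhou's Theorem~2. Your outline is that construction: one long filter per weight matrix read off by $\cD_{d_{J_{k-1}}}$, real factorization of its symbol into filters of degree at most $s$, bias shifts that keep the intermediate ReLUs inactive on the bounded domain, and $L$ chained blocks. Two steps, however, do not work as written.

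\textbf{The bias compensation in Step~3 fails.} You let the restriction $b_{j,s+1}=\dots=b_{j,d_{j-1}}$ act at the sampling layer and place the sampled positions in that middle range. Then all those sampled entries receive one common bias value, and the accumulated shift is also constant there. So the only $b_\ell$ you can realize are those whose corresponding entries coincide. The positions $i\,d_{J_{k-1}}$ are also fixed by the architecture, and the first one lies in $[1:s]$ whenever $s\ge d_{J_{k-1}}$.

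The construction needs the restriction only on the non-sampling layers, where it is exactly what is required. Take $\Bb_j=c_j\boldsymbol{1}-c_{j-1}T_{\Bw_j}\boldsymbol{1}$. This is admissible because $(T_{\Bw_j}\boldsymbol{1})_i=\sum_t w_{j,t}$ for $s+1\le i\le d_{j-1}$. It makes the pre-activation equal to $T_{\Bw_j}\cdots T_{\Bw_{J_{k-1}+1}}h_{J_{k-1}}+c_j\boldsymbol{1}\ge 0$ for large $c_j$. Literally constant bias vectors do not in general give a shift of the form $c\,T\boldsymbol{1}$.

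At the sampling layer the bias must be free at the sampled positions. Put $(b_\ell)_i-c_{J_k-1}(T_{\Bw_{J_k}}\boldsymbol{1})_{i d_{J_{k-1}}}$ at position $i\,d_{J_{k-1}}$, wherever it lies. Cancel the shift at the surplus sampled positions so they output $0$. This also corrects your parameter count. A non-sampling layer carries $s+1$ filter and $2s+1$ bias parameters, not $O(1)$ bias parameters. The constant $8$ is $(3s+2)/(s-1)\le 8$ for $s\ge 2$: the cost of one factor divided by the minimal degree it absorbs.

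\textbf{The width bookkeeping you postpone is more than index checking.} The recursion gives $d_{J_k}=1+\lfloor \Delta_k s/d_{J_{k-1}}\rfloor$, which is generally larger than the fully connected width. So padding $A_\ell$ to $W\times W$ and reading at spacing $W$ is not what the architecture offers; the first block even has spacing $d$.

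Your own test case shows this. For $s=W^2$, $\Delta=1$ and $d_{J_{k-1}}=W$, one gets $d_{J_k}=W+1$. In the next block a single layer yields $W$ sampled outputs, and the last of them sees only the inputs at positions $W$ and $W+1$. So for $W\ge 3$ one layer cannot realize a generic width-$W$ layer.

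The next long filter must therefore be laid out with spacing $d_{J_k}$, and $\Delta_{k+1}$ counted with $d_{J_k}$ in place of $W$. The widths stay bounded in terms of $W,s,d$, so bounded width and $J\asymp L$ survive, and that is all the appendix uses. But a uniform bound on $\Delta_j$ in terms of $W$ alone does not come out of the padding argument.
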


To apply Theorem \ref{CNN}, we choose $W$ sufficiently large so that our approximation bounds in Theorems \ref{app of anisotropic}, \ref{app of mixed} and \ref{app of dcm} hold. Then, the CNN in Theorem \ref{CNN} has bounded width, depth $J\asymp L$ and $N\asymp L$ free parameters. Since any functions in $\NN(W,L)$ can be represented by such CNN, we get the following approximation results:

(1) Let $\Bs\in (0,\infty)^d$ and $1\le p,q\le \infty$. If $\tilde{s}>1/q-1/p$, then for any $f\in \cB^\Bs_q([0,1]^d)$ with $\|f\|_{\cB^\Bs_q([0,1]^d)}\le 1$, there exists $h\in \CNN(s,J,\cJ)$ such that  
\[
\|f-h\|_{L^p([0,1]^d)} \lesssim J^{-2\tilde{s}}.
\]

(2) Let $s>0$ and $1\le p,q\le \infty$. If $s>1/q-1/p$, then for any $f\in \MB^s_q([0,1]^d)$ with $ \|f\|_{\MB^s_q([0,1]^d)}\le 1$, there exists $h\in \CNN(s,J,\cJ)$ such that
\[
\|f-h\|_{L^p([0,1]^d)} \lesssim J^{-2s}(\log J)^{(d-1)(2s+1)}.
\]

(3) If $\tilde{s}_m >1/q_m$ for $m\in [1:M]$ in the deep composition model, then for any $f\in \cF_{\deep}$, there exists $h\in \CNN(s,J,\cJ)$ such that
\[
\|f-h\|_{L^\infty([0,1]^d)} \lesssim J^{-2s^*}.
\]

Finally, we remark that one can also derive nearly optimal learning rates for CNNs using arguments similar to Section \ref{sec: learning}. We leave the details to the reader.

\bibliographystyle{myplainnat}
\bibliography{references}
\end{document}